\newtheorem{theorem}{Theorem}
\newcommand{\E}{{\mathbb{E}}}
\newcommand{\R}{{\mathbb{R}}}
\newcommand{\V}{{\mathbb{V}}}
\newtheorem{assumption}{Assumption}
\newtheorem{lemma}{Lemma}
\newtheorem{remark}{Remark}
\newtheorem{corollary}{Corollary}
\newcommand{\cC}{\mathcal{C}}
\newcommand{\cD}{\mathcal{D}}
\newcommand{\cF}{\mathcal{F}}
\newcommand{\cO}{\mathcal{O}}
\newcommand{\cN}{\mathcal{N}}
\newcommand{\sign}{\emph{sign}}
\newcommand{\bI}{\mathbb{I}}
\newcommand{\bN}{\mathbb{N}}
\newcommand{\smallCE}[1]{$<$1e-07}
\begin{document}

\title{Single-Timescale Multi-Sequence Stochastic Approximation Without Fixed Point Smoothness: Theories and Applications}

\author{
Yue~Huang,
Zhaoxian~Wu,
Shiqian~Ma,
and~Qing~Ling
\thanks{Yue Huang, Zhaoxian Wu, and Qing Ling are with the School of Computer Science and Engineering, Sun Yat-Sen
University.}
\thanks{Shiqian Ma is with the Department of Computational Applied Math and Operations Research, Rice University.}
}

\markboth{}%
{Shell \MakeLowercase{\textit{et al.}}: A Sample Article Using IEEEtran.cls for IEEE Journals}

\maketitle

\begin{abstract}
Stochastic approximation (SA) that involves multiple coupled sequences, known as multiple-sequence SA (MSSA), finds diverse applications in the fields of signal processing and machine learning. However, existing theoretical understandings {of} MSSA are limited: the multi-timescale analysis implies a slow convergence rate, whereas the single-timescale analysis relies on a stringent fixed point smoothness assumption. This paper establishes tighter single-timescale analysis for MSSA, without assuming smoothness of the fixed points. Our theoretical findings reveal that, when all involved operators are strongly monotone, MSSA converges at a rate of $\Tilde{\mathcal{O}}(K^{-1})$, where $K$ denotes the total number of iterations. In addition, when all involved operators are strongly monotone except for the main one, MSSA converges at a rate of $\mathcal{O}(K^{-\frac{1}{2}})$. These theoretical findings align with those established for single-sequence SA. Applying these theoretical findings to bilevel optimization and communication-efficient distributed learning offers relaxed assumptions and/or simpler algorithms with performance guarantees, as validated by numerical experiments.
\end{abstract}

\begin{IEEEkeywords}
Stochastic approximation, convergence analysis, bilevel optimization, distributed learning.
\end{IEEEkeywords}

\section{Introduction}
\label{sec:intro}

\IEEEPARstart{S}{tochastic} approximation (SA) aims at finding a zero of one or more operators, for which we can only observe noisy estimates. Its simplest form, called as single-sequence SA (SSSA), seeks a point $x^\ast$ satisfying $v(x^\ast)=0$ with $v:$ $\R^{d_0} \to\R^{d_0}$ being an operator, through
\begin{gather}\label{SSSA}
    x^{k+1}=x^k-\alpha^k(v(x^k)+\xi^k).
\end{gather}
Here $\alpha^k$ is a positive step size and $\xi^k$ is random noise.
Due to its broad applications in statistics, optimization, machine learning, and signal processing \cite{borkar2009stochastic,hastie2009elements,bottou2018optimization,sutton2018reinforcement,dieuleveut2023stochastic}, SA has gained widespread popularity since the 1950s \cite{robbins1951stochastic}. However, a number of emerging applications, including but not limited to bilevel optimization, meta learning, and reinforcement learning \cite{shen2022single,yang2019multilevel,sato2021gradient,sun2021optimal,chen2021solving}, involve the more complicated multi-sequence SA (MSSA) that aims at finding a zero of a system with two or more coupled operators. MSSA also appears when we combine the popular momentum acceleration techniques with SA updates \cite{deb2021n}, for example, in communication-compressed and momentum-accelerated distributed learning.

Consider $N+1$ operators of $v:\R^{d_0}\times\R^{d_1}\cdots\times\R^{d_N}\to\R^{d_0}$ and $h_n:\R^{d_0}\times\R^{d_1}\cdots\times\R^{d_n}\to\R^{d_n}$ for all $n\in[N]$. Our task is to find $x^\ast \in \R^{d_0}$, $y_1^\ast \in \R^{d_1}$, $\dots$, $y_N^\ast \in \R^{d_N}$ such that
\begin{subequations}\label{solution}
    \begin{align}
        &h_n(x^\ast,y_1^\ast,\dots,y_n^\ast)=0,~~~\forall n\in[N],\\
        &v(x^\ast,y_1^\ast,\dots,y_N^\ast)=0.
    \end{align}
\end{subequations}
To do so, the MSSA updates are given by
\begin{subequations}\label{MSSA}
    \begin{align}
        &y_n^{k+1}=y_n^k-\beta_{n}^k(h_n(x^k,y_{1}^k,\dots,y_n^k)+\psi_n^k),~\forall n\in[N],\\
        &x^{k+1}=x^{k}-\alpha^{k}(v(x^k,y_1^k,\dots,y_N^k)+\xi^k),
    \end{align}
\end{subequations}
where $\alpha^k$ and $\beta_{n}^k$ are two positive step sizes, while $\xi^k$ and $\psi_n^k$ are random noise. In this paper, $v$ is referred to as the main operator, while $h_n$ for all $n\in[N]$ are the secondary operators. Similarly, $\{x^k\}_k$ is referred to as the main sequence, while $\{y_n^k\}_k$ for all $n\in[N]$ are the secondary sequences.

\textbf{Related Works.}
Asymptotic convergence of SSSA has been established in the seminal work of \cite{robbins1951stochastic}. One of the popular analytical tools is to show that the SSSA iterates asymptotically converge to the solution of an associate ordinary differential equation (ODE) \cite{ljung1977analysis,harold1997stochastic}. Finite-time convergence of SSSA has also been studied in recent years \cite{moulines2011non,bhandari2018finite,chen2020finite}, showing that under the strong monotonicity assumption of the operator, SSSA converges at a rate of $\cO(K^{-1})$ where $K$ is the total number of iterations. Without the strong monotonicity assumption, the rate becomes $\cO(K^{-\frac{1}{2}})$.

For MSSA with $N=1$ that is also termed as two-sequence SA (TSSA), \cite{kaledin2020finite} establishes a convergence rate of $\cO(K^{-1})$ when the operators are linear. The analysis of nonlinear TSSA involves two scenarios: two-timescale with $\lim_{k\to\infty}{\alpha^k}/{\beta_1^k}$ $=0$, as well as single-timescale with $\beta_1^k=\Theta(\alpha^k)$. For the two-timescale scenario, \cite{mokkadem2006convergence} and \cite{doan2022nonlinear} establish a convergence rate of $\mathcal{O}(K^{-\frac{2}{3}})$, which is slower than the $\mathcal{O}(K^{-1})$ rates of SSSA and linear TSSA. For the single-timescale scenario, \cite{shen2022single} establishes a convergence rate of $\mathcal{O}(K^{-1})$. Nevertheless, \cite{shen2022single} requires a smoothness assumption on the fixed points, which is uncommon in applications such as bilevel optimization. The recent work of \cite{doan2024fast} applies the momentum acceleration technique to both sequences in TSSA and achieves a convergence rate of $\mathcal{O}(K^{-1})$ without assuming the smoothness of the fixed points. All of these results are established assuming the strong monotonicity of the operators.

% and under an assumption of strong monotonicity

For general MSSA with $N>1$, \cite{deb2021n} analyzes its asymptotic convergence based on the ODE approach. There are only few works to analyze its finite-time convergence.
The work of \cite{yang2019multilevel} considers a stochastic gradient method to solve a multi-level stochastic composite optimization problem, which is a special case of MSSA. The multi-timescale scenario is investigated, namely, $\lim_{k\to\infty}{\alpha^k}/{\beta_1^k}=0$ and $\lim_{k\to\infty}{\beta^k_{n}}/{\beta^k_{n+1}}=0$ for all $n\in[N-1]$, leading to a suboptimal convergence rate of $\cO(K^{-4/(4+N)})$ that depends on $N$. For the single-timescale scenario with $\beta_n^k=\Theta(\alpha^k)$ for all $n\in[N]$, \cite{shen2022single} establishes a convergence rate of $\cO(K^{-1})$. However, this convergence guarantee requires the stringent smoothness assumption on the fixed points, in addition to the strong monotonicity assumption on the operators. When the main operator is not strongly monotone, the convergence rate is $\cO(K^{-\frac{1}{2}})$.

\begin{table}[tb!]
    \centering
    \caption{Comparison with the existing works. The third column marks whether the guarantee is for MSSA or TSSA. The fourth column marks whether the analysis requires the smoothness assumption on the fixed points.
    }
    \label{tab:related_work}
    \begin{tabular}{c|c|c|c}
        \hline\hline
         & Convergence Rate & $N>1$ & \makecell[c]{Smoothness of Fixed Points} \\
        \hline
        \makecell[c]{\cite{doan2024fast}} & \makecell[c]{$\cO(K^{-1})$} & \makecell[c]\XSolidBrush & \XSolidBrush\\
        \hline
        \makecell[c]{\cite{deb2021n}} & \makecell[c]{Asymtotic covergence} & \makecell[c]\Checkmark & \XSolidBrush\\
        \hline
        \makecell[c]{\cite{yang2019multilevel}} & \makecell[c]{$\cO(K^{-4/(4+N)})$} & \makecell[c]{\Checkmark} &  \XSolidBrush \\
        \hline
        \cite{shen2022single} & $\cO(K^{-1})$ & \Checkmark & \Checkmark\\
        \hline
        Ours & $\tilde\cO(K^{-1})$ & \Checkmark & \XSolidBrush\\
        \hline\hline
    \end{tabular}
\end{table}

Considering the fact that the existing multi-timescale analysis of MSSA implies a slow convergence rate, whereas the single-timescale analysis requires the smoothness assumption on the fixed points, we ask:
\textit{Is it possible to establish a fast convergence rate for single-timescale MSSA without assuming smoothness of the fixed points?} We give an affirmative answer to this question in this paper. We compare our results with those of the existing works in Table \ref{tab:related_work}.

\textbf{Our contributions.} Our contributions are as follows.

\noindent \textbf{C1)} We establish tighter single-timescale analysis for MSSA, without assuming smoothness of the fixed points. Our theoretical findings indicate that, when all involved operators are strongly monotone, MSSA can converge at a rate of $\Tilde{\cO}(K^{-1})$. When all involved operators are strongly monotone except for the main one, MSSA converges at a rate of $\cO(K^{-\frac{1}{2}})$. These theoretical findings align with those established for SSSA.

\noindent \textbf{C2)} We apply our tighter single-timescale analysis for MSSA in a variety of applications. First, applying it to the popular bilevel optimization algorithm SOBA \cite{dagreou2022framework}, we can remove the high-order Lipschitz continuity assumption that corresponds to the smoothness assumption on the fixed points in MSSA, but obtain the same convergence rate. Second, we apply our analysis to communication-compressed and momentum-accelerated distributed learning, providing a novel perspective {on} understanding a class of communication-efficient distributed learning algorithms.

\noindent \textbf{C3)} We also conduct numerical experiments on a data hyper-cleaning task that can be formulated as a bilevel optimization problem, and a communication-efficient distributed learning task with communication compression and momentum acceleration, validating our theoretical findings.

% \todo{We can add a table to compare our analysis with others}

Compared to the short, preliminary conference version \cite{huang2024convergence}, this paper provides convergence analysis for MSSA under a milder assumption on random noise, which enlarges the class of applications of MSSA. This improvement allows us to apply our theoretical findings to the new problem of communication-compressed and {momentum-accelerated} distributed learning; see section \ref{sec:compress}. We also conduct numerical experiments {to} verify our theoretical findings. All proofs are provided in the appendices of this paper.

\noindent \textbf{Notation.} Given functions $f, g:\mathcal{X} \to [0,\infty)$, where $\mathcal{X}\subset\R$, we say $f = \mathcal{O}(g)$ if there exists constants $c<\infty$ and $M$ $>0$ such that $f(x) \leq c  g(x)$ for all $x\ge M$; $f = \Omega(g)$ if there exists a constant $c > 0$ and $M>0$ such that $f(x)$ $\geq c g(x)$ for all $x\ge M$; $f=\Theta(g)$ if $f=\mathcal{O}(g)$ and $f=\Omega(g)$ hold simultaneously. We denote $f=\cO(g\max\{1,\log g\})$ as $f=\Tilde\cO(g)$. We use $\|\cdot\|$ to denote the $\ell_2$-norm. We define $\cF^k:=\sigma\left(\bigcup_{l=0}^k\{x^k,\,y_1^k,\dots,y_N^k\}\right)$, in which $\sigma(\mathcal{X})$ is the $\sigma$-algebra generated by $\mathcal{X}$. We use $y_{1:n}$ to collect $y_{1}, \dots, y_n$, $[N]$ to collect the integers from $1$ to $N$, and so as $[K]$.

\section{Finite-Time Convergence of MSSA}
\label{sec:main}
We start from several standard assumptions.
\begin{assumption}[Strong monotonicity of any $h_n$]\label{asp-sub-sm}
    For any $n\in[N]$, given $x$ and $y_{1:n-1}$, there exists a positive constant $\mu_n$ such that for any $\hat{y}_n$ and $\bar{y}_n$, it holds that
    \begin{align*}
        &\langle h_n(x,y_{1:n-1},\hat{y}_n)\!-\!h_n(x,y_{1:n-1},\bar{y}_n),\hat{y}_n\!-\!\bar{y}_n\rangle\!\geq\!\mu_n\|\hat{y}_n\!-\!\bar{y}_n\|^2.
    \end{align*}
\end{assumption}

\begin{assumption}[Lipschitz continuity of any $h_n$]\label{asp-lc-sub}
    For any $n\in[N]$, $h_n(\cdot)$ satisfies one of the following conditions:\\
    $(a)$ $h_n(x,y_{1:n})$ is $\ell_n$-Lipschitz continuous w.r.t. any variable among $x$ and $y_{1:n}$.\\
    $(b)$ $h_n(x,y_{1:n})=A_n(x,y_{1:n-1})y_n+b_n(x,y_{1:n-1})$, in which $A_n:$ $\R^{d_0\times d_1\times\cdots\times d_{n-1}}\to\R^{d_n\times d_n}$ is $\ell_{A,n}$-Lipschitz continuous w.r.t. any variable among $x$ and $y_{1:n-1}$, and $\ell_{A,n}^\prime$-bounded in terms of the spectral norm;
    $b_n:\R^{d_0\times d_1\times\cdots\times d_{n-1}}\to\R^{d_n}$ is $\ell_{b,n}$-Lipschitz continuous w.r.t. any variable among $x$ and $y_{1:n-1}$, and $\ell_{b,n}^\prime$-bounded in term of $\ell_2$-norm.
\end{assumption}

% \todo{Explain condition (b)? }
Condition $(a)$ in Assumption \ref{asp-lc-sub}
requires Lipschitz continuity of $h_n(x,y_{1:n})$ w.r.t. any variable among $x$ and $y_{1:n}$, which is mild and widely adopted in existing works \cite{shen2022single,doan2022nonlinear,doan2024fast}. However, this condition might not hold in some applications, for example, bilevel optimization; see Section \ref{sec:BO}. Therefore, we relax the assumption and add Condition $(b)$ to enlarge the investigated function class.
When $y_n$ is bounded, Condition $(b)$ implies $(a)$ trivially.

\begin{lemma}[Existence, uniqueness and Lipschitz continuity of fixed points]\label{lma:Lip-fp}
    Suppose Assumptions \ref{asp-sub-sm} and \ref{asp-lc-sub} hold. For any $n$, given any $x,y_{1:n-1}$, there exists a unique fixed point $y_n^\ast(x,y_{1:n-1})$ satisfying
    \begin{align}
        h_n(x,y_{1:n-1},y_n^\ast(x,y_{1:n-1}))=0.
    \end{align}
    In addition, $y_n^\ast(x,y_{1:n-1})$ is $L_{y,n}$-Lipschitz continuous w.r.t. any variable among $x$ and $y_{1:n-1}$, where $L_{y,n}=\frac{\ell_n}{\mu_n}$ for Condition (a) in Assumption \ref{asp-lc-sub}, and $L_{y,n}=\frac{\ell_{b,n}}{\mu_n}+\frac{\ell_{b,n}^\prime\ell_{A,n}}{\mu_n^2}$ for Condition (b) in Assumption \ref{asp-lc-sub}.
\end{lemma}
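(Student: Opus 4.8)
The plan is to treat the two claims --- existence/uniqueness and Lipschitz continuity --- separately, and within each to handle Conditions $(a)$ and $(b)$ of Assumption \ref{asp-lc-sub} in parallel.

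For existence and uniqueness, I would fix $x$ and $y_{1:n-1}$ and regard $g(y_n):=h_n(x,y_{1:n-1},y_n)$ as a map $\R^{d_n}\to\R^{d_n}$. Assumption \ref{asp-sub-sm} makes $g$ strongly monotone with modulus $\mu_n$, and $g$ is Lipschitz in $y_n$ with some constant $L$ (equal to $\ell_n$ under Condition $(a)$, and to the spectral bound $\ell_{A,n}^\prime$ under Condition $(b)$, where $g(y_n)=A_ny_n+b_n$ is affine). I would then show the auxiliary map $T(y_n):=y_n-\gamma g(y_n)$ is a contraction for any $\gamma\in(0,2\mu_n/L^2)$: expanding $\|T(\hat y_n)-T(\bar y_n)\|^2$ and applying strong monotonicity to the cross term and the Lipschitz bound to the quadratic term yields a contraction factor $1-2\gamma\mu_n+\gamma^2L^2<1$. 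Banach's fixed-point theorem then supplies a unique fixed point of $T$, which is precisely the unique zero $y_n^\ast(x,y_{1:n-1})$ of $g$. Under Condition $(b)$ one may instead note directly that strong monotonicity forces $\langle A_nu,u\rangle\geq\mu_n\|u\|^2$ for all $u$, hence $\|A_nu\|\geq\mu_n\|u\|$, so $A_n$ is nonsingular with $\|A_n^{-1}\|\leq 1/\mu_n$, and set $y_n^\ast=-A_n^{-1}b_n$.

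For Lipschitz continuity it suffices to perturb one input coordinate at a time. Writing $z:=(x,y_{1:n-1})$ and letting $y:=y_n^\ast(z)$, $y^\prime:=y_n^\ast(z^\prime)$ with $z,z^\prime$ differing in a single coordinate, I use $h_n(z,y)=h_n(z^\prime,y^\prime)=0$. Under Condition $(a)$, I take the inner product of $0=h_n(z,y)-h_n(z^\prime,y^\prime)$ with $y-y^\prime$ and split it through the intermediate point $h_n(z,y^\prime)$: the term $\langle h_n(z,y)-h_n(z,y^\prime),y-y^\prime\rangle$ is lower-bounded by $\mu_n\|y-y^\prime\|^2$ via strong monotonicity, while the cross term $\langle h_n(z,y^\prime)-h_n(z^\prime,y^\prime),y-y^\prime\rangle$ is controlled from below by $-\ell_n\|z-z^\prime\|\,\|y-y^\prime\|$ via Cauchy--Schwarz and the coordinate-wise Lipschitz bound. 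Rearranging gives $\|y-y^\prime\|\leq(\ell_n/\mu_n)\|z-z^\prime\|$, i.e. $L_{y,n}=\ell_n/\mu_n$.

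Under Condition $(b)$ I exploit the affine structure. Subtracting the defining equations $A_n(z)y+b_n(z)=0$ and $A_n(z^\prime)y^\prime+b_n(z^\prime)=0$ and inserting $A_n(z)y^\prime$ yields $A_n(z)(y-y^\prime)=-(A_n(z)-A_n(z^\prime))y^\prime-(b_n(z)-b_n(z^\prime))$. Multiplying by $A_n(z)^{-1}$ and using $\|A_n(z)^{-1}\|\leq 1/\mu_n$ together with the Lipschitz bounds $\|A_n(z)-A_n(z^\prime)\|\leq\ell_{A,n}\|z-z^\prime\|$ and $\|b_n(z)-b_n(z^\prime)\|\leq\ell_{b,n}\|z-z^\prime\|$, I obtain $\|y-y^\prime\|\leq\frac{1}{\mu_n}(\ell_{A,n}\|z-z^\prime\|\,\|y^\prime\|+\ell_{b,n}\|z-z^\prime\|)$. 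The step that genuinely needs the extra boundedness hypotheses --- and which I expect to be the main subtlety --- is the uniform bound on the fixed-point magnitude: from $y^\prime=-A_n(z^\prime)^{-1}b_n(z^\prime)$ with $\|A_n(z^\prime)^{-1}\|\leq 1/\mu_n$ and the $\ell_{b,n}^\prime$-boundedness of $b_n$, I get $\|y^\prime\|\leq\ell_{b,n}^\prime/\mu_n$. Substituting this in produces exactly $L_{y,n}=\frac{\ell_{b,n}}{\mu_n}+\frac{\ell_{b,n}^\prime\ell_{A,n}}{\mu_n^2}$, completing the claim.
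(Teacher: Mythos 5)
Your proposal is correct and follows essentially the same route as the paper: a Banach contraction argument (step map $y\mapsto y-\gamma h_n$) for existence and uniqueness, the strong-monotonicity-versus-Lipschitz inner-product bound for Condition (a), and invertibility of $A_n$ with $\|A_n^{-1}\|\le 1/\mu_n$ plus the Lipschitz/boundedness hypotheses on $A_n,b_n$ for Condition (b). The only cosmetic differences are that under (a) the paper splits the inner product using both fixed-point equations (producing a factor of $2$ on each side that cancels) where you insert the single intermediate point $h_n(z,y')$, and under (b) the paper bounds $\|A_n^{-1}(z)-A_n^{-1}(z')\|$ via a resolvent-type identity where you instead bound the fixed-point magnitude $\|y'\|\le \ell_{b,n}'/\mu_n$; both variants give exactly the constants $L_{y,n}=\ell_n/\mu_n$ and $L_{y,n}=\ell_{b,n}/\mu_n+\ell_{b,n}'\ell_{A,n}/\mu_n^2$ claimed in the lemma.
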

\begin{remark}\label{rmk-lip-fp}
    As shown in Lemma \ref{lma:Lip-fp}, Assumptions \ref{asp-sub-sm} and \ref{asp-lc-sub} imply the existence and uniqueness of the fixed point of the operator $h_n(x,y_{1:n-1},\cdot)$, denoted by $y_n^\ast(x,y_{1:n-1})$, for any given $x$ and $y_{1:n-1}$. Letting $y^\diamond_1(x) = y^\ast_1(x)$, we define
    \begin{align}\label{eq:y-diamond}
        y^\diamond_n(x)&:=y^\ast_n(x,\,y^\diamond_1(x),\dots,\,y^\diamond_{n-1}(x)),\quad \forall n\in[N].
    \end{align}
    Finding the fixed points satisfying \eqref{solution} is equivalent to finding $x^\ast$ and $y^\diamond_n(x^\ast)$ for all $n \in[N]$ such that $v(x^\ast,y^\diamond_{1:N}(x^\ast))=0$, where $y^\diamond_{1:N}(x)$ collects $y^\diamond_{1}(x),\cdots,y^\diamond_N(x)$.

    Lemma \ref{lma:Lip-fp} also shows that Assumptions \ref{asp-sub-sm} and \ref{asp-lc-sub} ensure Lipschitz continuity of the fixed points $y^\ast_n(\cdot)$. Note that \cite{shen2022single} further assumes smoothness of the fixed points, namely, $y_n^\ast(\cdot)$ are differentiable and $\nabla y_n^\ast(\cdot)$ are Lipschitz continuous, in addition to Lipschitz continuity of the fixed points.
\end{remark}

\begin{assumption}[Lipschitz continuity of $v$]\label{asp-lc-main} There exists a positive constant $\ell_0$ such that for any $\hat x$, $\bar x$ and $y_{1:N}$, it holds that
    \begin{align}
        &\|v(\hat x,y^\diamond_{1:N}(\hat x))-v(\bar x,y_{1:N})\|\notag\\
        \le\ &\ell_0(\|\hat x-\bar x\|+\sum_{n=1}^N\|y_n-y_n^\diamond(\hat x)\|).\notag
    \end{align}
\end{assumption}

\begin{remark}\label{rmk-Lv}
    Assumption \ref{asp-lc-main} is weaker than assuming Lipschitz continuity of $v(x,y_{1:N})$ w.r.t. any variable among $x$ and $y_{1:N}$. Together with Assumptions \ref{asp-sub-sm} and \ref{asp-lc-sub}, it implies $v(x,y_{1:N}^\diamond(x))$ is $L_v$-Lipschitz continuous w.r.t. variable $x$, see Appendix \ref{sec-apx-aux} for the formal statement.
\end{remark}

\begin{assumption}[Martingale difference sequences of noise]\label{asp-noise}
    $\{\xi^k\}_k$ and $\{\psi^k_n\}_k$ for any $n\in[N]$ are martingale difference sequences, whose variances are bounded as follows:
    \begin{subequations}\label{eq-bound-var}
        \begin{align}
            \operatorname{Var}[\psi_n^k|\cF^k]\leq&\ \omega_0\|h_n(x^k,y_{1:n}^k)\|^2+\sigma^2,\quad\forall n\in[N]\\
            \operatorname{Var}[\xi^k|\cF^k]\leq&\ \omega_1\|v(x^k,y_{1:N}^k)\|^2+\sigma^2\notag\\
            &+\omega_2\sum_{n=1}^N\|h_n(x^k,y_{1:n}^k)\|^2,
        \end{align}
    \end{subequations}
    where $\omega_0,\,\omega_1,\,\omega_2$ and $\sigma$ are positive constants.
\end{assumption}

Assumptions \ref{asp-sub-sm}--\ref{asp-lc-main} are common in the literature \cite{shen2022single,doan2022nonlinear,doan2024fast}. Note that with Assumption \ref{asp-noise}, any two different elements in $\{\xi^k\}_k$ or $\{\psi^k_n\}_k$ for all $n\in[N]$ are uncorrelated. Besides, the requirement of \eqref{eq-bound-var} is weaker than the uniformly bounded variance assumption made in \cite{shen2022single,doan2022nonlinear,doan2024fast} (that is, \eqref{eq-bound-var} with {$\omega_0,\omega_1,\omega_2=0$}).

\subsection{Challenges in Analyzing MSSA}

To show the challenges in analyzing MSSA, we consider the simplest TSSA case of $N=1$. We expect that the pairs $(x^K,y_1^K)$ generated by \eqref{MSSA} converge to $(x^\ast,y_1^\ast(x^\ast))$ satisfying \eqref{solution} at a rate of $\cO(\frac{1}{K})$. Nevertheless, this is nontrivial because: (i) When $(x^k,y_1^k)$ is close to $(x^\ast,y_1^\ast(x^\ast))$ for any $k\in[K]$, $v(x^k,y_1^k)$ is small such that the random noise $\xi^k$ dominates the increment of $x^k$. This inevitably causes the oscillations of $x^k$ if we choose a sufficiently large step size $\alpha^k$ that enables fast convergence. The same argument also applies to $y_1^k$. (ii) Since the sequences $\{x^k\}_k$ and $\{y_1^k\}_k$ are coupled, the oscillations of $x^k$ enlarge the oscillations of $y_1^k$, and vice versa.

One remedy is to introduce an auxiliary pair $(x^k,y_1^\ast(x^k))$ to bridge $(x^k,y_1^k)$ and $(x^\ast,y_1^\ast(x^\ast))$. Then we need to show: (i) $y^k$ can track the time-varying fixed point $y_1^\ast(x^k)$ for any $k\in[K]$; (ii) The tracking error vanishes at a sufficiently fast rate so that $y^K$ can converge to $y^\ast(x^K)$ at a rate of $\cO(\frac{1}{K})$. If $y^\ast(x^k)$ varies abruptly, such a tracking task becomes impossible. To ensure that the fixed point variation $y^\ast(x^{k+1})-y^\ast(x^{k})$ is sufficiently small, \cite{mokkadem2006convergence,doan2022nonlinear} use a very small step size $\alpha^k=o(\beta_1^k)$, which eventually leads to a rate slower than $\cO(\frac{1}{K})$. An alternative is to assume smoothness of the fixed point $y^\ast(\cdot)$, which allows a properly large step size with $\alpha^k=\Theta(\beta_1^k)$ \cite{shen2022single}. However, this assumption does not necessarily hold in many applications such as bilevel optimization.

Our work directly characterizes the distance between {$y_1^K$ and $y_1^\ast(x^K)$}, as demonstrated in Figure \ref{fig:analysis} and detailed in Section \ref{subsec:analysis-y}. Therefore, we no longer require the fixed point variation $y_1^\ast(x^k)-y_1^\ast(x^{k-1})$ to be sufficiently small, which enables us to provide a single-timescale analysis without assuming smoothness of the fixed points.

\begin{figure}
    \centering
    \includegraphics[width=0.99\linewidth]{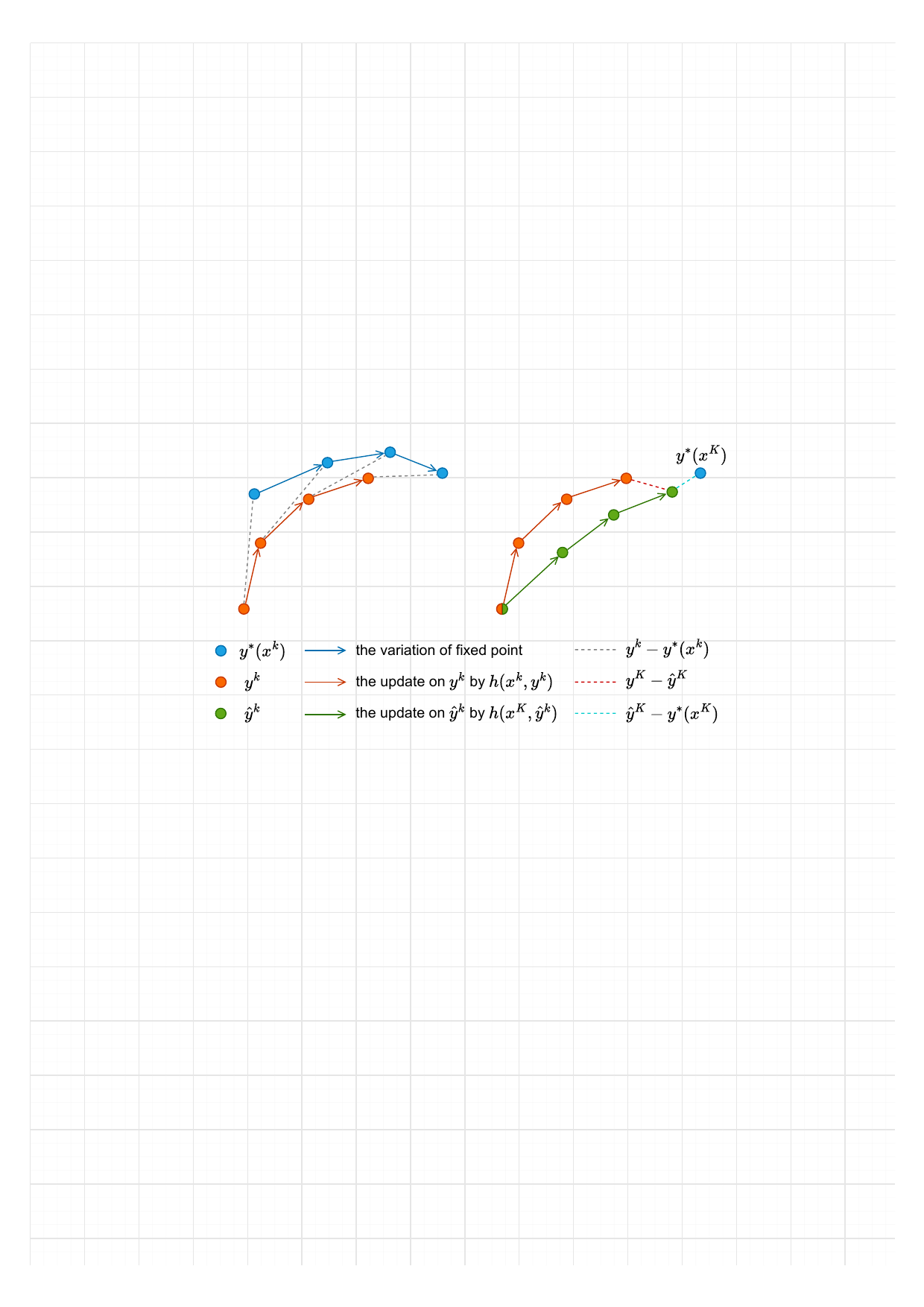}
    \caption{Analytic ideas are illustrated for two sequences with 3 iterations ($N=1,\, K=3$). Previous works \cite{doan2022nonlinear,shen2022single} traces the distance between $y^k$ and time-varying fixed point $y^\ast(x^k)$ (left). Our work characterizes the convergence error $y^K-y^\ast(x^K)$ based on an auxiliary sequence \eqref{eq:SSSA} (right). Here we use $\hat y^k$ to denote the sequence generated by \eqref{eq:SSSA}.}
    \label{fig:analysis}
\end{figure}

\subsection{Analysis for Secondary Sequences}\label{subsec:analysis-y}
At time $K$, the goal of MSSA $h_n(x^K,y_{1:n}^K)=0$ implies that $y^K=y^\ast_n(x^K,y_{1:n-1}^K)$. Therefore, we can use $\|y^K-y^\ast_n(x^K,y_{1:n-1}^K)\|^2$ to measure the convergence of secondary sequence. We will present an upper bound of this convergence metric in this section.

For any $n\in[N]$, given any fixed $x^K,\, y_1^K\dots,\, y_{n-1}^K$, it is well known that single sequence SA
\begin{align}\label{eq:SSSA}
    y_n^{k}=y_n^{k-1}-\beta_n^k(h_n(x^K,y_{1:n-1}^K,y_n^k)+\psi_n^k)
\end{align}
can converge to the fixed point $y^\ast_n(x^K,y_{1:n-1}^K)$ under the assumptions we make \cite{moulines2011non}. The main difference between the sequences $\{y_n^k\}_{k=0}^K$ generated by MSSA \eqref{MSSA} and SSSA \eqref{eq:SSSA} is that at time $k<K$, $y_n^k$ in MSSA is updated by $h_n(x^k,y^k_{1:n})$ instead of $h_n(x^K,y^K_{1:n-1},y^k_n)$.
The bias from $h_n(x^k,y^k_{1:n})$ will affect the convergence of $y_n^k$ to $y^\ast_n(x^K,y_{1:n-1}^K)$. However, thanks to the Lipschitz continuity of $h_n(\cdot)$, this bias can be bounded by the distance between $(x^k,\,y_{1:n-1}^k)$ and $(x^K,\,y_{1:n-1}^K)$. Therefore, the distance between $y^K_n$ and $y^{\ast}_n(x^K,y_{1:n-1}^K)$ can be bounded as follows.
\begin{lemma}[Convergence of secondary sequences]\label{lma:conv-sub}
    Suppose Assumptions \ref{asp-sub-sm}--\ref{asp-noise} hold. Consider the MSSA sequences generated by \eqref{MSSA}. If the step sizes are chosen as $\beta_n^k=\beta_n\le\frac{\mu_n}{\ell_n^2}$ for any $n\in[N]$ and any $K$, then it hold that
    \begin{align}
        &\E\|y^{K}_n-y^{\ast}_n(x^K,y_{1:n-1}^K)\|^2\notag\\
        \le&2\ell_{n}^2\beta_n(\beta_n+\frac{2}{\mu_n})\sum_{k=0}^{K-1}(1-\frac{\mu_n}{2}\beta_n)^{K-k-1}\text{Dist}_n(K,k)\notag\\
        &+2(1-\frac{\mu_n}{2}\beta_n)^K\E\|y_n^0-y^{\ast}_n(x^K,y_{1:n-1}^K)\|^2\notag\\
        &+\sum_{k=0}^{K-1}2(1-\mu_n\beta_n)^{K-1-k}\beta^2_n\operatorname{Var}[\psi_n^k],\label{eq:lma-conv-sub1}
    \end{align}
    where for any $n\in[N]$ and any $K,k\in\bN$, $\text{Dist}_n(K,k)$ is defined as
    \begin{align}
        \text{Dist}_n(K,k):=\E\|x^K-x^k\|^2+\sum_{i=1}^{n-1}\E\|y_i^K-y_i^{k}\|^2.\label{eq:lma-conv-sub2}
    \end{align}
\end{lemma}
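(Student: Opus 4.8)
The plan is to prove \eqref{eq:lma-conv-sub1} through the coupled auxiliary single-sequence SA sketched around \eqref{eq:SSSA}. Fix $n$ and $K$, abbreviate $y_n^\ast:=y_n^\ast(x^K,y_{1:n-1}^K)$, and let $\{\hat y_n^k\}_{k=0}^K$ be the SSSA generated by \eqref{eq:SSSA} with the \emph{frozen} operator $h_n(x^K,y_{1:n-1}^K,\cdot)$, driven by the \emph{same} noise $\{\psi_n^k\}$ and started from $\hat y_n^0=y_n^0$. First I would split $\E\|y_n^K-y_n^\ast\|^2\le 2\E\|y_n^K-\hat y_n^K\|^2+2\E\|\hat y_n^K-y_n^\ast\|^2$ by Young's inequality; this already explains the leading factor $2$ on every term of \eqref{eq:lma-conv-sub1}, and it reduces the proof to controlling (i) the gap between the true and auxiliary iterates, and (ii) the convergence of the auxiliary SSSA to its fixed point.

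For piece (i) I would study $d^k:=y_n^k-\hat y_n^k$. Subtracting the two recursions, the shared noise $\psi_n^k$ cancels identically, so $d^{k+1}=d^k-\beta_n\big(h_n(x^k,y_{1:n}^k)-h_n(x^K,y_{1:n-1}^K,\hat y_n^k)\big)$ carries no stochastic forcing. I would split the bracket into a contraction part $h_n(x^K,y_{1:n-1}^K,y_n^k)-h_n(x^K,y_{1:n-1}^K,\hat y_n^k)$ and a bias part $h_n(x^k,y_{1:n-1}^k,y_n^k)-h_n(x^K,y_{1:n-1}^K,y_n^k)$. Strong monotonicity (Assumption \ref{asp-sub-sm}) makes the contraction part satisfy $\langle\cdot,d^k\rangle\ge\mu_n\|d^k\|^2$ with norm at most $\ell_n\|d^k\|$, while Lipschitz continuity (Assumption \ref{asp-lc-sub}) bounds the bias part by $\ell_n(\|x^k-x^K\|+\sum_{i=1}^{n-1}\|y_i^k-y_i^K\|)$, whose square is $\text{Dist}_n(K,k)$ up to a constant. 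Expanding $\|d^{k+1}\|^2$, using $\beta_n\le\mu_n/\ell_n^2$, and peeling off the bias by Young's inequality (which costs $\tfrac{2}{\mu_n}\beta_n$ from the cross term and $\beta_n^2$ from the quadratic term, degrading the rate to $1-\tfrac{\mu_n}{2}\beta_n$) yields $\E\|d^{k+1}\|^2\le(1-\tfrac{\mu_n}{2}\beta_n)\E\|d^k\|^2+\ell_n^2\beta_n(\beta_n+\tfrac{2}{\mu_n})\,\text{Dist}_n(K,k)$. Since $d^0=0$, unrolling reproduces exactly the first (Dist) term of \eqref{eq:lma-conv-sub1} after multiplying by the factor $2$.

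For piece (ii), $\E\|\hat y_n^K-y_n^\ast\|^2$, I would apply the standard SSSA contraction for the frozen operator, treating the realized data $(x^K,y_{1:n-1}^K)$ as fixed so that $\{\psi_n^k\}$ enters the standalone recursion \eqref{eq:SSSA} as an honest martingale difference. Writing $\hat y_n^{k+1}-y_n^\ast=(\hat y_n^k-y_n^\ast)-\beta_n\big(h_n(x^K,y_{1:n-1}^K,\hat y_n^k)-h_n(x^K,y_{1:n-1}^K,y_n^\ast)\big)-\beta_n\psi_n^k$, monotonicity together with $\|h_n(\cdot)\|\le\ell_n\|\hat y_n^k-y_n^\ast\|$ gives $1-2\mu_n\beta_n+\ell_n^2\beta_n^2\le 1-\mu_n\beta_n$ under $\beta_n\le\mu_n/\ell_n^2$, and the martingale-difference property kills the noise cross-term and leaves $\beta_n^2\operatorname{Var}[\psi_n^k]$. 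Unrolling produces the noise term at the sharp rate $(1-\mu_n\beta_n)$ and an initialization term $(1-\mu_n\beta_n)^K\E\|y_n^0-y_n^\ast\|^2$, which I would then relax via $(1-\mu_n\beta_n)^K\le(1-\tfrac{\mu_n}{2}\beta_n)^K$ so that it shares the contraction rate of the Dist term; collecting the three contributions gives \eqref{eq:lma-conv-sub1}.

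The hard part, and the whole reason the auxiliary construction is introduced, is that the target $y_n^\ast(x^K,y_{1:n-1}^K)$ is $\cF^K$-measurable but \emph{not} $\cF^k$-measurable for $k<K$, since the terminal coupling variables $x^K,y_{1:n-1}^K$ themselves depend on the secondary noise $\{\psi_n^k\}$ through the coupling in \eqref{MSSA}. A naive forward recursion comparing $y_n^k$ to this non-adapted target would break the martingale identity $\E[\langle\psi_n^k,\cdot\rangle]=0$. Coupling to $\hat y_n^k$ sidesteps this: in piece (i) the noise cancels before any expectation is taken, so no adaptedness is needed, while in piece (ii) the operator is frozen and the standard SSSA martingale argument applies. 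A secondary technical point is the bias bound under Condition $(b)$ of Assumption \ref{asp-lc-sub}, where $[A_n(x^k,y_{1:n-1}^k)-A_n(x^K,y_{1:n-1}^K)]y_n^k$ is linear in $y_n^k$; there I would combine the spectral-norm and $\ell_2$ boundedness of $A_n,b_n$ with $\|y_n^k\|\le\|y_n^k-y_n^\ast\|+\|y_n^\ast\|$ to recover a bias bound of the stated Lipschitz form, consistent with the uniform constant $\ell_n$ in \eqref{eq:lma-conv-sub1}.
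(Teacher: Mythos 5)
Your decomposition is the mirror image of the paper's, and the mirroring is exactly where it breaks. The paper also couples $y_n^k$ to an auxiliary sequence, but its auxiliary sequence is the \emph{noise-free} recursion $\hat y_n^{k+1}=\hat y_n^k-\beta_n h_n(x^k,y^k_{1:n-1},\hat y_n^k)$ driven by the \emph{adapted, time-varying} arguments $(x^k,y^k_{1:n-1})$: the gap $y_n^k-\hat y_n^k$ then carries the noise (the martingale cross term vanishes because every quantity multiplying $\psi_n^{k-1}$ is $\cF^{k-1}$-measurable), while the gap $\hat y_n^k-y_n^\ast(x^K,y^K_{1:n-1})$ carries the $\text{Dist}$ bias through a purely pathwise Lipschitz/monotonicity argument that never touches the noise. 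You instead freeze the operator at $(x^K,y^K_{1:n-1})$ and keep the noise inside the auxiliary recursion, so your piece (ii) must run the ``standard SSSA martingale argument'' for a recursion whose drift, whose iterates $\hat y_n^k$, and whose target $y_n^\ast(x^K,y^K_{1:n-1})$ are all $\cF^K$-measurable but not $\cF^k$-measurable. ``Treating the realized data $(x^K,y_{1:n-1}^K)$ as fixed'' is precisely the conditioning fallacy: conditionally on these terminal values, $\{\psi_n^k\}_{k<K}$ is no longer a martingale difference sequence, because through the coupling in \eqref{MSSA} the noise $\psi_n^k$ influences $y_n^{k+1}$ and hence $x^{k+2},\dots,x^K$ and $y^K_{1:n-1}$; consequently the cross term $\E\langle\psi_n^k,\hat y_n^k-y_n^\ast-\beta_n(\cdots)\rangle$ does not vanish. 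Nor can it be rescued by Cauchy--Schwarz: that yields an $O(\beta_n)$ per-step error instead of $O(\beta_n^2)$, which after unrolling leaves a noise floor of order $\sigma^2/\mu_n^2$ rather than the needed $O(\beta_n\sigma^2/\mu_n)$, destroying the rates in Theorems \ref{thm-MSSA-sm} and \ref{thm-MSSA-nsm}. In short, you correctly identified non-adaptedness as the central difficulty, but your construction relocates it into piece (ii) rather than eliminating it; of the two ways to assign ``noise'' and ``frozen target'' to the two pieces, only the paper's assignment is sound.

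There is a second, smaller gap under Condition (b) of Assumption \ref{asp-lc-sub}. Your bias term in piece (i) is $[A_n(x^k,y^k_{1:n-1})-A_n(x^K,y^K_{1:n-1})]\,y_n^k$, evaluated at the \emph{actual noisy iterate} $y_n^k$, and your proposed bound $\|y_n^k\|\le\|y_n^k-y_n^\ast\|+\|y_n^\ast\|$ is circular: $\|y_n^k-y_n^\ast\|$ is essentially the quantity the lemma is bounding and admits no a priori uniform constant, so no uniform $\ell_n$ emerges (one would instead face products of two error terms, requiring fourth moments or boundedness you do not have). The paper's decomposition dodges this as well: its bias is evaluated at the noise-free auxiliary iterate $\hat y_n^i$, which with the initialization $y_n^0=0$ is \emph{deterministically} bounded by $\ell_{b,n}^\prime/\mu_n$ via a contraction-plus-bounded-forcing induction, and that is what justifies the uniform constant $\ell_n=\ell_{A,n}\ell_{b,n}^\prime/\mu_n+\ell_{b,n}$ appearing in \eqref{eq:lma-conv-sub1}.
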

Lemma \ref{lma:conv-sub} shows the convergence of $y_n^K$ to $y^{\ast}_n(x^K,y_{1:n-1}^K)$. The first term at the right-hand side of \eqref{eq:lma-conv-sub1} represents the effect of the bias from $h_n(x^k,y^k_{1:n})$ to $h_n(x^K,y^K_{1:n-1},y^k_n)$ as we discussed above. The second term represents the initial bias, the distance between the initial point $y_n^0$ and the convergence target $y_n^\ast(x^K,y_{1:n-1}^K)$.
The last term of \eqref{eq:lma-conv-sub1} stands for the effect of random noises $\psi_n^k$.

\subsection{Analysis for Strongly Monotone Main Operator}\label{sec-sm}
Now we analyze the scenario where the main operator is strongly monotone.
\begin{assumption}[Strong monotonicity of $v$]\label{asp:main-sm}
    There exists a unique fixed point $x^\ast$ such that $v(x^\ast,y^\diamond_{1:N}(x^\ast))=0$. Further, there exists a positive constant $\mu_0$ s.t. for any $x$, it holds that
    \begin{gather}
        \langle v(x,y^\diamond_{1:N}(x)),\,x-x^\ast\rangle\geq\mu_0\|x-x^\ast\|^2. \notag
    \end{gather}
\end{assumption}
Assumption \ref{asp:main-sm} is common in the literature \cite{shen2022single,mokkadem2006convergence,doan2022nonlinear,doan2024fast}. Compared to Assumption \ref{asp-sub-sm} in which the strong monotonicity is defined on two arbitrary points, Assumption \ref{asp:main-sm} is defined on an arbitrary point $x$ and the fixed point $x^\ast$ that is unique.

As we have discussed in Remark \ref{rmk-lip-fp}, the goal of MSSA is equivalent to finding $x^\ast$ and $y^\diamond_n(x^\ast)$ for all $n \in[N]$ such that $v(x^\ast,y^\diamond_{1:N}(x^\ast))=0$. Consider the following SSSA
\begin{align}\label{eq-sm-SSSA}
    x^{k+1}=x^k-\alpha^k(v(x^k,y^\diamond_{1:N}(x^k))+\xi^k).
\end{align}
The convergence property, saying that $\{x^k\}_k$ generated by \eqref{eq-sm-SSSA} converges to $x^\ast$ satisfying $v(x^\ast,y^\diamond_{1:N}(x^\ast))=0$, has been well studied under the strong monotonicity of $v$ \cite{moulines2011non}. The only difference between the main sequence in MSSA \eqref{MSSA} and \eqref{eq-sm-SSSA} is that MSSA updates $x^k$ by $v(x^k,y_{1:N}^k)$ instead of $v(x^k,y^\diamond_{1:N}(x^k))$. Therefore, when following the analysis on \eqref{eq-sm-SSSA} to analyze the convergence of the main sequence in MSSA, the only additional requirement is to characterize the bias from $v(x^k,y_{1:N}^k)$ to $v(x^k,y^\diamond_{1:N}(x^k))$, which can be controlled by the Lipschitz continuity of $v$ and $y^\diamond_{1:N}$, and intuitively, is related to the convergence of the secondary sequences. Inspired by these facts, we prove the convergence of the main sequence as follows.

\begin{lemma}\label{lma-sm-main}
    Consider the MSSA sequences generated by \eqref{MSSA}. Suppose that Assumptions \ref{asp-sub-sm}--\ref{asp:main-sm} hold. For any $K$, if the step size is chosen to satisfy $\alpha^K\le\frac{\mu_0}{2L_v^2}$, it holds that
    \begin{align}
        &\E\|x^{K+1}-x^\ast\|^2\notag\\
        \leq&(1-{\mu_0\alpha^K})\E\|x^K-x^\ast\|^2+(\alpha^K)^2\operatorname{Var}[\xi^K]\notag\\
        +&\alpha^K(\alpha^K+\frac{2}{\mu_0})\sum_{n=1}^N\ell_{v,n}^2\E\|y_n^K-y^{\ast}_n(x^K,y_{1:n-1}^K)\|^2,\label{eq-lma-sm}
    \end{align}
    where $L_v$ is a constant also found in Remark \ref{rmk-Lv}, $\ell_{v,n}$ is a constant for any $n\in[N]$.
\end{lemma}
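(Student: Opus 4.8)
The plan is to turn the update for $x^K$ into a contraction in $\E\|x^K-x^\ast\|^2$, where the strong monotonicity of $v$ supplies the contraction and the mismatch between the running secondary iterates $y_{1:N}^K$ and their fixed points is absorbed as a bias that is later tied to the convergence metric of Lemma \ref{lma:conv-sub}.

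First I would expand, using \eqref{MSSA},
\[
x^{K+1}-x^\ast=(x^K-x^\ast)-\alpha^K v(x^K,y_{1:N}^K)-\alpha^K\xi^K,
\]
take $\E[\cdot\mid\cF^K]$, and note that $x^K,y_{1:N}^K$ are $\cF^K$-measurable while $\{\xi^k\}_k$ is a martingale difference sequence (Assumption \ref{asp-noise}). Hence the cross term with $\xi^K$ vanishes and $\E[\|\xi^K\|^2\mid\cF^K]=\operatorname{Var}[\xi^K\mid\cF^K]$, which produces the term $(\alpha^K)^2\operatorname{Var}[\xi^K]$ after taking total expectation. It then remains to control the $\cF^K$-measurable quantity $\|(x^K-x^\ast)-\alpha^K v(x^K,y_{1:N}^K)\|^2$.

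To do so I would compare against the bias-free direction $v(x^K,y^\diamond_{1:N}(x^K))$ and set $b^K:=v(x^K,y_{1:N}^K)-v(x^K,y^\diamond_{1:N}(x^K))$, so that $(x^K-x^\ast)-\alpha^K v(x^K,y_{1:N}^K)=w^K-\alpha^K b^K$ with $w^K:=(x^K-x^\ast)-\alpha^K v(x^K,y^\diamond_{1:N}(x^K))$. For $w^K$, Assumption \ref{asp:main-sm} gives $\langle v(x^K,y^\diamond_{1:N}(x^K)),x^K-x^\ast\rangle\ge\mu_0\|x^K-x^\ast\|^2$, and, since $v(x^\ast,y^\diamond_{1:N}(x^\ast))=0$, the $L_v$-Lipschitz continuity of $x\mapsto v(x,y^\diamond_{1:N}(x))$ from Remark \ref{rmk-Lv} gives $\|v(x^K,y^\diamond_{1:N}(x^K))\|\le L_v\|x^K-x^\ast\|$; together they yield $\|w^K\|^2\le(1-2\mu_0\alpha^K+(\alpha^K)^2L_v^2)\|x^K-x^\ast\|^2$. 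Expanding $\|w^K-\alpha^K b^K\|^2$ and bounding the cross term via $\|w^K\|\le\|x^K-x^\ast\|$ and Young's inequality splits it into a $\tfrac{\mu_0}{2}\alpha^K\|x^K-x^\ast\|^2$ piece and a $\tfrac{2}{\mu_0}\alpha^K\|b^K\|^2$ piece, while the remaining term contributes $(\alpha^K)^2\|b^K\|^2$. Using $\alpha^K\le\frac{\mu_0}{2L_v^2}$ to bound $(\alpha^K)^2L_v^2\le\tfrac{\mu_0}{2}\alpha^K$ then collapses the coefficient of $\|x^K-x^\ast\|^2$ to exactly $1-\mu_0\alpha^K$, and the bias terms combine into $\alpha^K(\alpha^K+\frac{2}{\mu_0})\|b^K\|^2$.

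The final and most delicate step is to express $\|b^K\|^2$ through the convergence metric $\|y_n^K-y^\ast_n(x^K,y_{1:n-1}^K)\|^2$ appearing in the statement. Assumption \ref{asp-lc-main} bounds $\|b^K\|\le\ell_0\sum_{n=1}^N\|y_n^K-y^\diamond_n(x^K)\|$, but this error is measured against $y^\diamond_n(x^K)$ rather than against $y^\ast_n(x^K,y_{1:n-1}^K)$. I would bridge the two via the telescoping bound
\[
\|y_n^K-y^\diamond_n(x^K)\|\le\|y_n^K-y^\ast_n(x^K,y_{1:n-1}^K)\|+L_{y,n}\sum_{i=1}^{n-1}\|y_i^K-y^\diamond_i(x^K)\|,
\]
which follows from the definition \eqref{eq:y-diamond} and the Lipschitz continuity of the fixed points (Lemma \ref{lma:Lip-fp}). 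Solving this linear recursion in $n$ writes each $\|y_n^K-y^\diamond_n(x^K)\|$ as a weighted sum of $\|y_i^K-y^\ast_i(x^K,y_{1:i-1}^K)\|$ over $i\le n$; applying Cauchy--Schwarz and absorbing $\ell_0$, $N$, and the products of the $L_{y,i}$ into the constants $\ell_{v,n}$ gives $\|b^K\|^2\le\sum_{n=1}^N\ell_{v,n}^2\|y_n^K-y^\ast_n(x^K,y_{1:n-1}^K)\|^2$. Taking total expectation then yields \eqref{eq-lma-sm}. I expect this recursion-unrolling step, which must correctly track the nested coupling of the secondary fixed points and pin down $\ell_{v,n}$, to be the main obstacle; the one-step expansion and step-size bookkeeping are otherwise routine.
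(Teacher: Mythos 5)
Your proposal is correct and follows essentially the same route as the paper's proof: the same decomposition of the update into the bias-free direction $v(x^K,y^\diamond_{1:N}(x^K))$ plus the bias $b^K$, the same use of strong monotonicity and $L_v$-Lipschitz continuity with Young's inequality and the step-size condition to obtain the $(1-\mu_0\alpha^K)$ contraction, and the same martingale-difference argument for the noise term. The recursion-unrolling step you flag as the main obstacle is precisely the paper's auxiliary Lemma \ref{lma-aux-1} in Appendix \ref{sec-apx-aux}, which the paper proves with exactly the telescoping bound you write down (with the factor of $N$ from squaring absorbed into the constants $\ell_{v,n}$, as you note).
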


At the right-hand side of \eqref{eq-lma-sm}, the first term indicates a contraction property of $\|x^K-x^\ast\|^2$, while the second term is the error caused by random noise $\xi^K$. The last term is due to the bias from $v(x^K,y_{1:N}^K)$ to $v(x^K,y^\diamond_{1:N}(x^K))$, and can be bounded as in Lemma \ref{lma:conv-sub}.
Based on Lemmas \ref{lma:conv-sub} and \ref{lma-sm-main}, we finally obtain the convergence result of MSSA as follows.
% As shown in Lemma \ref{lma:conv-sub} and \ref{lma-sm-main}, the convergence of sequences in MSSA are dependent on each other and have a complicated time-coupling. We design a new Lyapunov function to tackle this complicated coupling see the details in Appendix \ref{apx-thm-sm}. And we present the convergence result of MSSA as follows.

\begin{theorem}\label{thm-MSSA-sm}
    Consider the MSSA sequences generated by \eqref{MSSA} for $k\in[K]$. Suppose that Assumptions \ref{asp-sub-sm}--\ref{asp:main-sm} hold. If the step sizes are chosen as $\alpha^k=\Theta\left(\frac{\log K}{K}\right)$ and $\beta^k_n=\Theta\left(\frac{\log K}{K}\right)$, then it holds that
    \begin{gather*}
        \E\|x^{K}-x^\ast\|^2+\sum_{n=1}^N\E\|y_n^K-y_n^\ast(x^K,y_{1:n-1}^K)\|^2=\Tilde{\cO}\left(\frac{1}{K}\right),
    \end{gather*}
    which implies that
    \begin{gather}\label{eq-thm-sm}
        \E\|x^{K}-x^\ast\|^2+\sum_{n=1}^N\E\|y_n^K-y_n^\diamond(x^\ast)\|^2=\Tilde{\cO}\left(\frac{1}{K}\right).
    \end{gather}
    % where $\Tilde{\cO}(\cdot)$ hides problem dependent constants of a polynomial of $N$.
\end{theorem}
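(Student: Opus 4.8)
The plan is to combine the two one-sided estimates already in hand---Lemma \ref{lma:conv-sub} for the secondary errors and Lemma \ref{lma-sm-main} for the main error---into a single coupled system and then close it by strong induction on $K$. I would track the combined error $E^K := \E\|x^K-x^\ast\|^2 + \sum_{n=1}^N \E\|y_n^K - y_n^\ast(x^K,y_{1:n-1}^K)\|^2$ and aim to show $E^K = \Tilde{\cO}(1/K)$ under the stated schedule. The subtlety is that Lemma \ref{lma:conv-sub} is not a one-step recursion: it expresses the secondary error at the terminal time $K$ as a history functional of the displacements $\text{Dist}_n(K,k)$, while Lemma \ref{lma-sm-main} is a genuine one-step recursion for the main error in terms of the current secondary errors. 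So the first task is to convert the displacement functionals into quantities controllable by the earlier-time errors.

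First I would bound the displacements $\text{Dist}_n(K,k)=\E\|x^K-x^k\|^2+\sum_{i=1}^{n-1}\E\|y_i^K-y_i^k\|^2$. Writing each displacement as a telescoping sum of increments, e.g. $x^K-x^k=-\sum_{j=k}^{K-1}\alpha^j(v(x^j,y_{1:N}^j)+\xi^j)$, I would split into a deterministic drift and a martingale-noise part. The drift is handled by Cauchy--Schwarz, bounding $\|v(x^j,y_{1:N}^j)\|^2$ and $\|h_i(x^j,y_{1:i}^j)\|^2$ via Assumptions \ref{asp-lc-sub} and \ref{asp-lc-main} by the instantaneous errors (recall $v$ and $h_i$ vanish at the relevant fixed points); the noise part collapses to $\sum_{j=k}^{K-1}(\alpha^j)^2\operatorname{Var}[\xi^j|\cF^j]$ because cross terms vanish for martingale differences (Assumption \ref{asp-noise}). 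This bounds $\text{Dist}_n(K,k)$ by a drift term of order $\big(\sum_{j=k}^{K-1}\alpha^j\big)\sum_{j=k}^{K-1}\alpha^j(\text{errors at }j)$ plus a noise term of order $\sum_{j=k}^{K-1}(\alpha^j)^2\sigma^2$, with analogous contributions for the $y_i$ displacements; crucially, every summand involves errors only at times $j<K$, so it is accessible to the induction.

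Next I would substitute these displacement bounds into Lemma \ref{lma:conv-sub} and combine with Lemma \ref{lma-sm-main}. With $\alpha=\Theta(\log K/K)$ and $\beta_n=\Theta(\log K/K)$, the geometric factors $(1-\tfrac{\mu_n}{2}\beta_n)^{K-k-1}$ decay at rate $\Theta(\log K/K)$, so the averaging window has effective length $\Theta(K/\log K)$. The bookkeeping then has three pieces. The initial-bias terms carry $(1-\tfrac{\mu_n}{2}\beta_n)^K=\cO(e^{-\Theta(\log K)})=\cO(K^{-c})$ with $c$ large and are negligible; the pure-noise terms $\sum_k(1-\mu_n\beta_n)^{K-1-k}\beta_n^2\sigma^2\asymp \beta_n\sigma^2=\Theta(\log K/K)$ give the dominant $\Tilde{\cO}(1/K)$ contribution (this is the source of the extra logarithm); and the displacement-driven terms, after weighting by $2\ell_n^2\beta_n(\beta_n+2/\mu_n)=\Theta(\log K/K)$ and summing the series $\sum_m m\,r^{m-1}\asymp(\mu_n\beta_n)^{-2}=\Theta((K/\log K)^2)$, also reduce to $\cO(\log K/K)$ times the peak error over the window. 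Feeding these into the main recursion and running a strong induction with hypothesis $E^k\le C\log k/k$, I would choose $C$ and the step-size constants so that the self-referential coefficient multiplying the peak error stays strictly below one, which closes the bound $E^K=\Tilde{\cO}(1/K)$.

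The main obstacle is exactly this circularity together with the two competing scalings it exposes: a single displacement $\E\|x^K-x^k\|^2$ can be of order one when $K-k\sim K/\log K$, so naive bounds are too weak, and one must show that contraction (which needs $\beta_n K\gtrsim\log K$) and noise accumulation (which scales like $\beta_n$) balance precisely so the weighted history sums remain $\cO(\log K/K)$ and the induction constant can be absorbed. Finally, the stated implication \eqref{eq-thm-sm} follows by an inner induction over $n\in[N]$: using the Lipschitz continuity of the fixed points from Lemma \ref{lma:Lip-fp}, $\|y_n^\ast(x^K,y_{1:n-1}^K)-y_n^\diamond(x^\ast)\|\le L_{y,n}\big(\|x^K-x^\ast\|+\sum_{i=1}^{n-1}\|y_i^K-y_i^\diamond(x^\ast)\|\big)$, so the $y_n^\diamond(x^\ast)$-errors inherit the $\Tilde{\cO}(1/K)$ rate from the already-controlled $y_n^\ast(x^K,\cdot)$-errors, the main error, and the bounds for smaller indices.
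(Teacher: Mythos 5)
Your decomposition of the displacements $\text{Dist}_n(K,k)$ into drift plus martingale noise, with the drift controlled via the vanishing of $v$ and $h_n$ at the relevant fixed points, is exactly the mechanism of the paper's Lemma \ref{apx-aux-lma-sub}, and your derivation of the implication \eqref{eq-thm-sm} from Lemma \ref{lma:Lip-fp} is fine. The problems lie in the induction you build on top. First, the hypothesis $E^k\le C\log k/k$ is unattainable for intermediate iterates: the step sizes are constant and tied to the final horizon, $\alpha=\Theta(\log K/K)$, so after $k$ steps the contraction factor is only $(1-\mu_0\alpha)^k\approx e^{-\mu_0 k\log K/K}$, which is $1-o(1)$ for every $k=o(K/\log K)$. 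For instance, at $k=\lfloor\sqrt{K}\rfloor$ the error is still $\Theta(E^0)$ (this is exact already in the noiseless linear case $v(x)=\mu_0x$), while your hypothesis demands $\cO(\log K/\sqrt{K})$; no fixed $C$ independent of $K$ repairs this. The induction hypothesis must be horizon-aware, e.g.\ $E^k\le C_1(1-c\alpha)^k+C_2\alpha$, whose value over the effective window is then what enters the displacement bounds.

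Second, and more fundamentally, the closure constant cannot be forced below one in your setup. Unrolling Lemma \ref{lma-sm-main}, the main error is fed by the peak secondary error with coefficient $\sum_{j}(1-\mu_0\alpha)^{K-1-j}\alpha(\alpha+\tfrac{2}{\mu_0})\sum_{n}\ell_{v,n}^2\approx\tfrac{2}{\mu_0^2}\sum_{n}\ell_{v,n}^2$, a problem constant that is unaffected by the step-size constants and unaffected by $C$ (the feedback is homogeneous in $C$, so enlarging $C$ does nothing). Whenever $2\sum_n\ell_{v,n}^2>\mu_0^2$, which is the generic case, an induction on the \emph{unweighted} sum $E^k$ with a single constant cannot close. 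You need either a weighted error $\E\|x^k-x^\ast\|^2+\lambda\sum_n\E\|y_n^k-y_n^\ast(x^k,y_{1:n-1}^k)\|^2$, or a two-constant hypothesis for the main and secondary errors separately with $C_X\gtrsim(\sum_n\ell_{v,n}^2/\mu_0^2)\,C_Y$, combined with ratio conditions of the form $\alpha\le c\,\mu_n\beta_n$ so that the reverse coupling (main into secondary, through the displacements) is small enough for the product of the two coupling coefficients to fall below one. This weighting is precisely what the paper's recursively defined Lyapunov weights $a_n^k,b_n^k$ in \eqref{apx-sm-para} and its step-size conditions \eqref{apx-sm-con-bn0}--\eqref{apx-sm-con-a00} implement. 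Relatedly, your series bookkeeping drops one power of the window length: the Cauchy--Schwarz factor $(K-k)$ and the inner sum over $j$ each contribute a power, so the relevant series is $\sum_m m^2(1-\tfrac{\mu_n}{2}\beta_n)^m\asymp(\mu_n\beta_n)^{-3}$, making the displacement-driven terms $\Theta((\alpha/\beta_n)^2)$ times the peak error --- a constant, not $\cO(\log K/K)$ times it --- which is exactly why the ratio conditions above are indispensable rather than optional.
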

Theorem \ref{thm-MSSA-sm} shows that the mean square error converges to zero at a rate of $\Tilde{\cO}(K^{-1})$. The order of the convergence rate is independent on $N$, and optimal when ignoring the logarithm term, and matches that of SSSA with the $\Theta(\frac{\log K}{K})$ step size \cite[Theorem 1]{moulines2011non}.

\subsection{Analysis for Non-strongly Monotone Main Operator}

Assumption \ref{asp:main-sm} is too strict in many applications. One example is that when we analyze the actor-critic methods, the main operator is not necessarily strongly monotone \cite{konda1999actor}. Following \cite{shen2022single,karimi2019non}, we make an alternative assumption.

\begin{assumption}[Existence and lower-boundedness of primitive function of $v$]\label{asp:main-nsm}
    There exists a function $\Phi:\R^{d_0}\to\R$ lower-bounded by $C_F$ such that $\nabla \Phi(x)=v(x,y^\diamond_{1:N}(x))$.
\end{assumption}

From the optimization perspective, the primitive function $\Phi(\cdot)$ stands for the cost function to minimize. If $v(\cdot)$ has a primitive function $\Phi(\cdot)$, then Assumption \ref{asp:main-sm} is stronger than Assumption \ref{asp:main-nsm}.

The convergence of \eqref{eq-sm-SSSA} under Assumption \ref{asp:main-nsm} is also well studied \cite{ghadimi2013stochastic}. $\Phi(x^k)$ can be considered as Lyapunov function in the analysis of SSSA \eqref{eq-sm-SSSA}, and the convergence of \eqref{eq-sm-SSSA} can be derived from the descent of Lyapunov function. We first provide the descent property on $\Phi(x^k)$ as follows.
\begin{lemma}\label{lma-nsm}
    Consider the MSSA sequences generated by \eqref{MSSA}. Suppose that Assumptions \ref{asp-sub-sm}--\ref{asp-noise} and \ref{asp:main-nsm} hold. If the step size is chosen to satisfy $\alpha^K\le\frac{1}{2L_v}$, then it holds that
    \begin{align}
        &\E[\Phi(x^{K+1})]\le\ \E[\Phi(x^K)]-\frac{\alpha^K}{2}\E\|v(x^K,y^{\diamond}_{1:N}(x^K))\|^2\notag\\
        &~~~~~~~~+\frac{\alpha^K}{2}\sum_{n=1}^N\ell_{v,n}\E\|y_n^K-y_n^{\ast}(x^K,y_{1:n-1}^K)\|^2\notag\\
        &~~~~~~~~-\frac{\alpha^K}{4}\E\|v(x^K,y^K_{1:N})\|^2+\frac{L_v}{2}(\alpha^K)^2\operatorname{Var}[\xi^K].\label{eq-lma-nsm}
    \end{align}
    where $\ell_{v,n}$ is a constant for any $n\in[N]$.
\end{lemma}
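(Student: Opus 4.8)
The plan is to use $\Phi$ as a Lyapunov function and establish a one-step descent inequality in the style of stochastic gradient descent on a smooth nonconvex objective, while carefully tracking the gradient bias that arises from updating $x^K$ with $v(x^K,y^K_{1:N})$ rather than with the true gradient $\nabla\Phi(x^K)=v(x^K,y^\diamond_{1:N}(x^K))$. By Assumption \ref{asp:main-nsm} and Remark \ref{rmk-Lv}, $\nabla\Phi$ is $L_v$-Lipschitz, so $\Phi$ is $L_v$-smooth and satisfies the descent lemma
\[
\Phi(x^{K+1})\le\Phi(x^K)+\langle\nabla\Phi(x^K),x^{K+1}-x^K\rangle+\tfrac{L_v}{2}\|x^{K+1}-x^K\|^2.
\]

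First I would substitute $x^{K+1}-x^K=-\alpha^K(v(x^K,y^K_{1:N})+\xi^K)$ and take the conditional expectation given $\cF^K$. Because $\{\xi^k\}_k$ is a martingale difference sequence (Assumption \ref{asp-noise}) and $x^K,y^K_{1:N}$ are $\cF^K$-measurable, the $\xi^K$ cross term in the inner product vanishes, while the quadratic term gives $\E[\|x^{K+1}-x^K\|^2\mid\cF^K]=(\alpha^K)^2(\|v(x^K,y^K_{1:N})\|^2+\operatorname{Var}[\xi^K\mid\cF^K])$. Writing $a:=v(x^K,y^\diamond_{1:N}(x^K))$ and $b:=v(x^K,y^K_{1:N})$, I then expand the surviving inner product by the polarization identity $-\langle a,b\rangle=-\tfrac12\|a\|^2-\tfrac12\|b\|^2+\tfrac12\|a-b\|^2$.

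Second, I would merge the resulting $-\tfrac{\alpha^K}{2}\|b\|^2$ with the $\tfrac{L_v(\alpha^K)^2}{2}\|b\|^2$ from the quadratic term, obtaining coefficient $-\tfrac{\alpha^K}{2}(1-L_v\alpha^K)$; the step size condition $\alpha^K\le\tfrac{1}{2L_v}$ forces $1-L_v\alpha^K\ge\tfrac12$, so this is bounded above by $-\tfrac{\alpha^K}{4}\|b\|^2$. After taking total expectation, this already accounts for the $-\tfrac{\alpha^K}{2}\E\|v(x^K,y^\diamond_{1:N}(x^K))\|^2$, $-\tfrac{\alpha^K}{4}\E\|v(x^K,y^K_{1:N})\|^2$, and $\tfrac{L_v}{2}(\alpha^K)^2\operatorname{Var}[\xi^K]$ terms of the claim.

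The remaining and hardest step is to bound the bias $\tfrac{\alpha^K}{2}\|a-b\|^2=\tfrac{\alpha^K}{2}\|v(x^K,y^\diamond_{1:N}(x^K))-v(x^K,y^K_{1:N})\|^2$ by the tracking errors of the secondary sequences. Here I would invoke the auxiliary estimate underlying Remark \ref{rmk-Lv} (the formal statement in Appendix \ref{sec-apx-aux}): Assumption \ref{asp-lc-main} with $\hat x=\bar x=x^K$ gives $\|a-b\|\le\ell_0\sum_{n=1}^N\|y_n^K-y_n^\diamond(x^K)\|$, and then a recursive application of the fixed-point Lipschitz continuity of Lemma \ref{lma:Lip-fp}, processed from $n=1$ upward, converts each gap $\|y_n^K-y_n^\diamond(x^K)\|$ against the ideal cascade $y^\diamond$ into a sum of the tracking errors $\|y_i^K-y_i^\ast(x^K,y_{1:i-1}^K)\|$ against the iterate-dependent fixed points---precisely the quantity controlled in Lemma \ref{lma:conv-sub}. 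Collecting the accumulated constants into $\ell_{v,n}$ (consistently with their use in Lemma \ref{lma-sm-main}) yields $\|a-b\|^2\le\sum_{n=1}^N\ell_{v,n}\|y_n^K-y_n^\ast(x^K,y_{1:n-1}^K)\|^2$, and taking total expectation produces the stated bias term $\tfrac{\alpha^K}{2}\sum_{n=1}^N\ell_{v,n}\E\|y_n^K-y_n^\ast(x^K,y_{1:n-1}^K)\|^2$. The main subtlety is exactly this mismatch between $y^\diamond_n(x^K)$ and $y^\ast_n(x^K,y_{1:n-1}^K)$: the telescoping must be carried out in the correct order and the constants tracked so that the two formulations of the secondary-sequence error align.
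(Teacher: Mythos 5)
Your proposal is correct and follows essentially the same route as the paper's own proof: the $L_v$-smoothness descent lemma on $\Phi$, the martingale-difference argument that kills the noise cross term, the polarization identity $-\langle a,b\rangle=\tfrac12\|a-b\|^2-\tfrac12\|a\|^2-\tfrac12\|b\|^2$, the merge of the $\|b\|^2$ terms under $\alpha^K\le\tfrac{1}{2L_v}$, and the bias bound via the auxiliary lemma in Appendix \ref{sec-apx-aux} (Lemma \ref{lma-aux-1}), whose telescoping through $y^\diamond_n$ you reconstruct correctly. The only cosmetic difference is that you handle the norm-versus-squared-norm conversion of the bias bound explicitly by absorbing constants into $\ell_{v,n}$, which the paper glosses over (its own proof even carries a minor typo there), so no gap remains.
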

Similar to the strongly monotonic scenario, there are two error terms caused by random noise $\xi^k$ and the bias from $v(x^k,y_{1:N}^k)$ to $v(x^k,y^\diamond_{1:N}(x^k))$ at the right-hand side of \eqref{eq-lma-nsm}. The bias term can be bounded as in Lemma \ref{lma:conv-sub}. With Lemmas \ref{lma:conv-sub} and \ref{lma-nsm}, we finally obtain the convergence result as follows.

\begin{theorem}\label{thm-MSSA-nsm}
    Consider the MSSA sequences generated by \eqref{MSSA} for $k\in[K]$. Suppose that Assumptions \ref{asp-sub-sm}--\ref{asp-noise} and \ref{asp:main-nsm} hold. If the step sizes are chosen as $\alpha^k=\Theta\left(\frac{1}{\sqrt{K}}\right)$ and $\beta^k_n=\Theta\left(\frac{1}{\sqrt{K}}\right)$, then it holds that
    \begin{align*}
        &\frac{1}{K}\sum_{k=1}^K\E\|v(x^k,y^\diamond_{1:N}(x^k))\|^2={\cO}\left(\frac{1}{\sqrt{K}}\right),\notag\\
        &\frac{1}{K}\sum_{k=1}^K\sum_{n=1}^N\E\|y_n^k-y_n^\ast(x^k,y_{1:n-1}^k)\|^2={\cO}\left(\frac{1}{\sqrt{K}}\right).
    \end{align*}
\end{theorem}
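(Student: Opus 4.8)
The plan is to convert the two per-iteration estimates already in hand---the descent inequality of Lemma~\ref{lma-nsm} for the main sequence and the tracking bound of Lemma~\ref{lma:conv-sub} for the secondary sequences---into averaged bounds, and then close a coupled system of scalar inequalities in the two running averages $T_v:=\frac1K\sum_{k=1}^K\E\|v(x^k,y^\diamond_{1:N}(x^k))\|^2$ and $T_{y,n}:=\frac1K\sum_{k=1}^K\E\|y_n^k-y_n^\ast(x^k,y_{1:n-1}^k)\|^2$. First I would sum \eqref{eq-lma-nsm} over $k=1,\dots,K$ with the constant step size $\alpha^k=\alpha=\Theta(1/\sqrt K)$. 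The $\Phi$-terms telescope and, using the lower bound $C_F$ from Assumption~\ref{asp:main-nsm}, collapse to $(\Phi(x^1)-C_F)/(\alpha K)=\cO(1/\sqrt K)$ since $\alpha K=\Theta(\sqrt K)$. After dividing by $\alpha K/2$ this yields a bound of the form $T_v\le \cO(1/\sqrt K)+\sum_{n=1}^N\ell_{v,n}T_{y,n}+\alpha\cdot(\text{averaged }\operatorname{Var}[\xi])$, where the helpful negative term $-\frac{\alpha}{4}\E\|v(x^k,y^k_{1:N})\|^2$ is retained to later absorb state-dependent noise.

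Next I would average the secondary bound \eqref{eq:lma-conv-sub1} over $k$. With $\beta_n=\Theta(1/\sqrt K)$ the geometric factor $(1-\frac{\mu_n}{2}\beta_n)^{k-j-1}$ decays at rate $\Theta(1/\sqrt K)$, so its effective horizon is $\Theta(\sqrt K)$ iterations, and the initial-bias term, carrying a factor $(1-\frac{\mu_n}{2}\beta_n)^{k}$, contributes only $\cO(1/K)$ on average. The crux is the bias term weighted by $\text{Dist}_n(k,j)=\E\|x^k-x^j\|^2+\sum_{i<n}\E\|y_i^k-y_i^j\|^2$. I would bound each increment by unrolling the updates \eqref{MSSA}: Cauchy--Schwarz gives $\E\|x^k-x^j\|^2\lesssim (k-j)\,\alpha^2\sum_{\ell=j}^{k-1}\E\|v(x^\ell,y^\ell_{1:N})+\xi^\ell\|^2$ and analogously for the $y_i$ increments. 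Inserting these and summing the geometric weights against the linear factor $(k-j)$ produces $\sum_j(1-\frac{\mu_n}{2}\beta_n)^{k-j-1}\text{Dist}_n(k,j)=\cO(\beta_n^{-2})\cdot\alpha^2\cdot(\text{averaged operator norms})$; multiplying by the prefactor $\beta_n(\beta_n+2/\mu_n)=\Theta(1/\sqrt K)$ leaves an overall $\cO(1/\sqrt K)$ times averaged norms of $v$ and the $h_i$. Using Assumption~\ref{asp-lc-sub} together with Assumption~\ref{asp-sub-sm} to convert $\|h_i(x^k,y_{1:i}^k)\|$ into the tracking error $\|y_i^k-y_i^\ast(\cdot)\|$ (up to $\ell_i^2$), and Assumption~\ref{asp-lc-main} with the fixed-point Lipschitz chain of Lemma~\ref{lma:Lip-fp} to control $\|v(x^k,y^k)\|^2$ by $\|v(x^k,y^\diamond_{1:N}(x^k))\|^2+\sum_n\|y_n^k-y_n^\ast\|^2$, I would reduce the averaged secondary bound to $T_{y,n}\le \cO(1/\sqrt K)\big(1+T_v+\sum_{i\le n}T_{y,i}\big)$.

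The last ingredient is the state-dependent noise of Assumption~\ref{asp-noise}: since $\operatorname{Var}[\psi_n^k]$ and $\operatorname{Var}[\xi^k]$ are bounded by $\sigma^2$ plus multiples of $\|h_n\|^2$ and $\|v(x^k,y^k_{1:N})\|^2$, the noise terms in both averaged inequalities are again controlled by $\sigma^2/\sqrt K$ plus small ($\cO(1/\sqrt K)$, or $\alpha$-weighted) multiples of $T_v$ and the $T_{y,n}$; in particular the $\|v(x^k,y^k_{1:N})\|^2$ contribution entering $T_v$ through $\frac{L_v}{2}\alpha^2\omega_1\|v\|^2$ is absorbed by the retained $-\frac{\alpha}{4}\E\|v(x^k,y^k_{1:N})\|^2$ term once $\alpha\le 1/(2L_v\omega_1)$. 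Processing $n=1,\dots,N$ in order (so that $\text{Dist}_n$ only invokes already-bounded lower-index averages) and collecting everything yields a finite triangular linear system in $(T_v,T_{y,1},\dots,T_{y,N})$ whose off-diagonal and self-coupling coefficients are $\cO(1/\sqrt K)$. For $K$ large enough these coefficients are strictly below one, so the system can be solved---the small self-referential terms moved to the left-hand sides---giving $T_v=\cO(1/\sqrt K)$ and $T_{y,n}=\cO(1/\sqrt K)$ simultaneously, which are exactly the two claimed bounds.

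I expect the main obstacle to be the bias term: showing that the accumulated increments $\text{Dist}_n(k,j)$, once weighted geometrically and multiplied by the $\beta_n$-prefactor, contribute only $\cO(1/\sqrt K)$ rather than $\cO(1)$. This is precisely where the single-timescale choice $\alpha=\Theta(\beta_n)=\Theta(1/\sqrt K)$ and the $\Theta(\sqrt K)$ effective horizon must be balanced against the $\alpha^2=\Theta(1/K)$ per-step size, and it is intertwined with ensuring that the self-referential state-dependent noise and operator-norm terms are absorbed with coefficients below one so that the coupled system actually closes.
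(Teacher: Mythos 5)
Your overall architecture (average the descent inequality \eqref{eq-lma-nsm} and the tracking bound \eqref{eq:lma-conv-sub1}, then close a coupled linear system in $T_v$ and $T_{y,n}$) is in the same spirit as the paper's Lyapunov-function proof, which is essentially a per-iteration weighted version of the same bookkeeping (Lemma \ref{apx-aux-lma-sub} plus the weights $a_n^k,b_n^k$ satisfying \eqref{apx-nsm-para}). However, there is a genuine quantitative gap exactly at the step you flagged as the crux, and it breaks your closing argument. Your claim is that
\begin{align*}
\sum_{j=0}^{k-1}\Big(1-\tfrac{\mu_n}{2}\beta_n\Big)^{k-j-1}\text{Dist}_n(k,j)=\cO(\beta_n^{-2})\,\alpha^2\cdot(\text{averaged operator norms}),
\end{align*}
so that after multiplying by the prefactor $\Theta(\beta_n)$ the coupling is $\cO(\alpha^2/\beta_n)=\cO(1/\sqrt K)$. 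This undercounts by one power of the effective horizon $1/\beta_n$. By Cauchy--Schwarz, $\text{Dist}_n(k,j)$ carries \emph{both} the factor $(k-j)$ \emph{and} a window sum of $(k-j)$ operator-norm terms (cf.\ \eqref{apx-thm-sm-eq5}), so after exchanging the order of summation and averaging over $k$ the geometric weights are summed against an effectively quadratic factor: $\sum_m m^2\rho^{m-1}=\cO((1-\rho)^{-3})$, giving $\cO(\beta_n^{-3})\,\alpha^2\,\bar G$ rather than $\cO(\beta_n^{-2})\,\alpha^2\,\bar G$. Multiplying by the prefactor $\Theta(\beta_n)$ then yields a coupling coefficient of order $\alpha^2/\beta_n^2=\Theta(1)$, not $\cO(1/\sqrt K)$.

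Consequently, the statement ``for $K$ large enough these coefficients are strictly below one'' is false: with the single-timescale choice $\alpha=\Theta(\beta_n)=\Theta(1/\sqrt K)$ the coupling coefficients converge to fixed constants determined by the ratios $\alpha/\beta_n$ and $\beta_n/\beta_l$ (times $\ell_n^2/\mu_n^2$-type factors), independently of $K$. The system can still be closed, but only by imposing explicit constant-level conditions on the step-size \emph{ratios} so that these constants fall below the available budget --- this is precisely what the paper does through the step-size conditions \eqref{apx-nsm-con-bn0}, \eqref{apx-nsm-con-an0}, \eqref{apx-nsm-con-b00}, \eqref{apx-nsm-con-a00} and the requirement \eqref{apx-nsm-para} with budgets such as $a_0^0+b_0^0\le\frac14$, which let the history-weighted terms in the Lyapunov function (and the retained $-\frac{\alpha}{4}\E\|v(x^k,y_{1:N}^k)\|^2$ term) absorb the $\Theta(1)$ couplings. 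Your proof becomes correct once you replace the ``large $K$'' argument by such ratio conditions and verify that the resulting triangular system's coefficients (including the self-coupling of each $T_{y,n}$ and the feedback of the state-dependent noise) sum to strictly less than one; as written, the proposal does not establish this.
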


Without the strong monotonicity assumption, $v(\cdot)$ may have more than one fixed point. Therefore, we replace $\E\|x^k-x^\ast\|^2$ by $\E\|v(x^k,y_{1:N}^\diamond(x^k))\|^2$ in the performance metric. Theorem \ref{thm-MSSA-nsm} shows that the mean square error converges to zero at a rate of $\cO(K^{-\frac{1}{2}})$, which matches the convergence rate of SSSA \cite[Corollary 2.2]{ghadimi2013stochastic}.

\section{Application to Bilevel Optimization}
\label{sec:BO}
In this section, we apply our theoretical results on MSSA to bilevel optimization. Bilevel optimization naturally appears in various research topics, including hyper-parameter learning \cite{feurer2019hyperparameter}, neural architecture search \cite{cai2023bhe}, and active learning \cite{borsos2021semi}. Besides, it is also tightly related to meta learning \cite{ji2021bilevel} and reinforcement learning \cite{konda1999actor}.

A stochastic bilevel optimization problem is formulated as
\begin{subequations}\label{BO}
\begin{align}
    \min_{x\in\R^{d_x}} ~ & ~F(x):=f(x,y_1^\ast(x)),\\
    ~~s.t. ~ & ~y_1^\ast(x):=\arg\min_{y_1\in\R^{d_y}}g(x,y_1),
\end{align}
\end{subequations}
where the upper-level function $f(x,y_1):={\E}_{\zeta\sim\cD_f}[f(x,y_1;\zeta)]$ and the lower-level function $g(x,y_1):= {\E}_{\phi\sim\cD_g}[g(x,y_1;\phi)]$ with $\zeta$ and $\phi$ being random variables following distributions $\cD_f$ and $\cD_g$, respectively. It is common to suppose that $g(x,y_1)$ is strongly convex in $y_1$ for any $x$. Therefore, given any $x$, minimizing the lower-level function $g(x,y_1)$ yields a unique solution $y_1^\ast(x)$.

In recent years, gradient-based methods for bilevel optimization, which approximate the gradient of $F(x)$ and perform gradient descent on $x$, have become popular. There are several methods to approximate $\nabla F(x)$, for example, approximate implicit differentiation\cite{pedregosa2016hyperparameter,grazzi2020iteration,ji2021bilevel}, iterative differentiation\cite{franceschi2018bilevel,shaban2019truncated,ji2021bilevel,grazzi2020iteration}, Neumann series\cite{ghadimi2018approximation,hong2023two,chen2021closing}, etc. Nevertheless, these methods require an inner loop to estimate lower-level solution $y^\ast(x)$ or the Hessian inverse of $g(x,y_1)$, which brings high computational overhead. To overcome this challenge, \cite{dagreou2022framework} proposes the StOchastic Bilevel optimization Algorithm (SOBA) which has no inner loop. However, \cite{dagreou2022framework} makes the high-order Lipschitz continuity assumptions on $f$ and $g$ to guarantee the convergence of SOBA, which are not required in previous works and may not hold in practice. We find that SOBA falls into the MSSA framework, and according to the convergence analysis of MSSA, one can show that without the high-order Lipschitz continuity assumptions, SOBA still converges at the same rate as in \cite{dagreou2022framework}. We discuss the details in the following.

\subsection{Stochastic Bilevel Optimization Algorithm (SOBA)}
% \red{Why do we only focus on SOBA? Can our analysis be used to other algorithms?}\blue{Have explained it above. MA-SOBA also falls into our framework, while other kind of graident based methods require modification on assumption on noise.}

% A state-of-the-art approach to solving \eqref{BO} is the StOchastic Bilevel optimization Algorithm (SOBA) , which is gradient-based \cite{dagreou2022framework}.
First, we introduce SOBA and show that it falls into the MSSA framework.
According to the implicit function theorem \cite{ghadimi2018approximation}, $\nabla F(x)$ takes a form of
\begin{align}\label{eq-hyper-gradient}
\hspace{-1.0em}    \nabla F(x)=\nabla_xf(x,y_1^\ast(x))+\nabla_{xy}^2g(x,y_1^\ast(x))y_2^\ast(x,y_1^\ast(x)),
\end{align}
where vector function $y_2^\ast(\cdot):\R^{d_x}\times\R^{d_y}\to\R^{d_y}$ is defined as
\begin{align}
    y_2^\ast(x,y_1):=-[\nabla_{y_1y_1}^2g(x,y_1)]^{-1}\nabla_{y_1}f(x,y_1),
\end{align}
$\nabla_xf(x,y_1)$ and $\nabla_{y_1}f(x,y_1)$ respectively represent the gradients of $f$ w.r.t. $x$ and $y_1$, $\nabla_{y_1y_1}^2g(x,y_1)$ represents the Hessian of $g$ w.r.t. $y_1$, while $\nabla_{xy_1}^2g(x,y_1)$ represents the Jacobian of $g$. Given any $x$, computing the gradient $\nabla F(x)$ requires the lower-level solution $y_1^\ast(x)$ as well as the Hessian inverse $[\nabla_{y_1y_1}^2g(x,y_1^\ast(x))]^{-1}$, and hence is expensive. To address this issue, SOBA applies stochastic gradient descent on the lower-level problem to approximate $y_1^\ast(x)$, and on
\begin{gather}\label{sub-problem}
    \min_{y_2\in\R^{d_y}}~\frac{1}{2}\langle\nabla^2_{y_1y_1}g(x,y_1)y_2,~y_2\rangle+\langle\nabla_{y_1}f(x,y_1),~y_2\rangle
\end{gather}
to approximate $y_2^\ast(x,y_1)$.

At time $k$, SOBA updates as
\begin{subequations}\label{alg:SOBA}
    \begin{align}
        y_1^{k+1} = y_1^k & -\beta^k_1\nabla_{y_1}g(x^k,y_1^k;\phi_1^k),\\
        y_2^{k+1} = y_2^k & -\beta^k_2\big(\nabla_{y_1}f(x^k,y_1^k;\zeta^k_2) \notag\\
                   & +\nabla^2_{y_1y_1}g(x^k,y_1^k;\phi^k_2)y_2^k\big),\\
        x^{k+1}   = x^k & -\alpha^k\big(\nabla_xf(x^k,y_1^k;\zeta^k_0) \notag\\
                   & +\nabla^2_{xy_1}g(x^k,y_1^k;\phi^k_0)y_2^k\big),
    \end{align}
\end{subequations}
where $\zeta^k_0$ and $\zeta^k_2$ are independently sampled from $\cD_f$, while $\phi^k_0$, $\phi^k_1$ and $\phi^k_2$ are independently sampled from $\cD_g$. The goal of \eqref{alg:SOBA} is to find $x^\ast$, $y_1^\ast$ and $y_2^\ast$ satisfying
%To reformulate \eqref{alg:SOBA} into \eqref{MSSA} with $N=1$, let $y_1,y_2$ denote $y,u$ respectively, and the generic mappings are defined as:
\begin{subequations}\label{eq-bi-mapp}
    \begin{align}
\hspace{-1.2em}        h_1(x^\ast,y_1^\ast) & :=\nabla_{y_1}g(x^\ast,y_1^\ast) =0,\\
\hspace{-1.2em}        h_2(x^\ast,y_1^\ast,y_2^\ast) & :=\nabla_{y_1}f(x^\ast,y_1^\ast)\!+\!\nabla^2_{y_1y_1}g(x^\ast,y_1^\ast)y_2^\ast =0,\\
\hspace{-1.2em}        v(x^\ast,y_1^\ast,y_2^\ast) & :=\nabla_xf(x^\ast,y_1^\ast)+\nabla^2_{xy_1}g(x^\ast,y_1^\ast)y_2^\ast =0,
    \end{align}
\end{subequations}
which imply $\nabla F(x^\ast)=0$ and $y_1^\ast=y^\ast(x^\ast)$, namely, $(x^\ast, y_1^\ast)$ is an optimal solution to \eqref{BO}.

Observe that the updates \eqref{alg:SOBA} fall into the MSSA framework \eqref{MSSA} with $N=2$. {The operators $v,h_1,h_2$ are shown in \eqref{eq-bi-mapp}. The random noise at time $k$ comes from the random sampling processes, given by
\begin{subequations}\label{eq-soba-noise}
    \begin{align}
        \psi^k_1:=&\nabla_{y_1}g(x^k,y_1^k;\phi_1^k)-\nabla_{y_1}g(x^k,y_1^k),\\
        \psi^k_2:=&\nabla_{y_1}f(x^k,y_1^k;\zeta^k_2)-\nabla_{y_1}f(x^k,y_1^k)\notag\\
        &+\nabla^2_{y_1y_1}g(x^k,y_1^k;\phi^k_2)y_2^k-\nabla^2_{y_1y_1}g(x^k,y_1^k)y_2^k,\\
        \xi^k:=&\nabla_xf(x^k,y_1^k;\zeta^k_0)-\nabla_xf(x^k,y_1^k)\notag\\
        &+\nabla^2_{xy_1}g(x^k,y_1^k;\phi^k_0)y_2^k-\nabla^2_{xy_1}g(x^k,y_1^k)y_2^k.
    \end{align}
\end{subequations}
}

\subsection{Improved Analysis for SOBA}
Before applying our theoretical results to SOBA, we first verify the assumptions made for MSSA.
\begin{lemma}[Verifying assumptions of MSSA]\label{lma-asp-SOBA}
    Consider the following conditions: \\
    $(a)$ $f(x,y_1)$ and $\nabla f(x,y_1)$ are $\ell_{f}$-Lipschitz continuous. $\nabla_{y_1}$ $g(x,y_1)$, $\nabla^2_{xy_1}g(x,y_1)$ and $\nabla^2_{y_1y_1}g(x,y_1)$ are all $\ell_{g}$-Lipschitz continuous. \\
    $(b)$ For any $x$, $g(x,y_1)$ is $\mu_g$-strongly convex in $y_1$. \\
    $(c)$ $\nabla f(x,y_1;\zeta), $ $\nabla_{y_1}g(x,y_1;\phi)$, $\nabla^2_{xy_1}g(x,y_1;\phi)$, and $\nabla^2_{y_1y_1}$ $g(x,y_1;\phi)$ are unbiased estimators of $\nabla f(x,y_1)$, $\nabla_{y_1}g(x,y_1)$, $\nabla^2_{xy_1}g(x,y_1)$, and $\nabla^2_{y_1y_1}g(x,y_1)$, respectively, and the estimation variances are bounded {by $\hat\sigma^2$}. \\
    $(d)$ $F(x)$ is $\mu_x$-strongly convex. \\
    $(e)$ $F(x)$ is lower-bounded by $C_F$. \\
   {Under conditions $(a)$--$(d)$ (resp. conditions $(a)$--$(c)$ and $(e)$), the operators and the random noise involved with the SOBA updates \eqref{alg:SOBA} satisfy Assumptions \ref{asp-sub-sm}--\ref{asp:main-sm} (resp. Assumptions \ref{asp-sub-sm}--\ref{asp-noise} and \ref{asp:main-nsm}).} In particular, $(b)$ $\Rightarrow$ Assumption \ref{asp-sub-sm}; $(a)$--$(b)$
    $\Rightarrow$ Assumptions \ref{asp-lc-sub}--\ref{asp-lc-main}; $(a)$--$(c)$ $\Rightarrow$ Assumption \ref{asp-noise}; $(b)$, $(d)$ $\Rightarrow$ Assumption \ref{asp:main-sm}; and $(b)$, $(e)$
    $\Rightarrow$ Assumption \ref{asp:main-nsm}.
\end{lemma}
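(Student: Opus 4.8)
The plan is to verify Assumptions \ref{asp-sub-sm}--\ref{asp:main-nsm} one at a time by substituting the explicit SOBA operators from \eqref{eq-bi-mapp}. Before the case analysis I would record three foundational facts that get reused throughout. First, the hypergradient identity $v(x,y^\diamond_1(x),y^\diamond_2(x))=\nabla F(x)$, which follows from \eqref{eq-hyper-gradient} once we note $y_1^\diamond(x)=y_1^\ast(x)$ and $y_2^\diamond(x)=y_2^\ast(x,y_1^\ast(x))$. Second, $\mu_g$-strong convexity of $g$ in $y_1$ (condition $(b)$) gives $\nabla^2_{y_1y_1}g\succeq\mu_g I$, so its inverse has spectral norm at most $1/\mu_g$. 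Third, $\ell_f$-Lipschitz continuity of $f$ bounds $\|\nabla_{y_1}f\|\le\ell_f$ and $\ell_g$-Lipschitz continuity of $\nabla_{y_1}g$ bounds $\|\nabla^2_{y_1y_1}g\|\le\ell_g$; combining these shows $y_2^\ast(x,y_1)$ and hence $y_2^\diamond(x)$ are uniformly bounded by $\ell_f/\mu_g$.

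The five ``easy'' implications then follow by direct substitution. For Assumption \ref{asp-sub-sm}: $h_1=\nabla_{y_1}g$ is $\mu_g$-strongly monotone in $y_1$ by strong convexity of $g$, and $h_2$ is affine in $y_2$ with coefficient $\nabla^2_{y_1y_1}g\succeq\mu_g I$, hence $\mu_g$-strongly monotone in $y_2$. For Assumptions \ref{asp-lc-sub}--\ref{asp-lc-main}: $h_1$ meets Condition $(a)$ directly, while $h_2$ matches the affine template $A_2y_2+b_2$ of Condition $(b)$ with $A_2=\nabla^2_{y_1y_1}g$ and $b_2=\nabla_{y_1}f$, both Lipschitz (condition $(a)$) and bounded (foundational facts); Assumption \ref{asp-lc-main} is obtained by splitting $v(\hat x,y^\diamond(\hat x))-v(\bar x,y)$ into an $x$-increment evaluated at the fixed point and a $y$-increment, bounding each via Lipschitz continuity of $\nabla_xf$ and $\nabla^2_{xy_1}g$ together with boundedness of $y_2^\diamond(\hat x)$. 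For Assumption \ref{asp:main-sm}: the hypergradient identity turns $\langle v(x,y^\diamond(x)),x-x^\ast\rangle$ into $\langle\nabla F(x),x-x^\ast\rangle$, so $\mu_x$-strong convexity of $F$ (condition $(d)$) supplies the inequality with $\mu_0=\mu_x$ and the uniqueness of $x^\ast$. For Assumption \ref{asp:main-nsm}: take $\Phi=F$, so $\nabla\Phi(x)=v(x,y^\diamond(x))$ by the identity and $F\ge C_F$ by condition $(e)$.

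The main obstacle is Assumption \ref{asp-noise}, specifically the variance bounds for $\psi_2^k$ and $\xi^k$ in \eqref{eq-soba-noise}. These contain Jacobian/Hessian--vector-product noise $\nabla^2_{y_1y_1}g(\cdot;\phi_2^k)y_2^k$ and $\nabla^2_{xy_1}g(\cdot;\phi_0^k)y_2^k$ whose conditional variances scale with $\|y_2^k\|^2$. Since $y_2^k$ is a running iterate with no a priori bound, no uniform variance bound is available; this is precisely why the relaxed noise model with nonzero $\omega$ coefficients is needed. The crux is to absorb $\|y_2^k\|^2$ into $\|h_2(x^k,y_{1:2}^k)\|^2$: from $h_2=\nabla_{y_1}f+\nabla^2_{y_1y_1}g\,y_2$ and $\nabla^2_{y_1y_1}g\succeq\mu_g I$ one gets $\mu_g\|y_2\|\le\|h_2\|+\|\nabla_{y_1}f\|\le\|h_2\|+\ell_f$, hence $\|y_2\|^2\le 2\|h_2\|^2/\mu_g^2+2\ell_f^2/\mu_g^2$. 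Substituting this, together with the per-component variance bound $\hat\sigma^2$ from condition $(c)$, yields bounds of the exact form required by \eqref{eq-bound-var}: $\omega_0=0$ for $\psi_1^k$, a positive constant $\omega_0$ plus $\sigma^2$ for $\psi_2^k$, and an $\omega_2\|h_2\|^2$ term plus $\sigma^2$ for $\xi^k$ (with $\omega_1=0$). Finally, unbiasedness (condition $(c)$) makes $\psi_1^k,\psi_2^k,\xi^k$ martingale differences with respect to $\cF^k$, completing the verification.
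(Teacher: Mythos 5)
Your proposal is correct and follows essentially the same route as the paper: identical handling of strong monotonicity, the affine template for $h_2$, the boundedness of $y_2^\diamond$ by $\ell_f/\mu_g$ for Assumption \ref{asp-lc-main}, and the hypergradient identity $v(x,y^\diamond_{1:2}(x))=\nabla F(x)$ for Assumptions \ref{asp:main-sm} and \ref{asp:main-nsm}. The only (cosmetic) difference is in verifying Assumption \ref{asp-noise}: you absorb $\|y_2^k\|^2$ into $\|h_2\|^2$ by inverting the affine relation $\nabla^2_{y_1y_1}g\,y_2 = h_2 - \nabla_{y_1}f$ directly, whereas the paper splits $y_2^k$ through the fixed point $y_2^\ast(x^k,y_1^k)$ and invokes strong monotonicity plus $\|y_2^\ast\|\le\ell_f/\mu_g$ --- both yield the same bound $\|y_2\|^2\le 2\|h_2\|^2/\mu_g^2+2\ell_f^2/\mu_g^2$ with the same constants.
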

The conditions in Lemma \ref{lma-asp-SOBA} are all standard assumptions in bilevel optimization \cite{dagreou2022framework,ghadimi2018approximation,chen2021tighter,akhtar2022projection}. Note that these conditions do not ensure the Lipschitz continuity of $h_2$ but Condition (b) in Assumption \ref{asp-lc-sub} on $h_2$. In addition, smoothness of the fixed point {$y_2^\ast(\cdot)$} is not necessarily valid under the conditions in Lemma \ref{lma-asp-SOBA}. In fact, \cite{dagreou2022framework} assumes Lipschitz continuity of $\nabla^2f$ and $\nabla^3g$ to guarantee smoothness of the fixed point {$y_2^\ast(\cdot)$}. According to Theorems \ref{thm-MSSA-sm} and \ref{thm-MSSA-nsm}, such assumptions are not needed in establishing the convergence rates of SOBA, as stated in the following corollary.
\begin{corollary}\label{cor:SOBA}
    Consider the SOBA sequences generated by \eqref{alg:SOBA} for $k\in[K]$. Under Conditions (a)--(d) in Lemma \ref{lma-asp-SOBA}, selecting the step sizes as $\alpha^k=\Theta\left(\frac{\log K}{K}\right)$, $\beta^k_1=\Theta\left(\frac{\log K}{K}\right)$ and $\beta^k_2=\Theta\left(\frac{\log K}{K}\right)$, we have
    \begin{align*}
        &\E\|x^K-x^\ast\|^2+\E\|y_1^K-y_1^\ast(x^K)\|^2
        = \Tilde{\cO}\left(\frac{1}{K}\right).
    \end{align*}
    Alternatively, under Conditions (a)--(c) and (e) in Lemma \ref{lma-asp-SOBA}, selecting the step sizes as $\alpha^k=\Theta\left(\frac{1}{\sqrt{K}}\right)$, $\beta^k_1=\Theta\left(\frac{1}{\sqrt{K}}\right)$ and $\beta^k_2$ $=\Theta\left(\frac{1}{\sqrt{K}}\right)$, we have
    \begin{align*}
        &\frac{1}{K}\sum_{k=1}^K\left(\E\|\nabla F(x^k)\|^2+\E\|y_1^k-y_1^\ast(x^k)\|^2\right) = {\cO}\left(\frac{1}{\sqrt{K}}\right).
    \end{align*}
\end{corollary}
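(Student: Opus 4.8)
The plan is to treat Corollary \ref{cor:SOBA} as a direct specialization of Theorems \ref{thm-MSSA-sm} and \ref{thm-MSSA-nsm} to the SOBA updates \eqref{alg:SOBA}, which have already been identified as an instance of the MSSA template \eqref{MSSA} with $N=2$ and operators $v,h_1,h_2$ given in \eqref{eq-bi-mapp}. The whole argument reduces to three moves: (i) verifying the MSSA assumptions via Lemma \ref{lma-asp-SOBA}, (ii) quoting the matching theorem, and (iii) translating the abstract MSSA performance metrics into the bilevel quantities $\|x^K-x^\ast\|$, $\|y_1^K-y_1^\ast(x^K)\|$, and $\|\nabla F(x^k)\|$.

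For the first claim I would invoke Lemma \ref{lma-asp-SOBA}: Conditions $(a)$--$(d)$ guarantee Assumptions \ref{asp-sub-sm}--\ref{asp:main-sm}, so Theorem \ref{thm-MSSA-sm} applies verbatim with the prescribed $\Theta(\log K/K)$ step sizes, yielding $\E\|x^K-x^\ast\|^2+\sum_{n=1}^2\E\|y_n^K-y_n^\ast(x^K,y_{1:n-1}^K)\|^2=\Tilde{\cO}(1/K)$. The $n=1$ term is exactly $\E\|y_1^K-y_1^\ast(x^K)\|^2$: there is no earlier secondary variable, so $y_1^\ast(x^K,y_{1:0}^K)=y_1^\ast(x^K)$, and the MSSA fixed point of $h_1=\nabla_{y_1}g$ coincides with the bilevel lower-level solution $y_1^\ast(x)$. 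Dropping the nonnegative $n=2$ term then gives the stated $\Tilde{\cO}(1/K)$ bound.

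For the second claim, Conditions $(a)$--$(c)$ and $(e)$ yield Assumptions \ref{asp-sub-sm}--\ref{asp-noise} and \ref{asp:main-nsm} by Lemma \ref{lma-asp-SOBA}, so Theorem \ref{thm-MSSA-nsm} applies with the $\Theta(1/\sqrt{K})$ step sizes, producing $\frac{1}{K}\sum_{k=1}^K\E\|v(x^k,y^\diamond_{1:2}(x^k))\|^2=\cO(1/\sqrt{K})$ and $\frac{1}{K}\sum_{k=1}^K\sum_{n=1}^2\E\|y_n^k-y_n^\ast(x^k,y_{1:n-1}^k)\|^2=\cO(1/\sqrt{K})$. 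The crucial translation is the hyper-gradient identity: by definition $y_1^\diamond(x)=y_1^\ast(x)$ and $y_2^\diamond(x)=y_2^\ast(x,y_1^\ast(x))$, so substituting into the third line of \eqref{eq-bi-mapp} and comparing with the implicit-function-theorem expression \eqref{eq-hyper-gradient} gives $v(x,y^\diamond_{1:2}(x))=\nabla F(x)$. Hence $F$ serves as the primitive function $\Phi$ of Assumption \ref{asp:main-nsm} (with $C_F$ its lower bound), the first metric becomes $\frac{1}{K}\sum_{k=1}^K\E\|\nabla F(x^k)\|^2$, and adding the second bound while again restricting the inner sum to its $n=1$ term delivers the claimed $\cO(1/\sqrt{K})$ rate.

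The steps are mechanical once the dictionary between the two settings is fixed; the only point deserving real care is the identification $v(x,y^\diamond_{1:2}(x))=\nabla F(x)$, i.e.\ verifying that $v$ evaluated on the nested fixed points $y^\diamond_{1:2}$ reproduces exactly the hyper-gradient \eqref{eq-hyper-gradient}. This is where the specific structure of the three SOBA operators and the implicit function theorem enter; everything else follows by directly citing Lemma \ref{lma-asp-SOBA} and the MSSA theorems and discarding nonnegative terms.
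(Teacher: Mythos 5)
Your proposal is correct and matches the paper's (implicit) derivation exactly: the paper offers no separate proof of Corollary~\ref{cor:SOBA} precisely because it is intended as a direct specialization of Theorems~\ref{thm-MSSA-sm} and~\ref{thm-MSSA-nsm} with assumptions verified by Lemma~\ref{lma-asp-SOBA}, together with the identity $v(x,y^\diamond_{1:2}(x))=\nabla F(x)$, which the paper establishes in part~iv) of the proof of Lemma~\ref{lma-asp-SOBA} via \eqref{eq-hyper-gradient}. Your translation steps (identifying the $n=1$ fixed point with the lower-level solution $y_1^\ast(x)$, taking $\Phi=F$ in Assumption~\ref{asp:main-nsm}, and dropping the nonnegative $n=2$ terms) are exactly the bookkeeping the paper leaves to the reader.
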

\begin{remark}[Comparisons with the existing works]
    Our theoretical results in Corollary \ref{cor:SOBA} exactly match those in \cite{dagreou2022framework}, but we do not need to assume Lipschitz continuity of $\nabla^2f$ and $\nabla^3g$. Momentum acceleration has been recently combined with SOBA in \cite{chen2024optimal,li2022fully} to obtain the same convergence rates without assuming high-order Lipschitz continuity. Our analysis shows that momentum acceleration is not critical for such improvement.
\end{remark}

\section{Application to Communication-efficient Distributed Learning}\label{sec:compress}
In this section, we present the application of MSSA in com- munication-efficient distributed learning. Distributed learning enables collaboration among multiple nodes to train a common model with distributed data via communication with a server. It is often formulated as a stochastic optimization problem in the form of
\begin{align}\label{prob:DO}
    \min_{x\in\R^d}\ F(x):=\frac{1}{N}\sum_{n=1}^Nf_n(x),
\end{align}
where $N$ is the number of nodes, $f_n(x):=\E_{\zeta_n\sim\cD_n}[f(x;\zeta_n)]$ is the local loss function of node $n$, and $\zeta_n$ is a random variable following the local data distribution $\cD_n$.

% and become a fundamental tool for large-scale learning.
Nevertheless, as $N$ scales up, the communication between the nodes and the server becomes the bottleneck of distributed learning. This overhead is particularly remarkable when the trained model is high-dimensional. To address this issue, we can consider: (i) reduce the number of communication rounds and/or (ii) reduce the volume of transmitted messages per communication round. To reduce the number of communication rounds, popular approaches include local update that allows each node to perform multiple computation rounds within one communication round \cite{stich2019local,liconvergence}, and momentum acceleration \cite{polyak1964some} that reduces the number of iterations for reaching a target solution accuracy \cite{zheng2019communication,yu2019linear,liu2020accelerating}. On the other hand, to reduce the volume of transmitted messages per communication round, communication compression techniques, such as quantization \cite{alistarh2017qsgd,micikevicius2018mixed,magnusson2020maintaining} and sparsification \cite{stich2018sparsified,wangni2018gradient,safaryan2022uncertainty}, are widely used. These communication-compression techniques can be classified to unbiased ones \cite{mishchenko2019distributed,qian2021error,he2024unbiased} and biased but contractive ones \cite{he2023lower,subramaniam2024adaptive}. We focus on unbiased compressors, denoted as random operators $\cC$, satisfying both unbiasedness $\E[\cC(x)] = x$ and $\omega$-bounded variation $\E[\|\cC(x)-x\|^2]\leq\omega\|x\|^2$, where $x$ is the model to compress.

\subsection{Communication-compressed and Momentum-accelerated Distributed Learning}

We consider solving \eqref{prob:DO} in a communication-efficient manner, both reducing the number of communication rounds and reducing the volume of transmitted messages per communication round. To be specific, we consider a communication-compressed and momentum-accelerated distributed learning algorithm. At round $k$, each node $n$ calculates a stochastic gradient at the current model $x^k$, updates a momentum vector $y_n^k$ as \eqref{eq-CM-y}, and sends a compressed momentum vector $\cC_n(y_n^k)$ to the server. After receiving $\cC_n(y_n^k)$ from all nodes, the server updates the model as \eqref{eq-CM-x} and sends it to all nodes.

The communication-compressed and momentum-acceler-ated distributed learning algorithm is given by
\begin{subequations}\label{alg:CMomentum}
    \begin{align}
        y_n^{k+1}=&y_n^k-\beta_n^k(y_n^k-\nabla f_n(x^k;\zeta_n^k)),\quad\forall n\in[N],\label{eq-CM-y}\\
        x^{k+1}=&x^k-\alpha^k\frac{1}{N}\sum_{n=1}^N\cC_n(y_n^k),\label{eq-CM-x}
    \end{align}
\end{subequations}
in which $\cC_n(\cdot)$ is the compressor used in node $n$. The goal of the algorithm is to find an optimal point $x^\ast$ such that $\nabla F(x^\ast)=0$. On the other hand, the algorithm also falls into the MSSA framework of \eqref{MSSA}, which aims at finding stationary point $x^\ast,\,y_1^\ast,\,\dots,\,y_N^\ast$ satisfying
\begin{subequations}\label{eq:DO-mapp}
    \begin{align}
        h_n(x^\ast,y_{1:n}^\ast):=&y_n^\ast-\nabla f_n(x^\ast)=0,\quad\forall n\in[N],\\
        v(x^\ast,y_{1:N}^\ast):=&\frac{1}{N}\sum_{n=1}^Ny_n^\ast=0.
    \end{align}
\end{subequations}
Obviously, \eqref{eq:DO-mapp} implies $\nabla F(x^\ast)=0$.
In addition, the random noise comes from the stochastic gradients and compressors, formulated as
\begin{subequations}
    \begin{align}
        \psi_n^k:=&\nabla f_n(x^k;\zeta_n^k)-\nabla f_n(x^k),\\
        \xi^k:=&\frac{1}{N}\sum_{n=1}^N\left(\cC_n(y_n^k)-y_n^k\right).
    \end{align}
\end{subequations}
% which imply $\nabla F(x^\ast)=0$, namely, $x^\ast$ is an stationary point to problem \eqref{prob:DO}.

% Compressed Momentum falls into the MSSA framework of \eqref{MSSA}. The operators $v,h_1,\dots,h_n$ are shown in \eqref{eq:DO-mapp}, while the random noises at iteration $k$ are from stochastic gradient noise and compressors, formulated as
% \begin{subequations}
%     \begin{align}
%         \psi_n^k:=&\nabla f_n(x^k)-\nabla f_n(x^k;\zeta_n^k),\\
%         \xi^k:=&\frac{1}{N}\sum_{n=1}^N\left(\cC_n(y_n^k)-y_n^k\right).
%     \end{align}
% \end{subequations}

\subsection{Convergence Analysis}
Before applying our theoretical results to the communica-tion-compressed and momentum-accelerated distributed learning algorithm, we first verify the assumptions made for MSSA.
\begin{lemma}[Verifying assumptions of MSSA]\label{lma-asp-CM}
    Consider the following conditions:\\
    $(a)$ For any $n\in[N]$, $\nabla f_n(x)$ is $\ell_f$-Lipschitz continuous.\\
    $(b)$ For any $n\in[N]$ and any $x$, $\nabla f_n(x;\zeta)$ is an unbiased estimator of $\nabla f_n(x)$ with its variance bounded by $\sigma_f^2$.\\
    $(c)$ There exists positive constant $a,\,b$ such that
    \begin{align}
        \frac{1}{N}\sum_{n=1}^N\|\nabla f_n(x)\|^2\le a\|\nabla F(x)\|^2 + b^2,\quad \forall x\in\R^d.
    \end{align}
    $(d)$ For any $n\in[N]$, the compression operator $\cC_n$ satisfies:
    \begin{align}
        \E[\cC_n(x)]=x,\ \E[\|\cC_n(x)-x\|^2]\le\omega\|x\|^2,\quad\forall x\in\R^d.
    \end{align}
    $(e)$ $F(x)$ is $\mu$-strongly convex in $x$, where constant $\mu>0$.\\
    $(f)$ $F(x)$ is lower bounded by a constant $C_F$.\\
    {Under conditions $(a)$--$(e)$ (resp. conditions $(a)$--$(d)$ and $(f)$), the operators and noise involved with the communication-compressed and momentum-accelerated distributed learning algorithm \eqref{alg:CMomentum} satisfy Assumptions \ref{asp-sub-sm}--\ref{asp:main-sm} (resp. Assumptions \ref{asp-sub-sm}--\ref{asp-noise} and \ref{asp:main-nsm}).} In particular, $(a)$ $\Rightarrow$ Assumption \ref{asp-lc-sub}, $(b)$--$(d)$ $\Rightarrow$ Assumption \ref{asp-noise}, $(e)$ $\Rightarrow$ Assumption \ref{asp:main-sm}, $(f)$ $\Rightarrow$ Assumption \ref{asp:main-nsm}, Assumptions \ref{asp-sub-sm} and \ref{asp-lc-main} hold naturally.
\end{lemma}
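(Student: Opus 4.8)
The plan is to verify each of Assumptions \ref{asp-sub-sm}--\ref{asp:main-sm} (and the alternative Assumption \ref{asp:main-nsm}) directly from the explicit forms of the operators $h_n(x,y_{1:n})=y_n-\nabla f_n(x)$ and $v(x,y_{1:N})=\frac1N\sum_{n=1}^N y_n$ and the noise terms $\psi_n^k,\xi^k$. The first fact I would record, because it drives everything else, is the shape of the fixed points: solving $h_n=0$ gives $y_n^\ast(x,y_{1:n-1})=\nabla f_n(x)$, which is independent of $y_{1:n-1}$, so $y_n^\diamond(x)=\nabla f_n(x)$ and therefore $v(x,y^\diamond_{1:N}(x))=\frac1N\sum_{n}\nabla f_n(x)=\nabla F(x)$. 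Every assumption involving the main operator then reduces to a statement about $\nabla F$.

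Assumptions \ref{asp-sub-sm}, \ref{asp-lc-sub}, and \ref{asp-lc-main} are immediate. Since $h_n(x,y_{1:n-1},\hat y_n)-h_n(x,y_{1:n-1},\bar y_n)=\hat y_n-\bar y_n$, Assumption \ref{asp-sub-sm} holds with $\mu_n=1$. Under Condition $(a)$ of the lemma, $\nabla f_n$ is $\ell_f$-Lipschitz, and $h_n$ is $1$-Lipschitz in $y_n$ and independent of $y_{1:n-1}$, so Condition $(a)$ of Assumption \ref{asp-lc-sub} holds. For Assumption \ref{asp-lc-main} I would compute $v(\hat x,y^\diamond_{1:N}(\hat x))-v(\bar x,y_{1:N})=\frac1N\sum_n(\nabla f_n(\hat x)-y_n)=\frac1N\sum_n(y_n^\diamond(\hat x)-y_n)$, which gives the required bound with $\ell_0=1$ (the $\|\hat x-\bar x\|$ term is not even needed).

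The substance of the lemma is Assumption \ref{asp-noise}. The martingale-difference property and the bound on $\operatorname{Var}[\psi_n^k\mid\cF^k]$ are immediate from Conditions $(b)$ and $(d)$: unbiasedness gives $\E[\psi_n^k\mid\cF^k]=0$ and $\E[\xi^k\mid\cF^k]=0$, and $\operatorname{Var}[\psi_n^k\mid\cF^k]\le\sigma_f^2$ yields the first inequality in \eqref{eq-bound-var}. The delicate estimate is the conditional variance of $\xi^k=\frac1N\sum_n(\cC_n(y_n^k)-y_n^k)$. Using that the compressors are conditionally independent across nodes with zero mean, the cross terms vanish and $\operatorname{Var}[\xi^k\mid\cF^k]=\frac1{N^2}\sum_n\E[\|\cC_n(y_n^k)-y_n^k\|^2\mid\cF^k]\le\frac{\omega}{N^2}\sum_n\|y_n^k\|^2$ by Condition $(d)$. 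The obstacle is that $\|y_n^k\|^2$ is not among the quantities permitted on the right-hand side of \eqref{eq-bound-var}, so it must be re-expressed. Writing $y_n^k=h_n(x^k,y_{1:n}^k)+\nabla f_n(x^k)$ gives $\|y_n^k\|^2\le 2\|h_n\|^2+2\|\nabla f_n(x^k)\|^2$; Condition $(c)$ then converts $\frac1N\sum_n\|\nabla f_n(x^k)\|^2$ into $a\|\nabla F(x^k)\|^2+b^2$; and the identity $\nabla F(x^k)=v(x^k,y_{1:N}^k)-\frac1N\sum_n h_n(x^k,y_{1:n}^k)$ converts $\|\nabla F(x^k)\|^2$ into a multiple of $\|v\|^2+\frac1N\sum_n\|h_n\|^2$. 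Chaining these inequalities produces the second bound in \eqref{eq-bound-var} with explicit constants $\omega_1=\Theta(\omega a/N)$, $\omega_2=\Theta(\omega(1+a)/N^2)$, and $\sigma^2=\Theta(\omega b^2/N)$. This chain is the only place where the hypotheses interact nontrivially; making the gradient-heterogeneity bound $(c)$ absorb the $\|\nabla f_n\|^2$ terms into admissible quantities is the crux.

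Finally, Assumptions \ref{asp:main-sm} and \ref{asp:main-nsm} follow from $v(x,y^\diamond_{1:N}(x))=\nabla F(x)$. Under Condition $(e)$, $\mu$-strong convexity of $F$ gives a unique minimizer $x^\ast$ with $\nabla F(x^\ast)=0$ and $\la\nabla F(x),x-x^\ast\ra\ge\mu\|x-x^\ast\|^2$, so Assumption \ref{asp:main-sm} holds with $\mu_0=\mu$. Under Condition $(f)$, taking $\Phi=F$ gives $\nabla\Phi=v(\cdot,y^\diamond_{1:N}(\cdot))$ and lower boundedness by $C_F$, verifying Assumption \ref{asp:main-nsm}.
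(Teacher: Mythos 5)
Your proposal is correct and follows essentially the same route as the paper's own proof: you verify Assumptions \ref{asp-sub-sm}--\ref{asp-lc-main} directly from the explicit forms $h_n(x,y_{1:n})=y_n-\nabla f_n(x)$ and $v(x,y_{1:N})=\frac{1}{N}\sum_{n=1}^N y_n$, reduce Assumptions \ref{asp:main-sm} and \ref{asp:main-nsm} to statements about $\nabla F$ via $v(x,y^\diamond_{1:N}(x))=\nabla F(x)$, and prove the variance bound on $\xi^k$ by the chain $\|y_n^k\|^2\le 2\|h_n(x^k,y_{1:n}^k)\|^2+2\|\nabla f_n(x^k)\|^2$, condition $(c)$, and the identity $\nabla F(x^k)=v(x^k,y_{1:N}^k)-\frac{1}{N}\sum_{n=1}^N h_n(x^k,y_{1:n}^k)$. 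This is exactly the paper's argument; your version of the identity is in fact the correct one (the paper's displayed relation carries a sign typo, writing $v+\frac{1}{N}\sum_n h_n$ where it computes $v-\frac{1}{N}\sum_n h_n$).

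One step deserves a caveat. You claim $\operatorname{Var}[\xi^k|\cF^k]=\frac{1}{N^2}\sum_{n=1}^N\E[\|\cC_n(y_n^k)-y_n^k\|^2|\cF^k]$ on the grounds that cross terms vanish by conditional independence of the compressors across nodes. Condition $(d)$ as stated only gives unbiasedness and $\omega$-bounded variation of each $\cC_n$ individually; it asserts nothing about independence (or even uncorrelatedness) between nodes, so this equality is not licensed by the lemma's hypotheses, and it would fail if, say, all nodes shared a common random seed. The paper sidesteps this entirely by convexity of $\|\cdot\|^2$: $\E[\|\frac{1}{N}\sum_{n=1}^N(\cC_n(y_n^k)-y_n^k)\|^2|\cF^k]\le\frac{1}{N}\sum_{n=1}^N\E[\|\cC_n(y_n^k)-y_n^k\|^2|\cF^k]$, which requires no independence and only costs a factor of $N$ in the resulting constants $\omega_1,\omega_2,\sigma^2$ --- harmless, since these enter Theorems \ref{thm-MSSA-sm} and \ref{thm-MSSA-nsm} only as $\cO(1)$ quantities. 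With that one-line repair (or by explicitly adding cross-node independence as a hypothesis, as is standard in this application), your proof is complete.
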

The conditions stated in Lemma \ref{lma-asp-CM} are common in analyzing algorithms with communication compression \cite{stich2018sparsified,mishchenko2019distributed,he2024unbiased}.
According to Theorems \ref{thm-MSSA-sm} and \ref{thm-MSSA-nsm}, we can establish the convergence for the communication-compressed and momentum-accelerated distributed learning algorithm as follows.
\begin{corollary}
    Consider the sequences generated by the communication-compressed and momentum-accelerated distributed learning algorithm \eqref{alg:CMomentum} for $k\in[K]$. Under Conditions $(a)$--$(e)$ in Lemma \ref{lma-asp-CM}, if selecting the step sizes as $\alpha^k$ $=\Theta\left(\frac{\log K}{K}\right)$ and $\beta_n^k=\Theta\left(\frac{\log K}{K}\right)$ for any $n\in[N]$, it holds that for any $n\in[N]$,
    \begin{subequations}
        \begin{align}
            &\E[\|x^K-x^\ast\|^2]=\Tilde\cO\left(\frac{1}{K}\right),\\
            &\E[\|y_n^K-\nabla f_n(x^K)\|^2]=\Tilde\cO\left(\frac{1}{K}\right),\ \forall n\in[N].
        \end{align}
    \end{subequations}
    Alternatively, under Conditions $(a)$--$(d)$ and $(f)$ in Lemma \ref{lma-asp-CM}, if selecting the step sizes as $\alpha^k=\Theta\left(\frac{1}{\sqrt{K}}\right)$ and $\beta_n^k=\Theta\left(\frac{1}{\sqrt{K}}\right)$ for any $n\in[N]$, then it holds that for any $n\in[N]$,
    \begin{subequations}
        \begin{align}
            &\frac{1}{K}\sum_{k=0}^{K-1}\E[\|\nabla F(x^k)\|^2]=\cO\left(\frac{1}{\sqrt{K}}\right),\\
            &\frac{1}{K}\sum_{k=0}^{K-1}\E[\|y_n^k-\nabla f_n(x^k)\|^2]=\cO\left(\frac{1}{\sqrt{K}}\right).
        \end{align}
    \end{subequations}
\end{corollary}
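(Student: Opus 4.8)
The plan is to derive this corollary as a direct specialization of the two master theorems, Theorems \ref{thm-MSSA-sm} and \ref{thm-MSSA-nsm}, using the verification Lemma \ref{lma-asp-CM} to certify the abstract assumptions and then translating the generic MSSA metrics into the concrete quantities in the statement. The essential preparatory step is to identify the fixed points induced by the secondary operators in \eqref{eq:DO-mapp}. Since $h_n(x,y_{1:n})=y_n-\nabla f_n(x)$ depends only on $x$ and $y_n$ and is affine in $y_n$, solving $h_n(x,y_{1:n-1},y_n^\ast)=0$ gives $y_n^\ast(x,y_{1:n-1})=\nabla f_n(x)$, which does not depend on $y_{1:n-1}$. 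Hence the nested fixed points of Remark \ref{rmk-lip-fp} collapse to $y_n^\diamond(x)=\nabla f_n(x)$ for all $n\in[N]$, and the main operator restricted to the fixed-point manifold becomes $v(x,y^\diamond_{1:N}(x))=\tfrac{1}{N}\sum_{n=1}^N\nabla f_n(x)=\nabla F(x)$.

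For the strongly convex case, Lemma \ref{lma-asp-CM} guarantees that Conditions $(a)$--$(e)$ imply Assumptions \ref{asp-sub-sm}--\ref{asp:main-sm}, so I would invoke Theorem \ref{thm-MSSA-sm} with the prescribed $\alpha^k=\Theta(\log K/K)$ and $\beta_n^k=\Theta(\log K/K)$ to obtain $\E\|x^K-x^\ast\|^2+\sum_{n=1}^N\E\|y_n^K-y_n^\ast(x^K,y_{1:n-1}^K)\|^2=\Tilde{\cO}(1/K)$. Substituting $y_n^\ast(x^K,y_{1:n-1}^K)=\nabla f_n(x^K)$ and using the nonnegativity of each summand to bound it by the whole sum then yields $\E\|x^K-x^\ast\|^2=\Tilde{\cO}(1/K)$ and $\E\|y_n^K-\nabla f_n(x^K)\|^2=\Tilde{\cO}(1/K)$ for every $n\in[N]$, which are exactly the two claimed bounds.

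For the non-strongly convex case, Lemma \ref{lma-asp-CM} guarantees that Conditions $(a)$--$(d)$ and $(f)$ imply Assumptions \ref{asp-sub-sm}--\ref{asp-noise} and \ref{asp:main-nsm}, where the primitive function may be taken as $\Phi=F$ (lower-bounded by $(f)$), consistent with $\nabla\Phi(x)=v(x,y^\diamond_{1:N}(x))=\nabla F(x)$ from the first paragraph. Applying Theorem \ref{thm-MSSA-nsm} with $\alpha^k,\beta_n^k=\Theta(1/\sqrt{K})$ produces $\tfrac{1}{K}\sum_k\E\|v(x^k,y^\diamond_{1:N}(x^k))\|^2=\cO(1/\sqrt{K})$ and $\tfrac{1}{K}\sum_k\sum_n\E\|y_n^k-y_n^\ast(x^k,y_{1:n-1}^k)\|^2=\cO(1/\sqrt{K})$. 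Replacing $v(x^k,y^\diamond_{1:N}(x^k))$ by $\nabla F(x^k)$ and $y_n^\ast(x^k,y_{1:n-1}^k)$ by $\nabla f_n(x^k)$, and absorbing the immaterial shift of the summation index, gives the two stated averaged bounds.

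I expect no substantial obstacle: once the assumptions are verified by Lemma \ref{lma-asp-CM}, the remaining work is a routine specialization of the general theorems. The only point demanding genuine care is confirming that the secondary operators decouple, i.e., that each $y_n^\ast$ is independent of $y_{1:n-1}$, because this is precisely what permits the nested fixed point $y_n^\diamond(x)$ to reduce to $\nabla f_n(x)$ and thereby identifies $v(x,y^\diamond_{1:N}(x))$ with the true gradient $\nabla F(x)$; all subsequent identifications of the convergence metrics follow immediately from this structure.
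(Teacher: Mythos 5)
Your proposal is correct and matches the paper's (implicit) proof: the paper likewise obtains this corollary by invoking Lemma \ref{lma-asp-CM} to certify Assumptions \ref{asp-sub-sm}--\ref{asp:main-sm} (resp.\ \ref{asp-sub-sm}--\ref{asp-noise} and \ref{asp:main-nsm}) and then specializing Theorems \ref{thm-MSSA-sm} and \ref{thm-MSSA-nsm}, with the identifications $y_n^\ast(x,y_{1:n-1})=\nabla f_n(x)$ and $v(x,y^\diamond_{1:N}(x))=\nabla F(x)$ established in the proof of Lemma \ref{lma-asp-CM}. Your explicit observation that each $h_n$ depends only on $(x,y_n)$, so the nested fixed points decouple, is exactly the structural fact the paper relies on.
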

The above convergence rates match the ones of the existing distributed algorithms with unbiased compressors \cite{mishchenko2019distributed,stich2020communication,he2023lower}. It is worth noting that to our knowledge, there is no theoretical analysis for stochastic momentum acceleration with an unbiased compressor before. The algorithm most similar to ours is DINIA proposed in \cite{mishchenko2019distributed}. However, DINIA compresses the difference between the stochastic gradient with the local gradient estimator on each node, and then performs momentum acceleration at the server. In contrast, our algorithm applies unbiased compression to the local momentum vector at each nodes. Besides, DINIA requires nearly twofold memory storage (the model, the momentum vector and the compressed difference) at the server compared to ours (the model and the compressed momentum vector).

\section{Numerical Experiments}
To validate the theoretical findings, we conduct numerical experiments on both bilevel optimization and communication-efficient distributed learning.

\subsection{Bilevel Optimization}
First, we consider a data hyper-cleaning task, which can be formulated as a bilevel optimization problem \cite{ji2021bilevel,franceschi2017forward}. Data hyper-cleaning aims to train a classifier on a polluted training dataset, with the aid of a clean validation dataset. To do so,
every training sample is assigned a variable to represent its probability of being uncontaminated and the cost function on the training dataset is defined as
\begin{gather}
    L(\lambda,\omega):=\frac{1}{n_{tr}}\sum_{i=1}^{n_{tr}}\sigma(\lambda_i)\ell(\omega;x_{tr}^{(i)},y_{tr}^{(i)})+\frac{\mu}{2}\|\omega\|^2.
\end{gather}
Therein, $\ell(\cdot)$ is the loss, $\omega$ is the classifier to train, $x_{tr}^{(i)}$ and $y_{tr}^{(i)}$ are respectively the $i$-th sample's feature and label in the training dataset, $n_{tr}$ stands for the number of samples in the training dataset, $\mu$ is the regularization parameter, $\lambda_i$ is the $i$-th element of the hyperparameter $\lambda$, $\sigma(\cdot)$ is the sigmoid function, and $\sigma(\lambda_i)$ represents the probability of the $i$-th sample being uncontaminated.
Then, the data hyper-cleaning task can be formulated as a bilevel optimization problem in the form of
\begin{subequations}
    \begin{align}\label{HC:validation loss}
        \min_{\lambda\in\R^{n_{tr}},\omega^\ast} & ~\frac{1}{n_{val}}\sum_{i=1}^{n_{val}}\ell(\omega^\ast;x_{val}^{(i)},y_{val}^{(i)}),\\
        s.t.~ & ~ \omega^\ast=\arg\min_{\omega}~L(\lambda,\omega),
    \end{align}
\end{subequations}
where $n_{val}$ is the number of samples in the validation dataset, while $x_{val}^{(i)}$ and $y_{val}^{(i)}$ are respectively the $i$-th sample's feature and label in the validation dataset.

In the numerical experiments, we compare three baselines, two-timescale SOBA (TT-SOBA), MA-SOBA \cite{chen2024optimal} and FSLA \cite{li2022fully}, with single-timescale SOBA (ST-SOBA) \cite{dagreou2022framework} in Section \ref{sec:BO} on the MNIST dataset. TT-SOBA uses two-timescale step sizes within the SOBA updates, whereas both MA-SOBA and FSLA combine momentum acceleration with the SOBA update of the upper-level variable $x$. The step sizes in ST-SOBA \eqref{alg:SOBA} are all set as $\cO(k^{-\frac{1}{2}})$, while $\alpha^k$ is set as $\cO(k^{-\frac{3}{5}})$, $\beta_{1}^k$ and $\beta_{2}^k$ are set as $\cO(k^{-\frac{2}{5}})$ in TT-SOBA. As for MA-SOBA and FSLA, the step sizes are chosen as the same as those in ST-SOBA, and the momentum coefficient is set as $\cO(k^{-\frac{1}{2}})$, consistent with the theoretical setting in \cite{chen2024optimal,li2022fully}.

We randomly split 30,000 training samples in the MNIST dataset into training, validation and test datasets. Every dataset has 10,000 samples, and $40\%$ labels in the training samples are randomly changed. The batch size is $100$.

As shown in Figure \ref{fig:HC}, for both linear regression and two-layer MLP (multi-layer perception) losses, ST-SOBA converges as fast as MA-SOBA and FSLA, and outperforms TT-SOBA. With particular note, when using the MLP losses, the fixed point of SOBA is Lipschitz continuous but not Lipschitz smooth. Table \ref{tab:ACC} shows that after 5,000 iterations, ST-SOBA, MA-SOBA and FSLA all have similar solution qualities, and are superior to TT-SOBA. These observations corroborate our theoretical results in Section \ref{sec:BO}.

% \begin{figure}[tb]
%     \centering
%     % \subfigcapskip=0pt
%     \subfloat[Linear Regression]{\includegraphics[width=40mm]{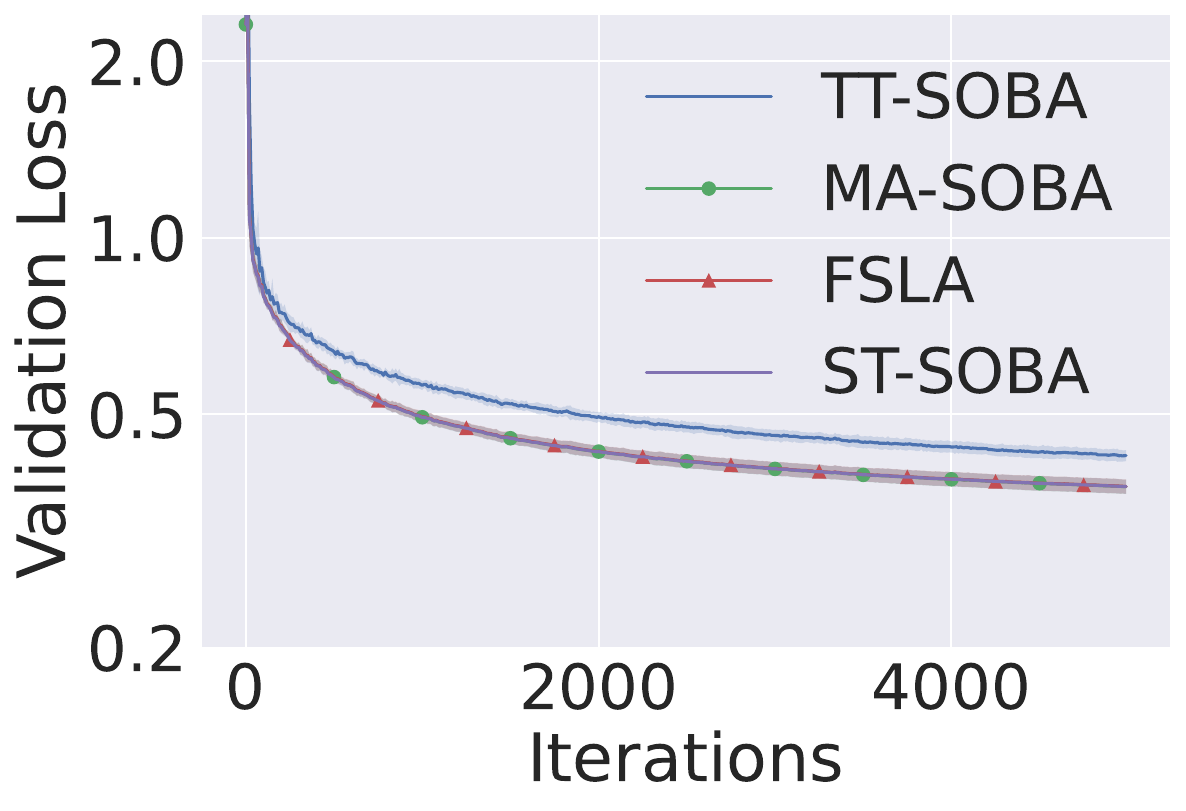}}
%     \subfloat[Two-layer MLP]{\includegraphics[width=40mm]{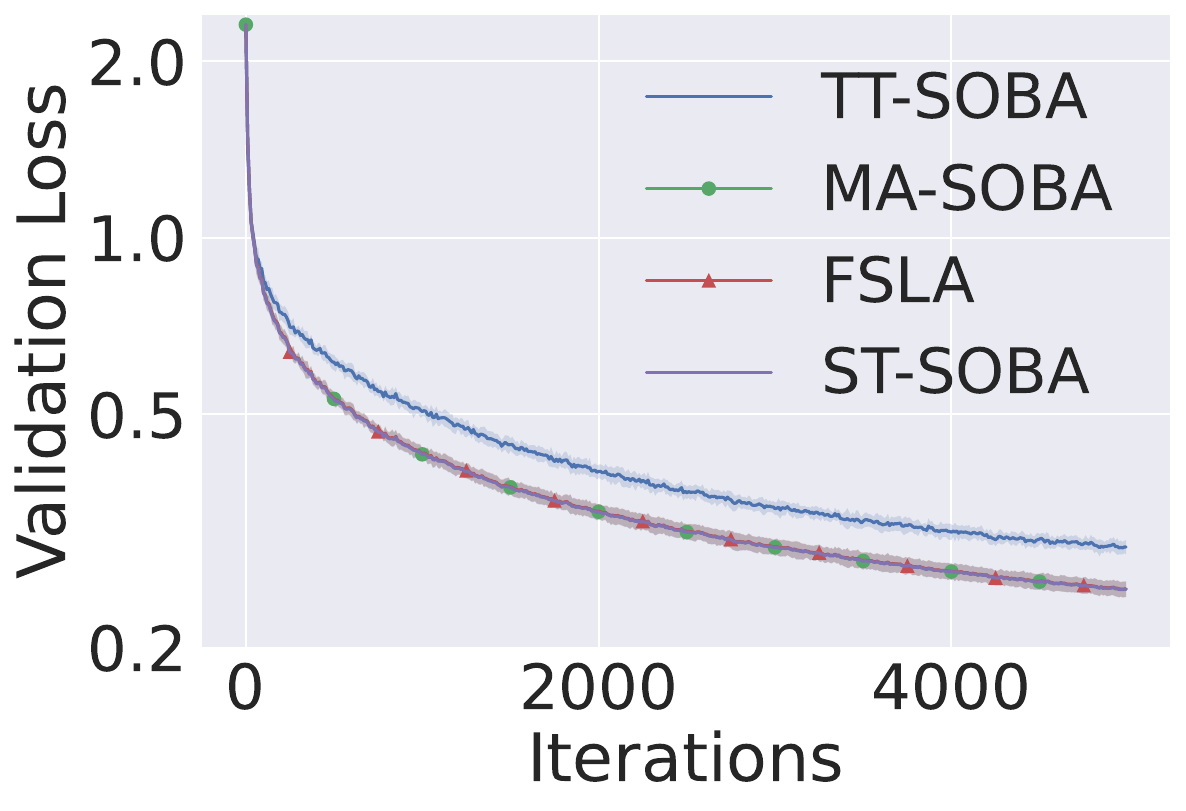}}
%     \caption{Comparisons on validation loss with different classifiers, averaged over 10 runs. }\label{fig:HC}
% \end{figure}
\begin{figure}[thb]
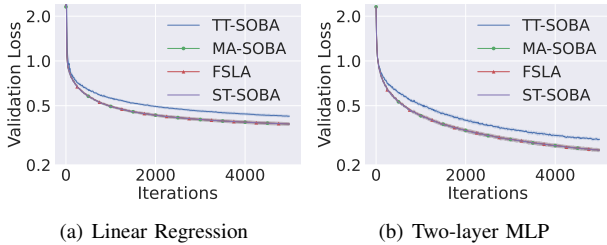

    \centering
    % \subfigcapskip=0pt
    \subfigure[Linear Regression]{\includegraphics[width=40mm]{fig/Hyper-Cleaning-f-linear.pdf}}
    \subfigure[Two-layer MLP]{\includegraphics[width=40mm]{fig/Hyper-Cleaning-f-MLP.pdf}}
    \caption{Validation loss with different classifiers, averaged over 10 runs. }\label{fig:HC}
\end{figure}
\begin{table}[htb]
 \vspace{-1em}
 \small
 \centering
 \caption{Solution quality in terms of accuracy. Acc1: Accuracy of trained classifier on test dataset. Acc2: Accuracy of identifying contaminated samples.}
\label{tab:ACC}
\begin{tabular}{ |c||p{1cm}<{\centering}|p{1cm}<{\centering}|p{1cm}<{\centering}|p{1cm}<{\centering}| }
 \hline
 \multirow{2}{*}{} & \multicolumn{2}{c|}{Linear Regression} & \multicolumn{2}{c|}{Two-layer MLP} \\
  \cline{2-5}
  & Acc1 & Acc2 & Acc1 & Acc2 \\
 \hline
 ST-SOBA & $90.37$ & $91.84$ & $93.38$ & $92.21$ \\
 \hline
 TT-SOBA & $90.13$ & $91.43$ & $93.38$ & $91.83$ \\
 \hline
 MA-SOBA & $90.37$ & $91.84$ & $93.36$ & $92.21$ \\
 \hline
 FSLA & $90.36$ & $91.84$ & $93.37$ & $92.21$\\
 \hline
\end{tabular}
% \vspace{-0.1cm}
% \vspace{-1.8em}
\end{table}
% \vspace{-0.5em}

\subsection{Communication-efficient Distributed Learning}
Second, we evaluate the performance of the communication-compressed and momentum-accelerated distributed learning algorithm. We consider an unbiased sparsification compressor $\cC_p(\cdot)$ with compression rate $p\in(0,1]$. Taking a vector $x\in\R^d$ as the input, the $i$-th dimension of the output $\cC_p(x)$, denoted as $(\cC_p(x))^{(i)}$, is given by
$$
(\cC_p(x))^{(i)}:=\left\{
\begin{aligned}
    &\frac{x^{(i)}}{p},&\quad \text{with probability }p,\\
    &0,&\quad \text{otherwise}.
\end{aligned}
\right.
$$
\subsubsection{Distributed $\ell_2$-Support Vector Machine (SVM)}
Consider a distributed $\ell_2$-SVM problem, which can be formulated to \eqref{prob:DO} with the local loss function at node $n$ being
\begin{align}
    f_n(w,b):=&\frac{1}{M_n}\sum_{m=1}^{M_n}\left(\max\{1-(s_{n}^{(m)T}w+b)t_{n}^{(m)},\,0\}\right)^2\notag\\
    &+\frac{\lambda}{2}\|w\|^2,
\end{align}
where $w\in\R^d$ and $b\in\R$ are model parameters, $s_{n}^{(m)}\in\R^d$ and $t_{n}^{(m)}\in\{1,\,-1\}$ are respectively the $m$-th sample's feature and label in the local training dataset $\cD_n$, $M_n$ is the number of samples in $\cD_n$ and $\lambda>0$ is a regularized parameter. The loss function of $\ell_2$-SVM is strongly convex. Note that for each $n$, $\nabla f_n$ is not Lipschitz smooth; that is, the fixed point is not Lipschitz smooth. To be precise, $\nabla^2f_n$ is undefined on the set $\bigcup_{m=1}^{M_n}S_n^m$ where $S_n^m:=\{(w,\,b)\,|\,1-(s_{n}^{(m)T}w+b)t_{n}^{(m)}=0\}$. Thus, when applying \eqref{alg:CMomentum} to solve this problem, the Lipschitz smoothness assumption on the fixed point does not hold.

In the numerical experiments, the training dataset is generated as follows. First, we independently generate each sample's feature $s_{n}^{(m)}$ from $\cN(0,\,\text{Diag}(v)^2)$, in which each dimension of vector $v\in\R^d$ is generated from $\text{Uniform}([0,1])$. Second, we generate $(w^\ast,\,b^\ast)\in\R^{d+1}$ from $\text{Uniform}([-0.5,0.5]^{d+1})$ and generate the corresponding label $t_n^{(m)}$ for $s_{n}^{(m)}$ by $t_n^{(m)}=\sign(s_{n}^{(m)T}w^\ast+b^\ast+0.2r_n^{m})$ with $r_n^{(m)}\sim\cN(0,\,1)$.
We set $d$ $=200$, $N=10$ and $\lambda=0.5$.

We apply the communication-compressed and momentum-accelerated distributed learning algorithm to solve this problem, with batch size $10$ and step size $\alpha^k=\cO(k^{-1})$, $\beta_n^k=\cO(k^{-1})$. As shown in Figure \ref{fig:L2SVM}, the proposed algorithm is able to converge at a rate of $\cO(k^{-1})$, which matches our theoretical results in Section \ref{sec:compress}.
\begin{figure}[thb]
    \centering
    % \subfigcapskip=0pt
    \includegraphics[width=40mm]{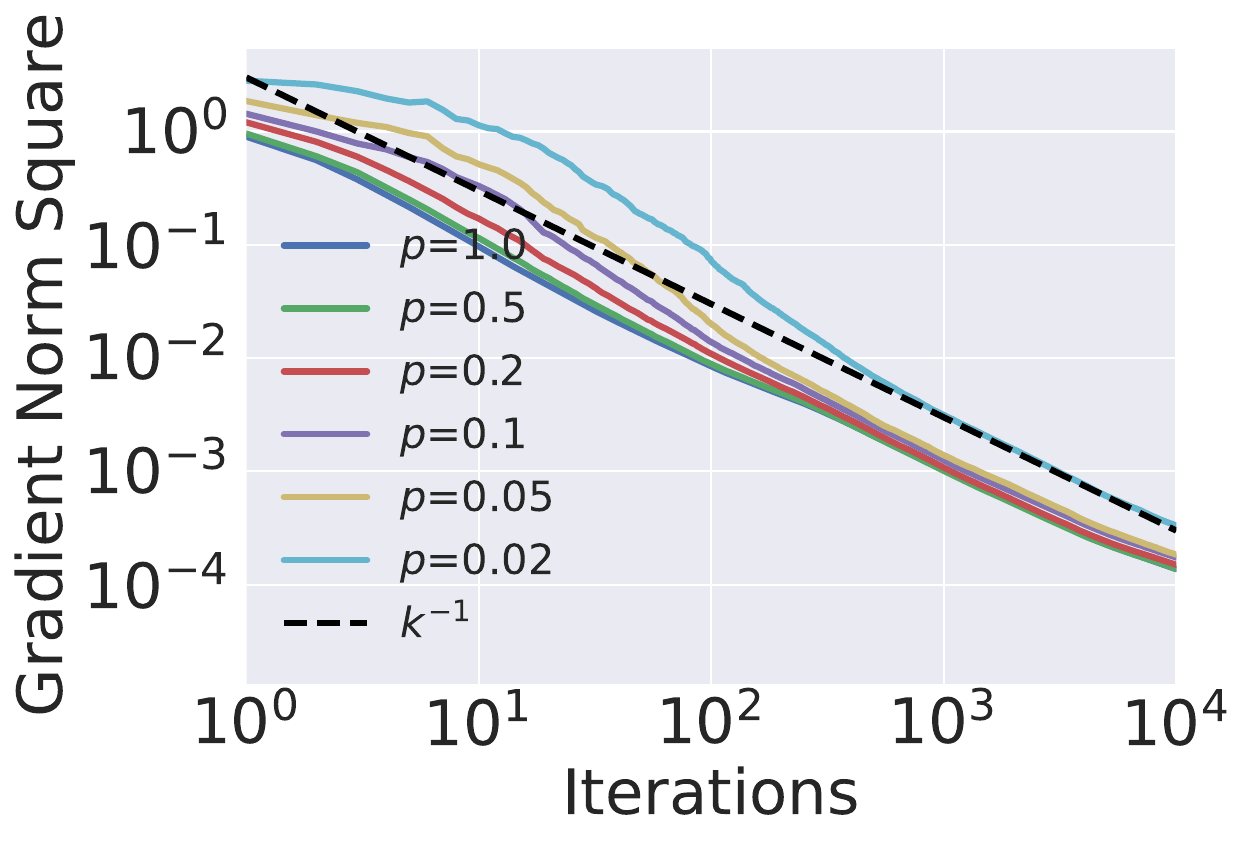}
    \includegraphics[width=40mm]{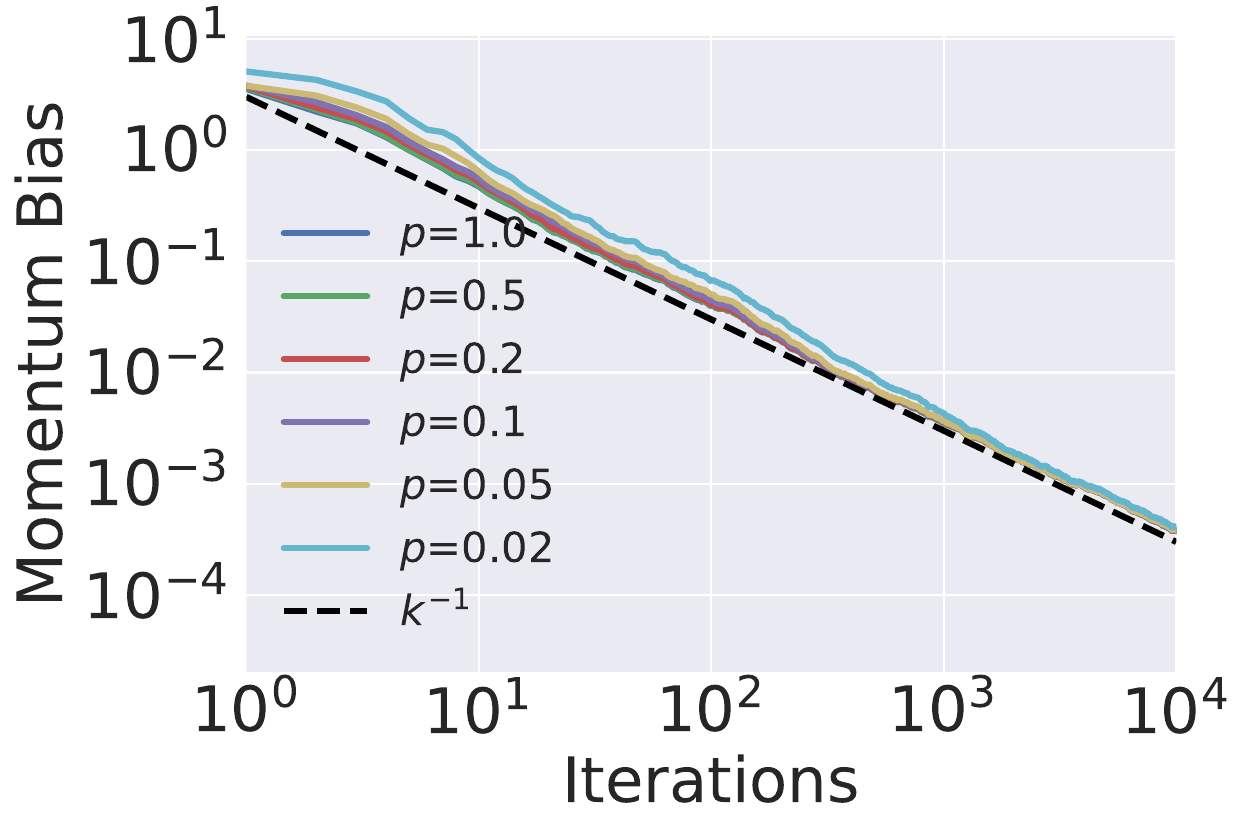}
    \caption{$\ell_2$-SVM with different compression rates $p$. Gradient Norm Squared: $\|\frac{1}{N}\sum_{n=1}^N\nabla f_n(x^k)\|^2$. Momentum Bias: $\frac{1}{N}\sum_{n=1}^N\|y_n^k-\nabla f_n(x^k)\|^2$. }\label{fig:L2SVM}
\end{figure}

\subsubsection{Classification on Fashion MNIST}
\begin{figure}[htb]
    \centering
    % \subfigcapskip=0pt
    \includegraphics[width=0.4\textwidth]{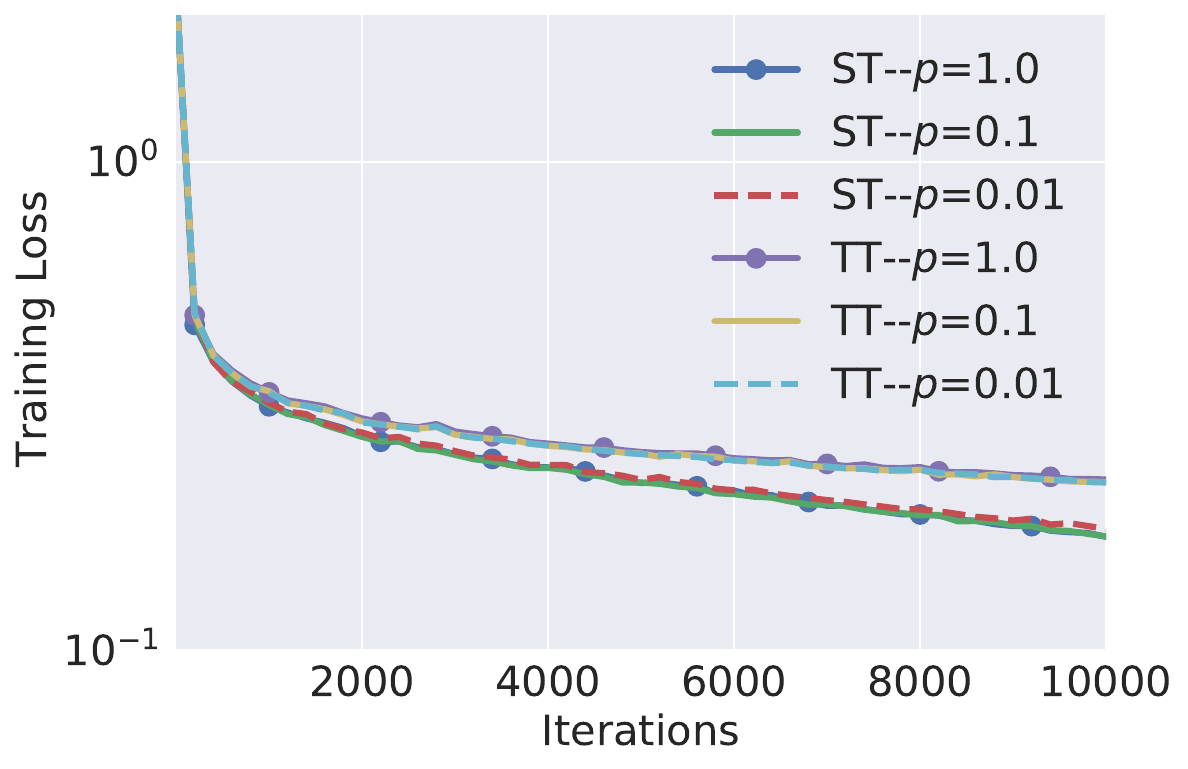}
    \centering
    \caption{Training loss with different compression rates $p$. ST and TT represent single-timescale and two-timescale step sizes, respectively.}\label{fig:FM-TL}
\end{figure}
% \begin{figure*}[htb]
%     \centering
%     % \subfigcapskip=0pt
%     \includegraphics[width=45mm]{fig/ST-FM-TL.pdf}
%     \includegraphics[width=45mm]{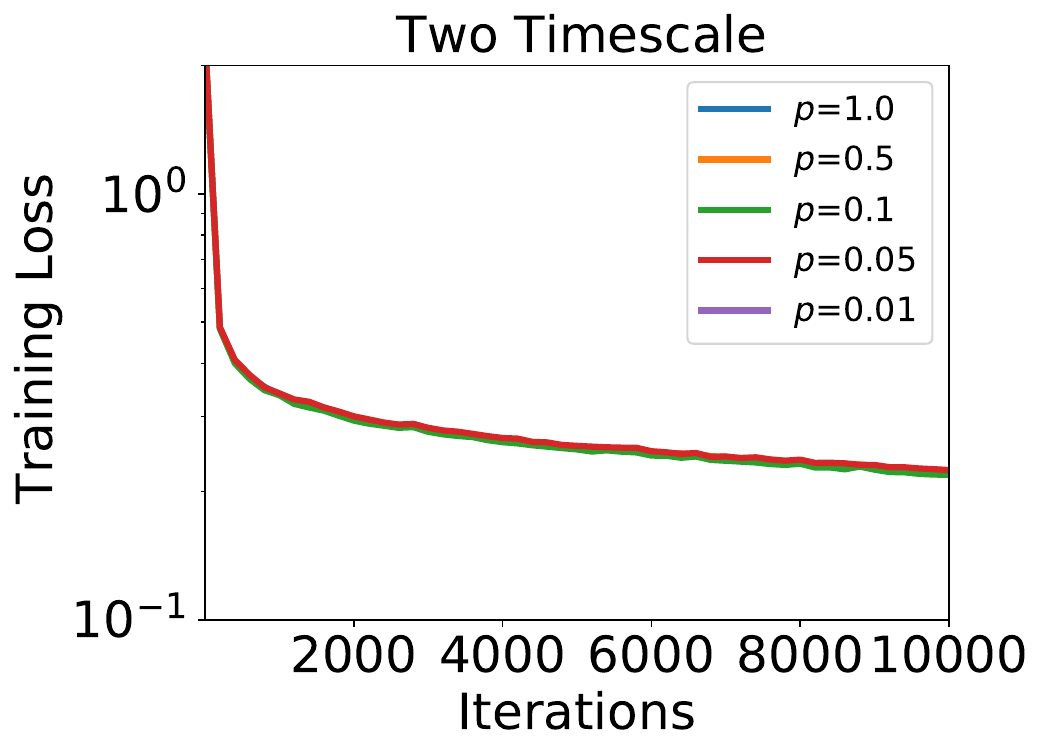}
%     \includegraphics[width=42mm]{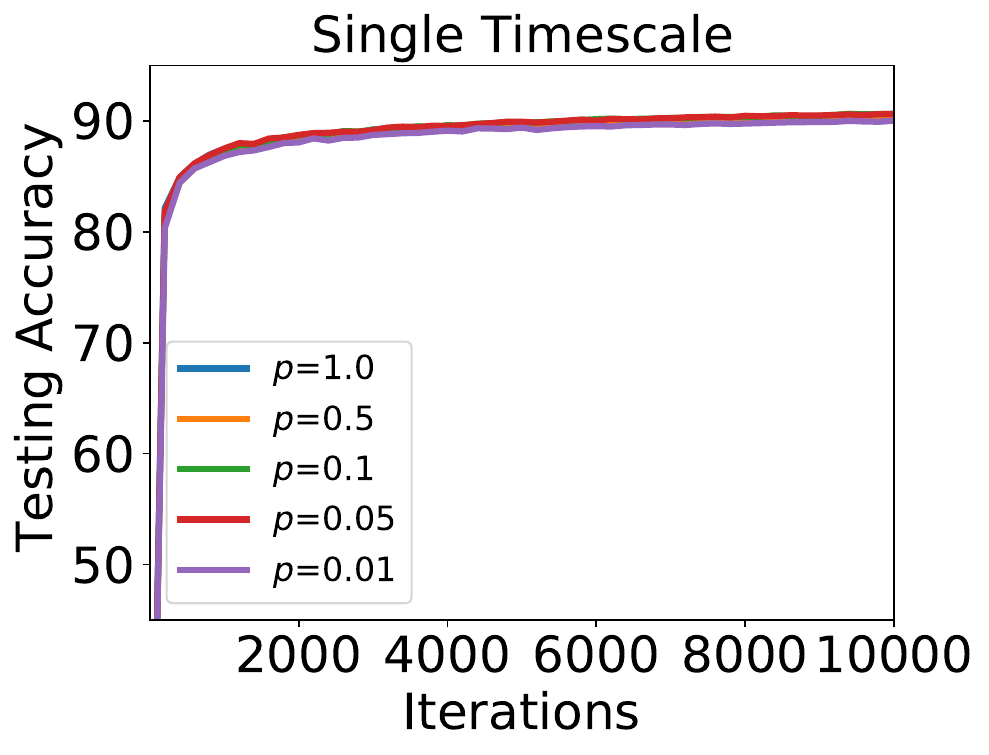}
%     \includegraphics[width=42mm]{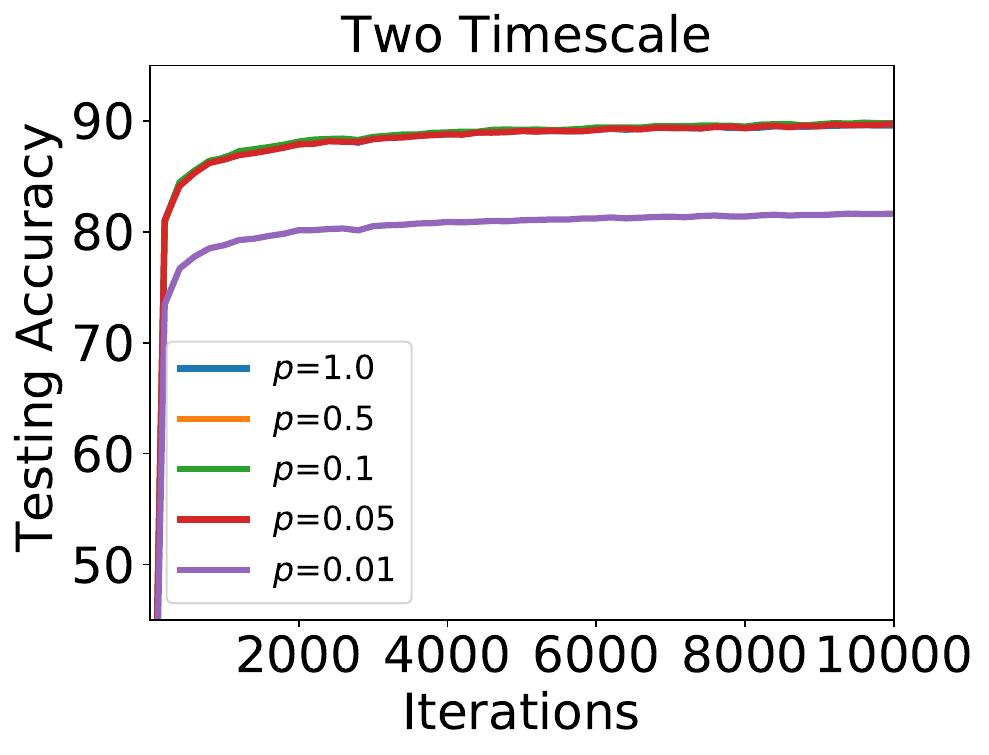}
%     \centering
%     \caption{Compressed Momentum with different timescale setting.}\label{fig:FM}
% \end{figure*}
We also test the communication-compressed and momentum-accelerated distributed learning algorithm by training a convolutional neural network (CNN) on the Fashion MNIST dataset. In the numerical experiments, we set the number of nodes $N=10$ and randomly distribute the 60,000 Fashion MNIST training samples into 10 nodes. We train the CNN model by the proposed algorithm, with different compression rates and different timescales. For the single-timescale (ST) algorithm, we set the step size $\alpha^k=\cO(k^{-\frac{1}{2}})$ and the momentum parameter $\beta_n^k=\cO(k^{-\frac{1}{2}})$. For the two-timescale algorithm, we set $\alpha^k=\cO(k^{-\frac{3}{5}})$ and $\beta_n^k=\cO(k^{-\frac{2}{5}})$.
Fig \ref{fig:FM-TL} reports the loss function on the training dataset, while TABLE \ref{tab:CM-ACC} presents the accuracy on the testing dataset. The results show that the ST algorithm outperforms the TT algorithm, and is consistently well when the compression rate $p$ decreases.

\begin{table}[htb]
 \vspace{-1em}
 \small
 \centering
 \caption{Test accuracies of single-timescale and two-timescale step sizes.}
\label{tab:CM-ACC}
\begin{tabular}{ |c||p{1cm}<{\centering}|p{1cm}<{\centering}|p{1cm}<{\centering}|p{1cm}<{\centering}| }
 \hline
 $p$ & $1.0$ & $0.1$& $0.01$ \\
 \hline
 Single-timescale & $90.44$ & $90.62$ & $90.13$ \\
 \hline
 Two-timescale & $89.53$ & $89.75$ & $89.67$  \\
 \hline
\end{tabular}
% \vspace{-0.1cm}
% \vspace{-1.8em}
\end{table}

\section{Conclusions}
This paper provides a single-timescale analysis for MSSA without assuming the smoothness of the fixed points. We show that when all involved operators are strongly monotone, MSSA converges at a rate of $\Tilde{\cO}(K^{-1})$. Under a weaker assumption that all involved operators are strongly monotone except for the main one, MSSA converges at a rate of $O(K^{-\frac{1}{2}})$. We apply these theoretical results to the bilevel optimization algorithm SOBA, removing the high-order Lipschitz continuity assumptions but establishing the same convergence rates. We also show that the MSSA analytical framework can be applied to communication-efficient distributed learning. Numerical experiments verify our theoretical results.

\appendices
\section{Proof of Lemma \ref{lma:Lip-fp}}
(i) The case with condition (a) in Assumption \ref{asp-lc-sub}:
For any $n\in[N]$, given $x$ and $y_{1:n-1}$, consider the following operator $T_n:\R^{d_n}\to\R^{d_n}$:
\begin{align}
    T(y):=y-\eta_nh_n(x,\,y_{1:n-1},\,y),\quad\forall y\in\R^{d_n}.
\end{align}
where $\eta_n=\frac{\mu_n}{\ell_n^2}$
According to Assumption \ref{asp-sub-sm} and condition (a) in Assumption \ref{asp-lc-sub}, we have
\begin{align}
    &\|T(y)-T(\hat{y})\|^2\notag\\
    =&\|y-\eta_nh_n(x,\,y_{1:n-1},\,y)-\hat{y}+\eta_nh_n(x,\,y_{1:n-1},\,\hat{y})\|^2\notag\\
    =&\|y-\hat{y}\|^2+\eta_n^2\|h_n(x,\,y_{1:n-1},\,y)-h_n(x,\,y_{1:n-1},\,\hat{y})\|^2\notag\\
    &-2\eta_n\langle h_n(x,\,y_{1:n-1},\,y)-h_n(x,\,y_{1:n-1},\,\hat y),\,y-\hat y\rangle\notag\\
    \le&(1+\ell_n^2\eta^2-2\mu_n\eta_n)\|y-\hat{y}\|^2.
\end{align}
Since $1+\ell_n^2\eta^2-2\mu_n\eta_n<1$, $T_n$ is a contractive operator on $\R^{d_n}$, which implies existence and uniqueness of fixed point of $T$. In other words, there exists an unique point $y_n^\ast(x,\,y_{1:n-1})$ satisfying $T_n(y_n^\ast(x,\,y_{1:n-1}))=y_n^\ast(x,\,y_{1:n-1}),$
which leads to
\begin{align}
    h(x,\,y_{1:n-1},\,y^\ast(x,\,y_{1:n-1}))=0.
\end{align}

Given any $x,\hat x, y_{1:n-1}, \hat y_{1:n-1}$, for simplicity, we use $y_n^\ast$ to denote $y^\ast_n(x,\,y_{1:n-1})$, and $\hat y^\ast_n$ to denote $y_n^\ast(\hat x,\,\hat y_{1:n-1})$. Then we have
\begin{align}\label{lma1-eq1}
    &\langle h(x,\,y_{1:n-1},\,\hat y^\ast_n)-h(\hat x,\,\hat y_{1:n-1},\,y^\ast_n),\hat y^\ast_n-y^\ast_n\rangle\notag\\
    =&\langle h(x,\,y_{1:n-1},\,\hat y^\ast_n)-h(x,\,y_{1:n-1},\,y^\ast_n),\hat y^\ast_n-y^\ast_n\rangle\notag\\
    &+\langle h(\hat x,\,\hat y_{1:n-1},\,\hat y^\ast_n)-h(\hat x,\,\hat y_{1:n-1},\,y^\ast_n),\hat y^\ast_n-y^\ast_n\rangle\notag\\
    \ge&\mu_n\|\hat y^\ast_n-y^\ast_n\|^2+\mu_n\|\hat y^\ast_n-y^\ast_n\|^2,
\end{align}
where the last inequality is from Assumption \ref{asp-sub-sm}, and
\begin{align}\label{lma1-eq2}
    &\langle h(x,\,y_{1:n-1},\,\hat y^\ast_n)-h(\hat x,\,\hat y_{1:n-1},\,y^\ast_n),\hat y^\ast_n-y^\ast_n\rangle\notag\\
    =&\langle h(x,\,y_{1:n-1},\,\hat y^\ast_n),\hat y^\ast_n-y^\ast_n\rangle+\langle -h(\hat x,\,\hat y_{1:n-1},\,y^\ast_n),\hat y^\ast_n-y^\ast_n\rangle\notag\\
    \le&\|h(x,\,y_{1:n-1},\,\hat y^\ast_n)\|\|\hat y^\ast_n-y^\ast_n\|\!+\!\|h(\hat x,\,\hat y_{1:n-1},\,y^\ast_n)\|\|\hat y^\ast_n-y^\ast_n\|\notag\\
    =&\|h(x,\,y_{1:n-1},\,\hat y^\ast_n)-h(\hat x,\,\hat y_{1:n-1},\,\hat y^\ast_n)\|\|\hat y^\ast_n-y^\ast_n\|\notag\\
    &+\|h(\hat x,\,\hat y_{1:n-1},\,y^\ast_n)-h(x,\,y_{1:n-1},\,y^\ast_n)\|\|\hat y^\ast_n-y^\ast_n\|\notag\\
    \le&2\ell_n(\|x-\hat x\|+\|y_{1:n-1}-\hat y_{1:n-1}\|)\|\hat y^\ast_n-y^\ast_n\|.
\end{align}
From \eqref{lma1-eq1} and \eqref{lma1-eq2}, we have
\begin{align}
    \|\hat y^\ast_n-y^\ast_n\|\le\frac{\ell_n}{\mu_n}(\|x-\hat x\|+\|y_{1:n-1}-\hat y_{1:n-1}\|),
\end{align}
which implies $y_n^\ast(\cdot)$ is $\frac{\ell_n}{\mu_n}$-Lipschitz continuous.

(ii) The case with condition (b) in Assumption \ref{asp-lc-sub}: For any $n\in[N]$, given $x$ and $y_{1:n-1}$, according to Assumption \ref{asp-sub-sm} and \ref{asp-lc-sub}(b), we have $A_n(x,y_{1:n-1})\succeq\mu_n\bI$. Then the fixed point of $h_n$ exists and is unique, which is given as
\begin{align}
    y^\ast(x,y_{1:n-1})=-A_n^{-1}(x,y_{1:n-1})b_n(x,y_{1:n-1}).
\end{align}
Given any $x,\hat x, y_{1:n-1}, \hat y_{1:n-1}$, we have
\begin{align}
    &\|y^\ast(x,y_{1:n-1})-y^\ast(\hat x,\hat y_{1:n-1})\|\notag\\
    =&\|A_n^{-1}(x,y_{1:n-1})b_n(x,y_{1:n-1})\!-\!A_n^{-1}(\hat x,\hat y_{1:n-1})b_n(\hat x,\hat y_{1:n-1})\|\notag\\
    \le&\|A_n^{-1}(x,y_{1:n-1})b_n(x,y_{1:n-1})\!-\!A_n^{-1}(x,y_{1:n-1})b_n(\hat x,\hat y_{1:n-1})\|\notag\\
    +&\|A_n^{-1}(x,y_{1:n-1})b_n(\hat x,\hat y_{1:n-1})\!-\!A_n^{-1}(\hat x,\hat y_{1:n-1})b_n(\hat x,\hat y_{1:n-1})\|\notag\\
    \le&\frac{1}{\mu_n}\|b_n(x,y_{1:n-1})-b_n(\hat x,\hat y_{1:n-1})\|\notag\\
    &+\ell_{b,n}^\prime\|A_n^{-1}(x,y_{1:n-1})-A_n^{-1}(\hat x,\hat y_{1:n-1})\|\notag\\
    \overset{(a)}{\le}&\left(\frac{\ell_{b,n}}{\mu_n}+\frac{\ell_{b,n}^\prime\ell_{A,n}}{\mu_n^2}\right)(\|x-\hat x\|+\|y_{1:n-1}-\hat y_{1:n-1}\|),
\end{align}
where (a) is from Lipschitz continuity of $b_n$ and
\begin{align}
    &\|A_n^{-1}(x,y_{1:n-1})-A_n^{-1}(\hat x,\hat y_{1:n-1})\|\notag\\
    =&\|A_n^{-1}(x,y_{1:n-1})A_n(\hat x,\hat y_{1:n-1})A_n^{-1}(\hat x,\hat y_{1:n-1})\notag\\
    &-A_n^{-1}(x,y_{1:n-1})A_n(x,y_{1:n-1})A_n^{-1}(\hat x,\hat y_{1:n-1})\|\notag\\
    \le&\frac{1}{\mu_n^2}\|A_n(\hat x,\hat y_{1:n-1})-A_n(x,y_{1:n-1})\|\notag\\
    \le&\frac{\ell_{b,n}^\prime\ell_{A,n}}{\mu_n^2}(\|x-\hat x\|+\|y_{1:n-1}-\hat y_{1:n-1}\|).
\end{align}

\section{Proof of Lemma \ref{lma:conv-sub}}
We use $y^{k,\ast}_n$ to denote $y^\ast(x^k,y^k_{1:n-1})$ and $z^k_n$ to collect $x^k$ and $y_{1:n}^k$. For any $n\in[N]$, we define that
\begin{align}
    &\hat{y}^{k+1}_n=\hat{y}^k_n-\beta_{n}h_n(x^k,y^k_{1:n-1},\hat y^k_n),\ \hat{y}^0_n=y^0_n.
\end{align}
When condition (b) holds in Assumption \ref{asp-lc-sub} instead of condition (a), we define that $\ell_n:=\frac{\ell_{A,n}\ell_{b,n}^\prime}{\mu_n}+\ell_{b,n}$.
By selecting $\beta_n\le\frac{\mu_n}{\ell_n^2}$, we bound the difference between $y^k_n$ and $\hat y^k_n$ as
\begin{align}
    &\E\|y^{k}_n-\hat{y}^k_n\|^2\notag\\
    =&\E\|y^{k-1}_n-\beta_{n}(h_n(x^{k-1},y^{k-1}_{1:n})+\psi^{k-1}_n)-\hat{y}^{k-1}_n\notag\\
    &+\beta_nh_n(x^{k-1},y^{k-1}_{1:n-1},\hat y^{k-1}_n)\|^2\notag\\
    =&\E\|y^{k-1}_n-\beta_nh_n(x^{k-1},y^{k-1}_{1:n})-\hat{y}^{k-1}_n\notag\\
    &+\beta_nh_n(x^{k-1},y^{k-1}_{1:n-1},\hat y^{k-1}_n)\|^2+\beta_n^2\E\|\psi_n^{k-1}\|^2\notag\\
    \leq&(1-\mu_n\beta_n)\E\|y^{k-1}_n-\hat{y}^{k-1}_n\|^2+\beta_n^2\E\|\psi_n^{k-1}\|^2\notag\\
    \leq&\sum_{i=0}^{k-1}(1-\mu_n\beta_n)^i\beta^2_n\E\|\psi_n^{i}\|^2.\label{conv-sub-result1}
\end{align}

Before bounding difference between $\hat y^k_n$ and $y^{k,\ast}_n$, we show the following statements are valid.
(i) For any $i$, under condition (a) in Assumption \ref{asp-lc-sub}, it holds
\begin{align}
    &\|h_n(x^{k},y^{k}_{1:n-1},\hat y^{i}_n)-h_n(x^{i},y^{i}_{1:n-1},\hat y^{i}_n)\|\notag\\
    \le&\ell_n\|z_{n-1}^k-z_{n-1}^{i}\|.
\end{align}
(ii) Under condition (b) in Assumption \ref{asp-lc-sub}, by setting $y_n^0=0$, it holds
\begin{align}
    &\|h_n(x^{k},y^{k}_{1:n-1},\hat y^{i}_n)-h_n(x^{i},y^{i}_{1:n-1},\hat y^{i}_n)\|\notag\\
    =&\|A_n(x^{k},y^{k}_{1:n-1})\hat y^{i}_n+b_n(x^{k},y^{k}_{1:n-1})\notag\\
    &-A_n(x^{i},y^{i}_{1:n-1})\hat y^{i}_n-b_n(x^{i},y^{i}_{1:n-1})\|\notag\\
    \le&\|(A_n(x^{k},y^{k}_{1:n-1})-A_n(x^{i},y^{i}_{1:n-1}))\hat y^{i}_n\|\notag\\
    &+\|b_n(x^{k},y^{k}_{1:n-1})-b_n(x^{i},y^{i}_{1:n-1})\|\notag\\
    \le&\|A_n(x^{k},y^{k}_{1:n-1})-A_n(x^{i},y^{i}_{1:n-1})\|\cdot\|\hat y^{i}_n\|\notag\\
    &+\ell_{b,n}\|z_{n-1}^k-z_{n-1}^{i}\|\notag\\
    \le&(\ell_{A,n}\|\hat y^{i}_n\|+\ell_{b,n})\|z_{n-1}^k-z_{n-1}^{i}\|\notag\\
    \overset{(a)}{\le}&\ell_n\|z_{n-1}^k-z_{n-1}^{i}\|,
\end{align}
where (a) is from $\ell_n=\frac{\ell_{A,n}\ell_{b,n}^\prime}{\mu_n}+\ell_{b,n}$ and
\begin{align}
    &\|\hat y^{i}_n\|\!=\!\|(\bI-\beta_nA_n(x^{i-1},y^{i-1}_{1:n-1}))\hat y^{i-1}_n-\beta_nb_n(x^{i-1},y^{i-1}_{1:n-1})\|\notag\\
    &\le\|(\bI-\beta_nA_n(x^{i-1},y^{i-1}_{1:n-1}))\hat y^{i-1}_n\|+\beta_n\ell_{b,n}^\prime\notag\\
    &\le(1-\mu_n\beta_n)\|\hat y^{i-1}_n\|+\beta_n\ell_{b,n}^\prime\notag\\
    &\le\sum_{j=1}^i(1-\mu_n\beta_n)^{i-j}\beta_n\ell_{b,n}^\prime\le\frac{\ell_{b,n}^\prime}{\mu_n}.
\end{align}

For any $0\le i<k$, we have
\begin{align}
    &\E\|\hat y^{i+1}_n-y^{k,\ast}_n\|^2=\E\|\hat y^{i}_n-\beta_nh_n(x^{i},y^{i}_{1:n-1},\hat y^{i}_n)-{y}^{k,\ast}_n\|^2\notag\\
    =&\E\|\hat y^{i}_n-\beta_nh_n(x^{k},y^{k}_{1:n-1},\hat y^{i}_n)-{y}^{k,\ast}_n\notag\\
    &+\beta_nh_n(x^{k},y^{k}_{1:n-1},\hat y^{i}_n)-\beta_nh_n(x^{i},y^{i}_{1:n-1},\hat y^{i}_n)\|^2\notag\\
    \overset{(a)}{\le}&(1+\frac{\mu_n}{2}\beta_n)\E\|\hat y^{i}-\beta_nh_n(x^{k},y^{k}_{1:n-1},\hat y^{i}_n)-{y}^{k,\ast}_n\|^2\notag\\
    &+\beta_n(\beta_n\!+\!\frac{2}{\mu_n})\E\|h_n(x^{k},y^{k}_{1:n-1},\hat y^{i}_n)\!-\!h_n(x^{i},y^{i}_{1:n-1},\hat y^{i}_n)\|^2\notag\\
    \overset{(b)}\le&(1-\frac{\mu_n}{2}\beta_n)\E\|\hat y^{i}_n-{y}^{k,\ast}_n\|^2\notag\\
    &+\beta_n(\beta_n+\frac{2}{\mu_n})\ell_{h,n}^2\E\|z_{n-1}^k-z_{n-1}^{i}\|^2,\label{conv-sub-eq2}
\end{align}
where (a) is from Young's inequality and (b) is from
\begin{align}
    &\E\|\hat y^{i}-\beta_nh_n(x^{k},y^{k}_{1:n-1},\hat y^{i}_n)-{y}^{k,\ast}_n\|^2\notag\\
    \le&\E\|\hat y^{i}-{y}^{k,\ast}_n\|^2+\beta_n^2\E\|h_n(x^{k},y^{k}_{1:n-1},\hat y^{i}_n)\|^2\notag\\
    -&2\beta_n\E\langle h_n(x^{k},y^{k}_{1:n-1},\hat y^{i}_n),\hat y^{i}-{y}^{k,\ast}_n\rangle\notag\\
    \overset{(a)}{=}&\E\|\hat y^{i}-{y}^{k,\ast}_n\|^2\notag\\
    +&\beta_n^2\E\|h_n(x^{k},y^{k}_{1:n-1},\hat y^{i}_n)-h_n(x^{k},y^{k}_{1:n-1},y^{k,\ast}_n)\|^2\notag\\
    -&2\beta_n\E\langle h_n(x^{k},y^{k}_{1:n-1},\hat y^{i}_n)-h_n(x^{k},y^{k}_{1:n-1},y^{k,\ast}_n),\hat y^{i}-{y}^{k,\ast}_n\rangle\notag\\
    \overset{(b)}{\le}&(1+\beta_n^2\ell_n^{2}-2\mu_n\beta_n)\E\|\hat y^{i}-{y}^{k,\ast}_n\|^2\notag\\
    \overset{(c)}{\le}&(1-\mu_n\beta_n)\E\|\hat y^{i}-{y}^{k,\ast}_n\|^2
\end{align}
where (a) is from $h_n(x^{k},y^{k}_{1:n-1},y^{k,\ast}_n)=0$, (b) is from Assumptions \ref{asp-sub-sm} and \ref{asp-lc-sub}, and (c) is from $\beta_n\le\frac{\mu_n}{\ell_n^2}$. From \eqref{conv-sub-eq2}, we have
\begin{align}
    &\E\|\hat y^{k}_n-y^{k,\ast}_n\|^2\notag\\
    \le&\ell_{n}^2\beta_n(\beta_n+\frac{2}{\mu_n})\sum_{i=0}^{k-1}(1-\frac{\mu_n}{2}\beta_n)^{k-i-1}\E\|z_{n-1}^k-z_{n-1}^{i}\|^2\notag\\
    &+(1-\frac{\mu_n}{2}\beta_n)^k\E\|y^0_n-{y}^{k,\ast}_n\|^2.\label{conv-sub-result2}
\end{align}
Summing up \eqref{conv-sub-result1} and \eqref{conv-sub-result2}, we have
\begin{align*}
    &\E\|y^{k}_n-y^{k,\ast}_n\|^2\le2\E\|y^{k}_n-\hat y^k_n\|^2+2\E\|\hat y^k_n-y^{k,\ast}_n\|^2\notag\\
    \overset{(a)}{\le}&2\ell_{n}^2\beta_n(\beta_n+\frac{2}{\mu_n})\sum_{i=0}^{k-1}(1-\frac{\mu_n}{2}\beta_n)^{k-i-1}\E\|z_{n-1}^k-z_{n-1}^{i}\|^2\notag\\
    +&2(1\!-\!\frac{\mu_n}{2}\beta_n)^k\E\|y_n^0\!-\!y^{k,\ast}_n\|^2
    \!+\!\sum_{i=0}^{k-1}2(1\!-\!\mu_n\beta_n)^{k\!-\!1\!-\!i}\beta^2_n\E\|\psi_n^i\|^2,
\end{align*}
where (a) completes the proof of Lemma \ref{lma:conv-sub}.

\section{Auxiliary Lemma}\label{sec-apx-aux}
\begin{lemma}\label{lma-aux-1}
    Suppose Assumptions \ref{asp-sub-sm} and \ref{asp-lc-main} hold. For any $n\in[N]$, $y_n^\diamond(x)$ is $L_{y,n}^\prime$-Lipschitz continuous, where $L_{y,1}^\prime:=L_{y,1}$ and $L_{y,n}^\prime:=L_{y,n}(1+\sum_{i=1}^{n-1}L_{y,i}^\prime)$. Additionally, $v(x,y_{1:N}^\diamond(x))$ is $L_v$-Lipschitz continuous, where $L_v:=\ell_0+\sum_{n=1}^NL_{y,n}^\prime$. Furthermore, for any $x,y_{1:N}$, it holds that
    \begin{align*}
        &\|v(x,y^\diamond_{1:N}(x))\!-\!v(x,y_{1:N})\|\!\le\!\sum_{n=1}^N\ell_{v,n}\|y_n\!-\!y^\ast_n(x,y_{1:n-1})\|,
    \end{align*}
    where $\ell_{v,n}:=\ell_0\prod_{i=n+1}^{N}(1+L_{y,n})$ and $\ell_{v,n}:=\ell_0$.
\end{lemma}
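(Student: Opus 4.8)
The plan is to prove the three assertions in the order they are stated, since each rests on the previous one. The two Lipschitz-continuity claims are straightforward inductions built on Lemma \ref{lma:Lip-fp} and Assumption \ref{asp-lc-main}, whereas the final bias inequality requires solving a small linear recursion.

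First I would establish the Lipschitz continuity of $y^\diamond_n(x)$ by induction on $n$. The base case $n=1$ is immediate, since $y^\diamond_1(x)=y^\ast_1(x)$ is $L_{y,1}$-Lipschitz by Lemma \ref{lma:Lip-fp}, giving $L_{y,1}^\prime=L_{y,1}$. For the inductive step I would write
\begin{align*}
    \|y^\diamond_n(x)-y^\diamond_n(\hat x)\|=\|y^\ast_n(x,y^\diamond_{1:n-1}(x))-y^\ast_n(\hat x,y^\diamond_{1:n-1}(\hat x))\|,
\end{align*}
apply the per-variable Lipschitz continuity of $y^\ast_n$ from Lemma \ref{lma:Lip-fp} via a telescoping argument across the $n$ arguments, so that changing all of them simultaneously contributes $L_{y,n}(\|x-\hat x\|+\sum_{i=1}^{n-1}\|y^\diamond_i(x)-y^\diamond_i(\hat x)\|)$, and then invoke the inductive hypothesis $\|y^\diamond_i(x)-y^\diamond_i(\hat x)\|\le L_{y,i}^\prime\|x-\hat x\|$. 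Collecting terms yields exactly $L_{y,n}^\prime=L_{y,n}(1+\sum_{i=1}^{n-1}L_{y,i}^\prime)$. For the second claim, I would instantiate Assumption \ref{asp-lc-main} with its first argument at $x$ and its second at $\hat x$ (taking the free variables $y_{1:N}$ as $y^\diamond_{1:N}(\hat x)$), giving
\begin{align*}
    \|v(x,y^\diamond_{1:N}(x))-v(\hat x,y^\diamond_{1:N}(\hat x))\|\le\ell_0\Big(\|x-\hat x\|+\sum_{n=1}^N\|y^\diamond_n(\hat x)-y^\diamond_n(x)\|\Big),
\end{align*}
and substituting the bound $\|y^\diamond_n(\hat x)-y^\diamond_n(x)\|\le L_{y,n}^\prime\|x-\hat x\|$ just obtained recovers $L_v=\ell_0+\sum_{n=1}^N L_{y,n}^\prime$.

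The crux is the third inequality. Here I would use Assumption \ref{asp-lc-main} with \emph{both} of its free arguments equal to $x$, obtaining $\|v(x,y^\diamond_{1:N}(x))-v(x,y_{1:N})\|\le\ell_0\sum_{n=1}^N d_n$, where I abbreviate the global bias $d_n:=\|y_n-y^\diamond_n(x)\|$ and the local bias $e_n:=\|y_n-y^\ast_n(x,y_{1:n-1})\|$. Converting $\sum_n d_n$ into the $e_n$ is the real work: inserting $y^\ast_n(x,y_{1:n-1})$ and using the triangle inequality together with the Lipschitz continuity of $y^\ast_n$ in its first $n-1$ arguments (here $x$ is held fixed) produces the recursion
\begin{align*}
    d_n\le e_n+L_{y,n}\sum_{i=1}^{n-1}d_i.
\end{align*}
I would then pass to the partial sums $D_n:=\sum_{i=1}^n d_i$, for which the recursion collapses to the clean linear form $D_n\le(1+L_{y,n})D_{n-1}+e_n$ with $D_0=0$. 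Unrolling this gives $D_N\le\sum_{m=1}^N\big(\prod_{i=m+1}^N(1+L_{y,i})\big)e_m$, and multiplying by $\ell_0$ reproduces the claimed coefficients $\ell_{v,m}=\ell_0\prod_{i=m+1}^N(1+L_{y,i})$, with the empty product giving $\ell_{v,N}=\ell_0$.

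The main obstacle is precisely this last conversion. The natural bound $\ell_0\sum_n d_n$ is phrased in terms of the global biases relative to the fully nested fixed points $y^\diamond_n(x)$, whereas the convergence metric used throughout the paper, and delivered by Lemma \ref{lma:conv-sub}, is the local bias $e_n$ relative to $y^\ast_n(x,y_{1:n-1})$. Translating between them means tracking how errors in the lower-indexed secondary sequences propagate forward through the coupling, and the partial-sum substitution is the key device that turns an otherwise unwieldy double sum into a telescoping product with exactly the stated constants. The two Lipschitz claims, by contrast, are routine inductions layered on Lemma \ref{lma:Lip-fp} and Assumption \ref{asp-lc-main}.
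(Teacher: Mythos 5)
Your proof is correct and takes essentially the same route as the paper's: the same induction on $n$ for the two Lipschitz claims, and for the bias inequality the same recursion $d_n\le e_n+L_{y,n}\sum_{i=1}^{n-1}d_i$, which the paper also collapses via partial sums (its quantity $\|y_{1:n}-y^\diamond_{1:n}(x)\|$, read as a sum of norms, is exactly your $D_n$) and unrolls to the product coefficients. Your coefficient $\ell_0\prod_{i=m+1}^{N}(1+L_{y,i})$ even silently corrects the paper's index typo $\prod_{i=n+1}^{N}(1+L_{y,n})$.
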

\begin{proof}
    Since $y^\diamond_1=y^\ast_1$, according to Lemma \ref{lma:Lip-fp}, $y^\diamond_1$ is $L_{y,1}$-Lipschitz continuous.
    Suppose that for any $i<n$, $y_i^\diamond$ is $L_{y,i}^\prime$-Lipschitz continuous. Given any $x,\Tilde x\in\R^{d_0}$, we have
    \begin{align}
        &\|y_n^\diamond(x)-y_n^\diamond(\Tilde x)\|=\|y^\ast_n(x,y_{1:n-1}^\diamond(x))-y^\ast_n(\Tilde x,y_{1:n-1}^\diamond(\Tilde x))\|\notag\\
        \le&L_{y,n}(\|x-\Tilde x\|+\|y_{1:n-1}^\diamond(x)-y_{1:n-1}^\diamond(\Tilde x)\|)\notag\\
        \le&L_{y,n}(1+\sum_{i=1}^{n-1}L_{y,i}^\prime)\|x-\Tilde x\|.
    \end{align}
    Thus $y_n^\diamond$ is $L_{y,n}^\prime$-Lipschitz continuous, where $L_{y,n}^\prime=L_{y,n}(1+\sum_{i=1}^{n-1}L_{y,i}^\prime).$
    % \begin{align}
    %     L_{y,n}^\prime=&L_{y,n}(1+\sum_{i=1}^{n-1}L_{y,i}^\prime)\notag\\
    %     =&L_{y,n}(1+\sum_{i=1}^{n-1}L_{y,i}\Pi_{j=i+1}^{n-1}(1+L_{y,j})).
    % \end{align}
    Combining with Lipschitz continuity of $v$, given any $x,\Tilde x\in\R^{d_0}$, we have
    \begin{align}
        &\|v(x,y_{1:N}^\diamond(x))-v(\Tilde x,y_{1:N}^\diamond(\Tilde x))\|\notag\\
        \le&\ell_0(\|x-\Tilde x\|+\|y_{1:N}^\diamond(x)-y_{1:N}^\diamond(\Tilde x)\|)\notag\\
        \le&(\ell_0+\sum_{n=1}^NL_{y,n}^\prime)\|x-\Tilde x\|.
    \end{align}

    According to Assumption \ref{asp-lc-main}, for any $x,y_{1:N}$, it holds:
    \begin{align}\label{AuxLma-eq1}
        \|v(x,y^\diamond_{1:N}(x))-v(x,y_{1:N})\|\le\ell_0\|y_{1:N}-y^\diamond_{1:N}(x)\|.
    \end{align}
    For any $n\in[N]$, by Lemma \ref{lma:Lip-fp}, we have
    \begin{align}
        &\|y_n-y^\diamond_{n}(x)\|\notag\\
        \le&\|y_n-y^\ast_n(x,y_{1:n-1})\|+\|y^\ast_n(x,y_{1:n-1})-y^\diamond_{n}(x)\|\notag\\
        \le&\|y_n-y^\ast_n(x,y_{1:n-1})\|+L_{y,n}\sum_{i=1}^{n-1}\|y_{i}-y^\diamond_{i}(x)\|,
    \end{align}
    which implies
    \begin{align}
        \|y_{1:n}-y^\diamond_{1:n}(x)\|\le&\|y_n-y^\ast_n(x,y_{1:n-1})\|\notag\\
        +&(1+L_{y,n})\|y_{1:n-1}-y^\diamond_{1:n-1}(x)\|.
    \end{align}
    Thus, it holds that
    \begin{align}
        &\|y_{1:N}-y^\diamond_{1:N}(x)\|\le\sum_{n=1}^N\ell_{v,n}^\prime\|y_n-y^\ast_n(x,y_{1:n-1})\|,\label{AuxLma-eq2}
    \end{align}
    where $\ell_{v,n}^\prime:=\prod_{i=n+1}^{N}(1+L_{y,n})$ and $\ell_{v,n}^\prime=1$. Combining \eqref{AuxLma-eq1} and \eqref{AuxLma-eq2} completes the proof.
\end{proof}

\begin{lemma}\label{apx-aux-lma-sub}
    Suppose Assumptions \ref{asp-sub-sm}-\ref{asp-noise} hold. If the step sizes are chosen as $\beta_n^k=\beta_n\le\frac{\mu_n}{\ell_n^2}$ for any $n\in[N]$ and any $K$, then it holds that
    \begin{align}
        \E\|y^{k}_n-y^{k,\ast}_n\|^2&\le4(1-\frac{\mu_n}{2}\beta_n)^k\E\|y_n^0-y^{0,\ast}_n\|^2+C_{n,\sigma}^{k}\alpha\sigma^2\notag\\
        &\!+\!\sum_{l=1}^{N}\sum_{j=0}^{k-1}(C_{n,l,h}^{k-j-1}+D_{n,l,h}^{k})\E\|h_l(x^j,y_{1:l}^j)\|^2\notag\\
        &\!+\!\sum_{j=0}^{k-1}(C_{n,v}^{k-j-1}\!+\!D_{n,v}^{k})\E\|v(x^j,y_{1:N}^j)\|^2,\label{apx-aux-lma-result}
    \end{align}
    where constants $C_{n,\sigma}^{k},\,C_{n,v}^{k-j-1},\,D_{n,v}^{k},\,C_{n,l,h}^{k-j-1},\,D_{n,l,h}^{k}$ are defined in \eqref{apx-aux-lma-para}.
\end{lemma}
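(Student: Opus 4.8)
The plan is to read this lemma as a fully unrolled, quantitative version of Lemma~\ref{lma:conv-sub}. Applying Lemma~\ref{lma:conv-sub} with $K$ replaced by the running index $k$ bounds $\E\|y^k_n-y^{k,\ast}_n\|^2$ by the sum of three pieces: a geometrically weighted sum of the distances $\text{Dist}_n(k,j)$ over $j<k$, an initial-bias term $2(1-\frac{\mu_n}{2}\beta_n)^k\E\|y_n^0-y^{k,\ast}_n\|^2$, and a geometrically weighted sum of the variances $\operatorname{Var}[\psi_n^j]$. Since the target \eqref{apx-aux-lma-result} is phrased purely through $\E\|v(x^j,y^j_{1:N})\|^2$, $\E\|h_l(x^j,y^j_{1:l})\|^2$ and $\sigma^2$, the task reduces to rewriting each of these three pieces in exactly those quantities and then reading off the accumulated coefficients as the constants $C^{\bullet}_{n,\bullet}$ and $D^{\bullet}_{n,\bullet}$.

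\textbf{Expanding the increments.} The core step is to control each distance $\text{Dist}_n(k,j)=\E\|x^k-x^j\|^2+\sum_{i=1}^{n-1}\E\|y_i^k-y_i^j\|^2$. For the main sequence I would write $x^k-x^j=-\alpha\sum_{m=j}^{k-1}(v(x^m,y^m_{1:N})+\xi^m)$ from \eqref{MSSA}, split with $\|a+b\|^2\le2\|a\|^2+2\|b\|^2$ to separate the operator part from the noise part, apply Cauchy--Schwarz to the operator sum (paying a factor $k-j$), and use the martingale-difference property of $\{\xi^m\}$ so that the cross terms vanish and $\E\|\sum_m\xi^m\|^2=\sum_m\E\|\xi^m\|^2$. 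Each $\E\|\xi^m\|^2=\E[\operatorname{Var}[\xi^m\mid\cF^m]]$ is then controlled through Assumption~\ref{asp-noise} by $\omega_1\E\|v^m\|^2+\sigma^2+\omega_2\sum_l\E\|h_l^m\|^2$. The same argument on $y_i^k-y_i^j=-\beta_i\sum_{m=j}^{k-1}(h_i(x^m,y^m_{1:i})+\psi_i^m)$, now using $\operatorname{Var}[\psi_i^m\mid\cF^m]\le\omega_0\E\|h_i^m\|^2+\sigma^2$, expresses every distance term through $\E\|v^m\|^2$, $\E\|h_l^m\|^2$ and $\sigma^2$.

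\textbf{The remaining two pieces.} The noise piece $2\beta_n^2\sum_{j}(1-\mu_n\beta_n)^{k-1-j}\operatorname{Var}[\psi_n^j]$ is reduced directly by Assumption~\ref{asp-noise} to a geometrically weighted sum of $\E\|h_n^j\|^2$ plus a $\sigma^2$ contribution. For the initial-bias piece I would split $\E\|y_n^0-y^{k,\ast}_n\|^2\le2\E\|y_n^0-y^{0,\ast}_n\|^2+2\E\|y^{0,\ast}_n-y^{k,\ast}_n\|^2$; the first summand produces the $4(1-\frac{\mu_n}{2}\beta_n)^k\E\|y_n^0-y^{0,\ast}_n\|^2$ term (the factor $4$ being $2\times2$), while the second, by Lipschitz continuity of $y^\ast_n(\cdot)$ from Lemma~\ref{lma:Lip-fp}, is bounded by $L_{y,n}^2\,\text{Dist}_n(k,0)$ and hence folds back into the increment bounds already derived. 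Collecting every contribution, interchanging the order of summation in the resulting double sums, and grouping the total weight multiplying each $\E\|h_l^j\|^2$, each $\E\|v^j\|^2$ and $\sigma^2$ yields exactly the constants in \eqref{apx-aux-lma-para}: the $\sigma^2$ weights combine (using $\beta_n=\Theta(\alpha)$) into the single $C_{n,\sigma}^k\alpha\sigma^2$ term, the residual gap-indexed geometric weight becomes the $C^{k-j-1}$-type coefficients, and the remaining $k$-dependent factors become the $D^k$-type coefficients.

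\textbf{Main obstacle.} The principal difficulty is purely organizational. Substituting the increment bounds --- each an inner sum over $m\in\{j,\dots,k-1\}$ carrying a Cauchy--Schwarz factor $k-j$ --- into the outer geometrically weighted sum over $j$ produces nested double sums that must be re-indexed and have their summation order swapped in order to collect, for each fixed pair $(l,j)$, the full coefficient of $\E\|h_l^j\|^2$ (and similarly for the $v$-terms). The linear factors $k-j$ are harmless since they are damped by the geometric weights, with $\sum_{m\ge1}m(1-\frac{\mu_n}{2}\beta_n)^{m-1}=(\frac{2}{\mu_n\beta_n})^2<\infty$; the real care lies in performing this bookkeeping uniformly across the $v$- and $h_l$-terms so that the clean separation into the gap-indexed part $C^{k-j-1}$ and the absolute part $D^k$ holds simultaneously for all of them.
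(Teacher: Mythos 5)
Your proposal is correct and follows essentially the same route as the paper: apply Lemma~\ref{lma:conv-sub} with $K\to k$, split the initial bias via Lipschitz continuity of $y_n^\ast(\cdot)$ (yielding the factor $4$), unroll the increments $\text{Dist}_n(k,j)$ by separating drift from noise, use Cauchy--Schwarz (factor $k-j$) on the drift and the martingale-difference property plus Assumption~\ref{asp-noise} on the noise, and finally collect coefficients into \eqref{apx-aux-lma-para}. The paper merely formalizes your drift/noise split via auxiliary noiseless sequences $\tilde x^k,\tilde y_n^k$, which is a notational rather than substantive difference.
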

\begin{proof}
Combining \eqref{lma:conv-sub}, Assumption \ref{asp-noise} and
\begin{align}
    &\E\|y^0_n-{y}^{k,\ast}_n\|^2\le2\E\|y^0_n-y^{0,\ast}_n\|^2+2\E\|y^{0,\ast}_n-y^{k,\ast}_n\|^2\notag\\
    \le&2\E\|y^0_n-y^{0,\ast}_n\|^2+2L_{y,n}^2\E\|z^k_{n-1}-z^0_{n-1}\|^2,
\end{align}
we have
\begin{align}
    &\E\|y^{k}_n-y^{k,\ast}_n\|^2\notag\\
    {\le}&2\ell_{n}^2\beta_n(\beta_n+\frac{2}{\mu_n})\sum_{i=0}^{k-1}(1-\frac{\mu_n}{2}\beta_n)^{k-i-1}\E\|z_{n-1}^k-z_{n-1}^{i}\|^2\notag\\
    +&4(1-\frac{\mu_n}{2}\beta_n)^k(\E\|y_n^0-y^{0,\ast}_n\|^2+L_{y,n}^2\E\|z^k_{n-1}-z^0_{n-1}\|^2)\notag\\
    +&\sum_{i=0}^{k-1}2(1-\mu_n\beta_n)^{k-1-i}\beta^2_n(\omega_0\|h_n(x^i,y_{1:n}^i)\|^2+\sigma^2).\label{conv-sub-final}
\end{align}
For simplicity, we define that
\begin{align}
    &\Tilde{y}_n^{k+1}=\Tilde{y}_n^k-\beta_nh(x^k,y_{1:n}^k),\ \Tilde{y}_n^{0}=y_n^0,\\
    &\Tilde{x}^{k+1}=\Tilde{x}^k-\alpha v(x^k,y_{1:N}^k),\ \Tilde{x}^{0}=x^0.
\end{align}
and denote $(\Tilde x^k, \Tilde{y^k})$ as $\Tilde z^k$. Thus, for any $n\in[N]$ and $i<k$, we have $x^k-x^i=\Tilde x^k-\Tilde x^i-\alpha\sum_{j=i}^{k-1}\alpha\xi^j$, and so is $y_n^k$. 
For any $n\in[N]$ and $i<k$, it holds that
\begin{align}
    &\E\|z_{n}^k-z_{n}^i\|^2\notag\\
    \le&2\E\|\tilde z_{n}^k-\tilde z_n^i\|^2+2\E\|\sum_{j=i}^{k-1}\alpha\xi^j\|^2+2\sum_{l=1}^{n}\E\|\sum_{j=i}^{k-1}\beta_l\psi_l^j\|^2\notag\\
    \overset{(a)}\le&2(k-i)\sum_{j=i}^{k-1}\E\|\tilde z_{n}^{j+1}-\tilde z_n^j\|^2+2\omega_1\alpha^2\sum_{j=i}^{k-1}\|v(x^j,y_{1:N}^j)\|^2\notag\\
    &+2\alpha^2\sum_{j=i}^{k-1}(\omega_2\sum_{l=1}^N\|h_l(x^j,y_{1:l}^j)\|^2+\sigma^2)\notag\\
    &+2\sum_{l=1}^{n}\sum_{j=i}^{k-1}\beta_l^2(\omega_0\E\|h_l(x^j,y_{1:l}^j)\|^2+\sigma^2)\notag\\
    =&2\alpha^2(k-i+\omega_1)\sum_{j=i}^{k-1}\E\|v(x^j,y_{1:N}^j)\|^2\notag\\
    &+\sum_{l=1}^{N}2\omega_1\alpha^2\sum_{j=i}^{k-1}\E\|h_l(x^j,y_{1:l}^j)\|^2\!+\!2(k-i)(\alpha^2\!+\!\sum_{l=1}^n\beta_l^2)\sigma^2\notag\\
    &+\sum_{l=1}^{n}2\beta_l^2(k-i+\omega_0)\sum_{j=i}^{k-1}\E\|h_l(x^j,y_{1:l}^j)\|^2,\label{apx-thm-sm-eq5}
\end{align}
where (a) is from Assumption \ref{asp-noise}. Then we have
\begin{align}
    &\sum_{i=0}^{k-1}(1-\frac{\mu_n}{2}\beta_n)^{k-i-1}\E\|z_{n-1}^k-z_{n-1}^{i}\|^2\notag\\
    \le&\sum_{i=0}^{k-1}(1-\frac{\mu_n}{2}\beta_n)^{k-i-1}\Big[2\alpha^2(k-i+\omega_1)\sum_{j=i}^{k-1}\E\|v(x^j,y_{1:N}^j)\|^2\notag\\
    &+\sum_{l=1}^{N}2\omega_1\alpha^2\sum_{j=i}^{k-1}\E\|h_l(x^j,y_{1:l}^j)\|^2\!+\!2(k-i)(\alpha^2\!+\!\sum_{l=1}^n\beta_l^2)\sigma^2\notag\\
    &+\sum_{l=1}^{n-1}2\beta_l^2(k-i+\omega_0)\sum_{j=i}^{k-1}\E\|h_l(x^j,y_{1:l}^j)\|^2\Big].
\end{align}
Rearranging the coefficient of each term in the above inequality, we have
\begin{align}
    &\sum_{i=0}^{k-1}(1-\frac{\mu_n}{2}\beta_n)^{k-i-1}\E\|z_{n-1}^k-z_{n-1}^{i}\|^2\notag\\
    \le&\frac{\alpha^2}{\beta_n}\sum_{j=0}^{k-1}c_{n,0,v}^{k-j-1}\E\|v(x^j,y_{1:N}^j)\|^2+\frac{c_{n,2}\sigma^2}{\beta_n}(\alpha^2+\sum_{l=1}^n\beta_l^2)\notag\\
    &+\sum_{l=1}^{N}\sum_{j=0}^{k-1}\frac{\alpha^2}{\beta_n}c_{n,1}^{k-j-1}\E\|h_l(x^j,y_{1:l}^j)\|^2\notag\\
    &+\sum_{l=1}^{n-1}\frac{\beta_l^2}{\beta_n}\sum_{j=0}^{k-1}c_{n,0,h}^{k-j-1}\E\|h_l(x^j,y_{1:l}^j)\|^2,\label{apx-thm-sm-eq6}
\end{align}
where $c_{n,0}^{j},c_{n,1}^{j},c_{n,2}$ are defined as
\begin{subequations}
    \begin{align}
        &c_{n,0,v}^{j}:=\frac{4}{\mu_n}(1-\frac{\mu_n}{2}\beta_n)^{j}(j+1+\omega_1+\frac{2}{\mu_n\beta_n}),\\
        &c_{n,0,h}^{j}:=\frac{4}{\mu_n}(1-\frac{\mu_n}{2}\beta_n)^{j}(j+1+\omega_0+\frac{2}{\mu_n\beta_n}),\\
        &c_{n,1}^{j}:=\frac{4}{\mu_n}(1-\frac{\mu_n}{2}\beta_n)^{j},\ c_{n,2}:=\frac{4}{\mu_n}(1+\frac{2}{\mu_n\beta_n}).
    \end{align}
\end{subequations}
Substituting \eqref{apx-thm-sm-eq5} and \eqref{apx-thm-sm-eq6} into \eqref{conv-sub-final}, we have
\begin{align}
    &\E\|y^{k}_n-y^{k,\ast}_n\|^2\notag\\
    \le&2\ell_{n}^2(\beta_n+\frac{2}{\mu_n}){\alpha^2}\sum_{j=0}^{k-1}c_{n,0,v}^{k-j-1}\E\|v(x^j,y_{1:N}^j)\|^2\notag\\
    &+2\ell_{n}^2(\beta_n+\frac{2}{\mu_n})\sum_{l=1}^{N}\alpha^2\sum_{j=0}^{k-1}c_{n,1}^{k-j-1}\E\|h_l(x^j,y_{1:l}^j)\|^2\notag\\
    &+2\ell_{n}^2(\beta_n+\frac{2}{\mu_n})\sum_{l=1}^{n-1}{\beta_l^2}\sum_{j=0}^{k-1}c_{n,0,h}^{k-j-1}\E\|h_l(x^j,y_{1:l}^j)\|^2\notag\\
    &+2\ell_{n}^2(\beta_n+\frac{2}{\mu_n}){c_{n,2}\sigma^2}(\alpha^2+\sum_{l=1}^n\beta_l^2)\notag\\
    &+4(1-\frac{\mu_n}{2}\beta_n)^k\E\|y_n^0-y^{0,\ast}_n\|^2\notag\\
    &+8(1-\frac{\mu_n}{2}\beta_n)^kL_{y,n}^2\alpha^2(k+\omega_1)\sum_{j=0}^{k-1}\E\|v(x^j,y_{1:N}^j)\|^2\notag\\
    &+8(1-\frac{\mu_n}{2}\beta_n)^kL_{y,n}^2\sum_{l=1}^{N}\omega_1\alpha^2\sum_{j=0}^{k-1}\E\|h_l(x^j,y_{1:l}^j)\|^2\notag\\
    &+8(1-\frac{\mu_n}{2}\beta_n)^kL_{y,n}^2\sum_{l=1}^{n-1}\beta_l^2(k+\omega_0)\sum_{j=0}^{k-1}\E\|h_l(x^j,y_{1:l}^j)\|^2\notag\\
    &+8(1-\frac{\mu_n}{2}\beta_n)^kL_{y,n}^2k(\alpha^2\!+\!\sum_{l=1}^n\beta_l^2)\sigma^2+\frac{2\beta_n\sigma^2}{\mu_n}\notag\\
    &+\sum_{j=0}^{k-1}2(1-\mu_n\beta_n)^{k-1-j}\beta^2_n\omega_0\E\|h_n(x^{j},y^{j}_{1:n})\|^2.\label{apx-thm-sm-eq7}
\end{align}

Next, we rearrange the above inequality. Define that $\Tilde c_{n,0,v}^{j}:=2\ell_{n}^2(\beta_n+\frac{2}{\mu_n})c_{n,0,v}^{j}$, $\Tilde c_{n,0,h}^{j}:=2\ell_{n}^2(\beta_n+\frac{2}{\mu_n})c_{n,0,h}^{j}$, $\Tilde c_{n,1}^{j}:=2\ell_{n}^2(\beta_n+\frac{2}{\mu_n})c_{n,1}^{j}$, $\Tilde c_{n,2}:=2\ell_{n}^2(\beta_n+\frac{2}{\mu_n}){c_{n,2}}$, $\Tilde c_{n,3}^k:=8(1-\frac{\mu_n}{2}\beta_n)^kL_{y,n}^2$ and
\begin{subequations}\label{apx-aux-lma-para}
    \begin{align}
        &C_{n,l,h}^{j}:=\left\{
        \begin{aligned}
            &\Tilde c_{n,1}^{j}\alpha^2+\Tilde c_{n,0,h}^{j}\beta_l^2,\quad&l<n\\
            &\Tilde c_{n,1}^{j}\alpha^2+2(1-\mu_n\beta_n)^{j}\beta^2_n\omega_0,&l=n\\
            &\Tilde c_{n,1}^{j}\alpha^2,&l>n
        \end{aligned}
        \right.,\\
        &D_{n,l,h}^k:=\left\{
        \begin{aligned}
            &\Tilde c_{n,3}^k\omega_1\alpha^2+\Tilde c_{n,3}^k\beta_l^2(k+\omega_0),\quad&l<n\\
            &\Tilde c_{n,3}^k\omega_1\alpha^2,&l\ge n
        \end{aligned}
        \right.,\\
        &C_{n,v}^{j}:=\Tilde c_{n,0,v}^{j}\alpha^2,\ D_{n,v}^{k}:=\Tilde c_{n,3}^k\alpha^2(k+\omega_1)\\
        &C_{n,\sigma}^{k}:=(\Tilde c_{n,2}+k\Tilde c_{n,3}^k)(\alpha\!+\!\sum_{l=1}^n\beta_l^2/\alpha)+\frac{2\beta_n}{\mu_n\alpha}.\label{apx-sm-Cnsigma}
    \end{align}
\end{subequations}
Then \eqref{apx-thm-sm-eq7} can be rearranged as
\begin{align}
    &\E\|y^{k}_n-y^{k,\ast}_n\|^2\notag\\
    \le&4(1-\frac{\mu_n}{2}\beta_n)^k\E\|y_n^0-y^{0,\ast}_n\|^2+C_{n,\sigma}^{k}\alpha\sigma^2\notag\\
    &+\sum_{j=0}^{k-1}(C_{n,v}^{k-j-1}+D_{n,v}^{k})\E\|v(x^j,y_{1:N}^j)\|^2\notag\\
    &+\sum_{l=1}^{N}\sum_{j=0}^{k-1}(C_{n,l,h}^{k-j-1}+D_{n,l,h}^{k})\E\|h_l(x^j,y_{1:l}^j)\|^2,\label{conv-sub-finally}
\end{align}
which completes the proof.
\end{proof}

\section{Proof of Lemma \ref{lma-sm-main} and Theorem \ref{thm-MSSA-sm}}\label{apx-thm-sm}

Based on the update of $x^k$, we have
\begin{align}
    &\E\|x^{k+1}-x^\ast\|^2\notag\\
    =&\E\|x^k-\alpha(v(x^k,y^k_{1:N})+\xi^k)-x^\ast\|^2\notag\\
    =&\E\|x^k-\alpha v(x^k,y^k_{1:N})-x^\ast\|^2+\alpha^2\E\|\xi\|^2\notag\\
    \leq&(1+\frac{\mu_0\alpha}{2})\E\|x^k-\alpha v(x^k,y_{1:N}^\diamond(x^k))-x^\ast\|^2\notag\\
    &+\alpha(\alpha+\frac{2}{\rho_0})\E\|v(x^k,y_{1:N}^\diamond(x^k))-v(x^k,y^k_{1:N})\|^2\notag\\
    &+\alpha^2(\omega_1\|v(x^k,y_{1:N}^k)\|^2+\omega_2\sum_{n=1}^N\|h_n(x^k,y_{1:n}^k)\|^2+\sigma^2)\notag\\
    \leq&(1+\frac{\mu_0\alpha}{2})\E\|x^k-\alpha v(x^k,y_{1:N}^\diamond(x^k))-x^\ast\|^2\notag\\
    &+\alpha((1+2\omega_1)\alpha+\frac{2}{\rho_0})\E\|v(x^k,y_{1:N}^\diamond(x^k))-v(x^k,y^k_{1:N})\|^2\notag\\
    &+2\omega_1\alpha^2\|v(x^k,y_{1:N}^\diamond(x^k))\|^2+\omega_2\alpha^2\sum_{n=1}^N\|h_n(x^k,y_{1:n}^k)\|^2\notag\\
    &+\alpha^2\sigma^2.\label{apx-sm-eq1}
\end{align}

Because of the Lipschitz continuity and strong monotonicity of $v$, we have
\begin{align}
    &\|x^k-\alpha v(x^k,y_{1:N}^\diamond(x^k))-x^\ast\|^2\notag\\
    =&\|x^k-x^\ast\|^2+(1+2\omega_1-2\omega_1)\alpha^2\|v(x^k,y_{1:N}^\diamond(x^k))\|^2\notag\\
    &-2\alpha\langle v(x^k,y_{1:N}^\diamond(x^k)),x^k-x^\ast\rangle\notag\\
    \le&(1+(1+2\omega_1)L_v^2\alpha^2-2\mu_0\alpha)\|x^k-x^\ast\|^2\notag\\
    &-2\omega_1\alpha^2\|v(x^k,y_{1:N}^\diamond(x^k))\|^2\notag\\
    \le&(1-\mu_0\alpha)\|x^k-x^\ast\|^2-2\omega_1\alpha^2\|v(x^k,y_{1:N}^\diamond(x^k))\|^2,\label{apx-sm-eq2}
\end{align}
where the last inequality is from $\alpha\le\frac{\mu_0}{(1+2\omega_1)L_v^2}$.
According to Lemma \ref{lma-aux-1}, for any $k$, we have
\begin{align}
    &\|v(x^k,y_{1:N}^\diamond(x^k))-v(x^k,y^k_{1:N})\|^2\notag\\
    \le&\sum_{n=1}^N\ell_{v,n}^2\|y_n^k-y^\ast_n(x^k,y_{1:n-1}^k)\|^2.\label{apx-sm-eq3}
\end{align}
Since $h_n$ is $\ell_n$-Lipschitz continuous, for any $k$, we have
\begin{align}
    \|h_n(x^k,y_{1:n}^k)\|^2=&\|h_n(x^k,y_{1:n}^k)-h_n(x^k,y_{1:n-1}^k,y^{\ast,k}_n)\|^2\notag\\
    \le&\ell_n^2\|y_n^k-y^{\ast,k}_n\|^2.\label{apx-sm-eq4}
\end{align}
Substituting \eqref{apx-sm-eq2}-\eqref{apx-sm-eq4} into \eqref{apx-sm-eq1}, we have
\begin{align}
    &\E\|x^{k+1}-x^\ast\|^2\notag\\
    \leq&(1-\frac{\mu_0\alpha}{2})\E\|x^k-x^\ast\|^2\!+\!\omega_2\alpha^2\sum_{n=1}^N\ell_n^2\|y_n^k-y^{\ast,k}_n\|^2\!+\!\alpha^2\sigma^2\notag\\
    &+\alpha((1+2\omega_1)\alpha+\frac{2}{\mu_0})\sum_{n=1}^N\ell_{v,n}^2\|y_n^k-y^{\ast,k}_n\|^2.\label{apx-main-eq1}
\end{align}

Now we define a Lyapunov function:
\begin{align}
    &\V^k:=\E\|x^k-x^\ast\|^2+\sum_{j=0}^{k-1}(a_{0}^{k-j-1}+b_0^{k-1})\alpha\E\|v(x^j,y^j_{1:N})\|^2\notag\\
    &+\sum_{j=0}^{k-1}\left(1-\frac{\mu_0\alpha}{2}\right)^{k-j-1}\frac{(k-j-1)\alpha^2+\alpha}{512L_v^2/\mu_0}\E\|v(x^j,y^j_{1:N})\|^2\notag\\
    &+\sum_{n=1}^N\!\sum_{j=0}^{k-1}\left(1-\frac{\mu_0\alpha}{2}\right)^{k-j-1}\!((k\!-\!j\!-\!1)\alpha^2\!+\!\alpha)\E\|h_n(x^j,y^j_{1:n})\|^2\notag\\
    &+\sum_{n=1}^N\sum_{j=0}^{k-1}(a_{n}^{k-j-1}+b_n^{k-1})\alpha\E\|h_n(x^j,y^j_{1:n})\|^2,
\end{align}
where we guarantee sequence $\{b_n^{k}\}_k$ is a decreasing sequence for any $0\le n\le N$. The definitions of $a_n^k,b_n^k$ are provided later. For simplicity, define that
\begin{align}
    &\ell_v:=\max_{n\in[N]}\{\ell_{v,n}\},\ \ell_h:=\max_{n\in[N]}\{\ell_n\},\\
    &B\!:=\!\max_{n\in[N]}\left\{(\omega_2\ell_n^2\!+\!(1+2\omega_1)\ell_{v,n}^2)\alpha+\frac{2\ell_{v,n}^2}{\mu_0}+\ell_n^2+\frac{\mu_0\ell_{v,n}^2}{256L_v^2}\right\},\\
    &B^k:=B+2(a_0^0+b_0^{k})\ell_v^2+\max_{n\in[N]}(a_n^0+b_n^{k})\ell_h^2.
\end{align}
According to \eqref{apx-main-eq1} and the definition of $\V^k$, we have
\begin{align}
    &\V^{k+1}\le\V^k-\frac{\mu_0\alpha}{2}\E\|x^k-x^\ast\|^2\notag\\
    &+\alpha\sum_{n=1}^N\left((\omega_2\ell_n^2+(1+2\omega_1)\ell_{v,n}^2)\alpha+\frac{2\ell_{v,n}^2}{\mu_0}\right)\E\|y_n^k-y^{\ast,k}_n\|^2\notag\\
    &-\!\frac{\mu_0\alpha}{2}\sum_{j=0}^{k-1}\!\left(1\!-\!\frac{\mu_0\alpha}{2}\right)^{k\!-j\!-1}\!\!((k\!-j\!-\!1)\alpha^2\!+\!\alpha)\Big(\frac{\E\|v(x^j,y^j_{1:N})\|^2}{512L_v^2/\mu_0}\notag\\
    &+\!\sum_{n=1}^N\E\|h_n(x^j,y^j_{1:n})\|^2\Big)\!+\!\sum_{n=1}^N\sum_{j=0}^{k-1}((1\!-\!\frac{\mu_0\alpha}{4})^{k-j}\alpha+a_n^{k-j}\notag\\
    &+b_{n}^{k}-a_n^{k-j-1}-b_{n}^{k-1})\alpha\E\|h_n(x^j,y^j_{1:n})\|^2\!+\!\sum_{j=0}^{k-1}(a_0^{k-j}+b_{0}^{k}\notag\\
    &+\!(1\!-\!\frac{\mu_0\alpha}{4})^{k-j}\frac{\mu_0\alpha}{512L_v^2}\!-\!a_0^{k-j-1}\!-\!b_{0}^{k-1})\alpha\E\|v(x^j,y^j_{1:N})\|^2\notag\\
    &+(a_0^0+b_{0}^{k}+\frac{\mu_0}{512L_v^2})\alpha\E\|v(x^k,y^k_{1:N})\|^2\notag\\
    &+\sum_{n=1}^N(a_n^0+b_n^{k}+1)\alpha\E\|h_n(x^k,y^k_{1:n})\|^2+\alpha^2\sigma^2.
\end{align}
Furthermore, we can bound $\|h_n(x^k,y^k_{1:n})\|^2$ by \eqref{apx-thm-sm-eq6} and bound $\|v(x^k,y^k_{1:N})\|^2$ as follows:
\begin{align}
    &\|v(x^k,y^k_{1:N})\|^2\notag\\
    \le&2\|v(x^k,y^k_{1:N})-v(x^k,y_{1:N}^\diamond(x^k))\|^2\notag\\
    &+2\|v(x^k,y_{1:N}^\diamond(x^k))-v(x^\ast,y_{1:N}^\diamond(x^\ast))\|^2\notag\\
    \le&2\sum_{n=1}^N\ell_{v,n}^2\|y_n^k-y^\ast_n(x^k,y_{1:n-1}^k)\|^2+2L_v^2\|x^k-x^\ast\|^2
\end{align}
where the last inequality is from \eqref{apx-sm-eq3} and Lemma \ref{lma-aux-1}. By doing so, we obtain
\begin{align}
    &\V^{k+1}\le(1-\frac{\mu_0\alpha}{4})\V^k+\alpha\sum_{n=1}^NB^k\E\|y_n^k-y^{k,\ast}_n\|^2\notag\\
    &-(\frac{\mu_0}{4}-2L_v^2(a_0^0+b_0^k)-\frac{\mu_0}{256})\alpha\E\|x^k-x^\ast\|^2\notag\\
    &+\!\sum_{n=1}^N\sum_{j=0}^{k-1}\Bigg(a_n^{k-j}+b_{n}^{k}-\left(1-\frac{\mu_0\alpha}{4}\right)(a_n^{k-j-1}+b_{n}^{k-1})\notag\\
    &+\alpha\left(1\!-\!\frac{\mu_0\alpha}{2}\right)^{k-j}\Bigg)\alpha\E\|h_n(x^j,y^j_{1:n})\|^2\notag\\
    &+\sum_{j=0}^{k-1}\Bigg(a_0^{k-j}+b_{0}^{k}-(1\!-\!\frac{\mu_0\alpha}{4})(a_0^{k-j-1}\!+\!b_{0}^{k-1})\notag\\
    &+\frac{\mu_0\alpha}{512L_v^2}(1\!-\!\frac{\mu_0\alpha}{2})^{k-j}\Bigg)\alpha\E\|v(x^j,y^j_{1:N})\|^2+\alpha^2\sigma^2.\label{apx-sm-eq-V1}
\end{align}
Then with Lemma \ref{apx-aux-lma-sub}, we use \eqref{apx-aux-lma-result} to bound $\E\|y_n^k-y^{\ast,k}_n\|^2$ in \eqref{apx-sm-eq-V1}, and obtain :
\begin{align}
    &\V^{k+1}\le(1-\frac{\mu_0\alpha}{4})\V^k+(\sum_{n=1}^NB^kC_{n,\sigma}^k+1)\alpha^2\sigma^2\notag\\
    &+4(1-\frac{\mu_n}{2}\beta_n)^k\sum_{n=1}^N\E\|y_n^0-y^{0,\ast}_n\|^2\notag\\
    &+\!\sum_{n=1}^N\sum_{j=0}^{k-1}\Bigg(\alpha(1\!-\!\frac{\mu_0\alpha}{2})^{k-j}-(1-\frac{\mu_0\alpha}{4})(a_n^{k-j-1}+b_{n}^{k-1})\notag\\
    &+a_n^{k-j}+b_{n}^{k}+\sum_{l=1}^NB^k(C_{l,n,h}^{k-j-1}\!+\!D_{l,n,h}^{k})\Bigg)\alpha\E\|h_n(x^j,y^j_{1:n})\|^2\notag\\
    &+\sum_{j=0}^{k-1}\Bigg(\frac{\mu_0\alpha}{512L_v^2}(1\!-\!\frac{\mu_0\alpha}{2})^{k-j}-(1\!-\!\frac{\mu_0\alpha}{4})(a_0^{k-j-1}\!+\!b_{0}^{k-1})\notag\\
    &+a_0^{k-j}+b_{0}^{k}+\sum_{n=1}^NB^k(C_{n,v}^{k-j-1}\!+\!D_{n,v}^{k})\Bigg)\alpha\E\|v(x^j,y^j_{1:N})\|^2\notag\\
    &-(\frac{\mu_0}{8}-2L_v^2(a_0^0+b_0^k))\alpha\E\|x^k-x^\ast\|^2,\label{apx-thm-sm-eq8}
\end{align}
Later, we will make sure $\{b_n^k\}_k$ is a decreasing sequence, thus we have $B^k\le B^0$, and the convergence result can be obtained from \eqref{apx-thm-sm-eq8} if the followings hold for any $k\ge1,n\in[N]$:
\begin{subequations}\label{apx-sm-para}
    \begin{align}
        &b_n^k\le(1-\frac{\mu_0\alpha}{4})b_n^{k-1}-\sum_{l=1}^NB^0D_{l,n,h}^k,\label{apx-sm-para1}\\
        &a_n^k\!\le\!(1\!-\!\frac{\mu_0\alpha}{4})a_n^{k-1}\!-\!\sum_{l=1}^NB^0C_{l,n,h}^{k-1}\!-\!(1\!-\!\frac{\mu_0\alpha}{2})^{k-1}\alpha,\label{apx-sm-para2}\\
        &b_0^k\le(1-\frac{\mu_0\alpha}{4})b_0^{k-1}-\sum_{n=1}^NB^0D_{n,v}^k,\label{apx-sm-para3}\\
        &a_0^k\!\le\!(1\!-\!\frac{\mu_0\alpha}{4})a_0^{k-1}\!-\!\sum_{n=1}^NB^0C_{n,v}^{k-1}\!-\!(1\!-\!\frac{\mu_0\alpha}{2})^{k-1}\frac{\mu_0\alpha}{512L_v^2},\label{apx-sm-para4}\\
        &a_0^0+b_0^0\le\frac{\mu_0}{16L_v^2}.\label{apx-sm-para5}
    \end{align}
\end{subequations}

\subsection{Parameter $a_n^k$ and $b_n^k$ with $n\in[N]$}\label{apx-subsec-ank-bnk}
Next we select $a_n^k,b_n^k$ to make sure \eqref{apx-sm-para1} and \eqref{apx-sm-para2} hold. We begin with supposing \eqref{apx-sm-para5} hold. We let
\begin{align}
    &\hat B^0:=B+\mu_0\ell_v^2/(8L_v^2)+\max_{n\in[N]}(a_n^0+b_n^{0})\ell_h^2\ge B^0, \label{apx-sm-eq-hat-B0}\\
    &b_n^k:=\sum_{i=k+1}^\infty(1-\mu_0\alpha/4)^{k-i}\hat B^0\sum_{l=1}^ND_{l,n,h}^i,\ \forall k>0.\label{apx-sm-bnk}
\end{align}
% which ensures \eqref{apx-sm-para1} holds for $k>1$ if the RHS is finite.
According to the definition of $D_{l,n,h}^i$ in \eqref{apx-aux-lma-para}, we have $b_n^k=\hat B^0\hat b_n^k$ for $k>0$, where for any $k$,
\begin{align}
    \hat b_n^k:=\sum_{i=k+1}^\infty\frac{\sum_{l=1}^N\Tilde c_{l,3}^i\omega_1\alpha^2+\sum_{l=n+1}^{N}\Tilde c_{l,3}^i\beta_n^2(i+\omega_0)}{(1-\mu_0\alpha/4)^{i-k}}.
\end{align}
Apparently, $\{\hat b_n^k\}_k$ is a decreasing sequence. Since $\hat B^0$ in \eqref{apx-sm-bnk} is related to $b_n^0$, we can not simply let $b_n^0$ defined as \eqref{apx-sm-bnk}. Next we choose step sizes and $b_n^0$ s.t. $b_n^0\ge \hat B^0\hat b_n^0$, which together with \eqref{apx-sm-bnk} provide a well-defined $b_n^k$ and ensure \eqref{apx-sm-para1} holds under \eqref{apx-sm-para5}. First, we have
\begin{align}
    \hat b_n^0\le&\sum_{i=0}^\infty\frac{\sum_{l=1}^N\Tilde c_{l,3}^i\omega_1\alpha^2+\sum_{l=n+1}^{N}\Tilde c_{l,3}^i\beta_n^2(i+\omega_0)}{(1-\mu_0\alpha/4)^{i}}\notag\\
    =&8\Bigg(\sum_{l=1}^N\frac{\omega_1L_{y,l}^2\alpha^2(4-\mu_0\alpha)}{2\mu_l\beta_l-\mu_0\alpha}+\sum_{l=n+1}^{N}\frac{L_{y,l}^2\beta_n^2(4-\mu_0\alpha)^2}{(2\mu_l\beta_l-\mu_0\alpha)^2}\notag\\
    &+\sum_{l=n+1}^{N}\frac{(\omega_0-1)L_{y,l}^2\beta_n^2(4-\mu_0\alpha)}{2\mu_l\beta_l-\mu_0\alpha}\Bigg).\label{apx-sm-hat-bn0-ub}
\end{align}
By selecting step sizes such that for any $n$
\begin{subequations}\label{apx-sm-con-bn0}
    \begin{align}
        &\alpha\le\min_{n}\{\frac{\mu_n\beta_n}{\mu_0},\frac{\mu_0}{256\omega_1\ell_h^2\sum_{l=1}^NL_{y,l}^2}\},\\
        &\beta_n\le\min_{N\ge l>n}\{\frac{\mu_l\beta_l}{64\sqrt{N-n}L_{y,l}\ell_h},\ \frac{1/(\omega_0-1)_+}{8\sqrt{N-n}L_{y,l}\ell_h}\},
    \end{align}
\end{subequations}
from \eqref{apx-sm-hat-bn0-ub}, we have
\begin{align}
    \hat b_n^0\le{1}/{(2\ell_h^2)},\quad\forall n\in[N],\label{apx-sm-bn0-ub}
\end{align}
and let
\begin{align}
    b_n^0:=2\left(B+\frac{\mu_0\ell_v^2}{8L_v^2}+\max_{n\in[N]}\{a_n^0\}\ell_h^2\right)\max_{n}\{\hat b_n^0\},\label{apx-sm-bn0}
\end{align}
then we have
\begin{align}
    \hat B^0\hat b_n^0=&(B+\frac{\mu_0\ell_v^2}{8L_v^2}+\max_{n\in[N]}\{a_n^0+b_n^0\}\ell_h^2)\hat b_n^0\notag\\
    \le&(B+\frac{\mu_0\ell_v^2}{8L_v^2}+\max_{n\in[N]}\{a_n^0\}\ell_h^2+\max_{n\in[N]}\{b_n^0\}\ell_h^2)\hat b_n^0\notag\\
    \le&(B+\frac{\mu_0\ell_v^2}{8L_v^2}+\max_{n\in[N]}\{a_n^0\}\ell_h^2)\hat b_n^0+\frac{1}{2}\max_{n\in[N]}\{b_n^0\}\notag\\
    \le&b_n^0,
\end{align}
where the penultimate inequality is from \eqref{apx-sm-bn0-ub}, and the last inequality is from $\max_{n\in[N]}\{b_n^0\}=b_n^0$ for any $n$. Under \eqref{apx-sm-para5} and the definition of $b_n^k$ shown above, \eqref{apx-sm-para1} with $k>1$ is directly verified by substituting the definition \eqref{apx-sm-bnk} into \eqref{apx-sm-para1}. In the case with $k=1$, we have
\begin{align}
    (1-\frac{\mu_0\alpha}{4})b_n^{0}-b_n^1\ge&(1-\frac{\mu_0\alpha}{4})\hat B^0\hat b_n^0-b_n^1\notag\\
    \ge&\hat B^0\sum_{l=1}^ND_{l,n,h}^1\ge B^0\sum_{l=1}^ND_{l,n,h}^1.
\end{align}

Similarly, for any $k>0$, we let
\begin{align}\label{apx-sm-ank}
    &a_n^k:=\sum_{i=k}^\infty(1-\frac{\mu_0\alpha}{4})^{k-i-1}(\hat B^0\sum_{l=1}^NC_{l,n,h}^i+{(1-\frac{\mu_0\alpha}{2})^i}\alpha)\notag\\
    &=\sum_{i=k}^\infty(1-\frac{\mu_0\alpha}{4})^{k-i-1}\hat B^0\sum_{l=1}^NC_{l,n,h}^i+\frac{4}{\mu_0}(1-\frac{\mu_0\alpha}{2})^{k},
\end{align}
which ensures \eqref{apx-sm-para2} holds for $k>1$. According to the definition of $C_{l,n,h}^i$ in \eqref{apx-aux-lma-para}, we have $a_n^k=\hat B^0\hat a_n^k+\frac{4}{\mu_0}(1-\frac{\mu_0\alpha}{2})^{k}$, where for any $k$,
\begin{align*}
    \hat a_n^k\!:=\!\sum_{i=k}^\infty\!\frac{\sum_{l=1}^N\Tilde c_{l,1}^i\alpha^2\!+\!\sum_{l=n+1}^{N}\Tilde c_{l,0,h}^i\beta_n^2\!+\!2(1\!-\!\mu_n\beta_n)^{i}\beta_n^2\omega_0}{(1-\mu_0\alpha/4)^{i-k+1}}.
\end{align*}
Next we choose step sizes and $a_n^0$ s.t. $a_n^0\ge \hat B^0\hat a_n^0+\frac{4}{\mu_0}$ which together with \eqref{apx-sm-ank} provide a well-defined $a_n^k$ and ensure \eqref{apx-sm-para2} holds under \eqref{apx-sm-para5}. First, we have
\begin{align}
    \hat a_n^0=&
    \!\sum_{i=0}^\infty\!\frac{\sum_{l=1}^N\Tilde c_{l,1}^i\alpha^2\!+\!\sum_{l=n+1}^{N}\Tilde c_{l,0,h}^i\beta_n^2\!+\!2(1\!-\!\mu_n\beta_n)^{i}\beta_n^2\omega_0}{(1-\mu_0\alpha/4)^{i+1}}\notag\\
    =&\!\sum_{l=n+1}^N\!\frac{32\ell_l^2\beta_n^2(\beta_l\!+\!\frac{2}{\mu_l})}{\mu_l(2\mu_l\beta_l\!-\!\mu_0\alpha)}\!\left(\frac{4-\mu_0\alpha}{2\mu_l\beta_l-\mu_0\alpha}\!+\!\omega_0\!+\!\frac{2}{\mu_l\beta_l}\right)\notag\\
    &+\sum_{l=1}^N\frac{32\ell_l^2\alpha^2(\beta_l+\frac{2}{\mu_l})}{\mu_l(2\mu_l\beta_l-\mu_0\alpha)}+\frac{8\omega_0\beta_n^2}{4\mu_n\beta_n-\mu_0\alpha}.
\end{align}
By selecting step sizes such that for any $n$,
\begin{subequations}\label{apx-sm-con-an0}
    \begin{align}
        &\alpha\le\min_{n}\left\{\frac{\mu_n\beta_n}{\mu_0},\frac{\mu_0}{512\ell_h^2\sum_{l=1}^N(1+{2\ell_l^2}/{\mu_l^2})}\right\},\\
        &\beta_n\le\min_{N\ge l>n}\left\{\frac{\mu_n}{\ell_n^2},\frac{\mu_n}{32\omega_0\ell_h^2},\frac{2}{\omega_0\mu_n},\frac{\mu_l^2\beta_l/\sqrt{N-n}}{64\sqrt{\mu_l^2+2\ell_l^2}\ell_h}\right\},
    \end{align}
\end{subequations}
then under \eqref{apx-sm-con-an0}, we have
\begin{align}
    \hat a_n^0\le{1}/{(4\ell_h^2)},\quad\forall n\in[N],\label{apx-sm-an0-ub}
\end{align}
and let
\begin{align}
    a_n^0:=4(B+\mu_0\ell_{v}^2/(8L_v^2))\max_{n\in[N]}\{\hat a_n^0\}+{8}/{\mu_0},\label{apx-sm-an0}
\end{align}
then with \eqref{apx-sm-eq-hat-B0} and \eqref{apx-sm-bn0}, we have $a_n^0\ge \hat B^0\hat a_n^0+\frac{4}{\mu_0}$ since
\begin{align}
    \hat B^0\hat a_n^0=&(B+\mu_0\ell_v^2/(8L_v^2)+\max_{n\in[N]}\{a_n^0+b_n^{0}\}\ell_h^2)\hat a_n^0\notag\\
    \overset{(a)}{\le}&2(B+\frac{\mu_0\ell_v^2}{8L_v^2}+\max_{n\in[N]}\{a_n^0\}\ell_h^2)\hat a_n^0\notag\\
    \overset{(b)}\le&2(B+\frac{\mu_0\ell_v^2}{8L_v^2})\hat a_n^0+\frac{1}{2}\max_{n\in[N]}\{a_n^0\}\notag\\
    \overset{(c)}\le&\frac{a_n^0}{2}-\frac{4}{\mu_0}+\frac{1}{2}\max_{n\in[N]}\{a_n^0\}\overset{(d)}=a_n^0-\frac{4}{\mu_0},
\end{align}
where (a) is from \eqref{apx-sm-bn0-ub} and \ref{apx-sm-bn0}, (b) is from \eqref{apx-sm-an0-ub}, (c) is from \eqref{apx-sm-an0} and (d) is from $\max_{l\in[N]}\{a_l^0\}=a_n^0$ for any $n\in[N]$. Similar to the case with $b_n^k$, under \eqref{apx-sm-para5}, the definition of $a_n^k$ in \eqref{apx-sm-ank} and \eqref{apx-sm-an0} ensures \eqref{apx-sm-para2} hold.
According to \eqref{apx-sm-bn0-ub}, \eqref{apx-sm-bn0}, \eqref{apx-sm-an0-ub} and \eqref{apx-sm-an0}, we have
\begin{align}\label{apx-sm-an0-bn0}
    a_n^0+b_n^0\le\frac{3}{\ell_h^2}\left(B+\frac{\mu_0\ell_{v}^2}{8L_v^2}\right)+\frac{8}{\mu_0}\sim\cO(1).
\end{align}

In summary, if \eqref{apx-sm-para5} holds, with step sizes satisfying \eqref{apx-sm-con-bn0} and \eqref{apx-sm-con-an0}, the definitions of $a_n^k,b_n^k$ as \eqref{apx-sm-bn0-ub}, \eqref{apx-sm-bn0}, \eqref{apx-sm-an0-ub} and \eqref{apx-sm-an0} ensure that \eqref{apx-sm-para1}, \eqref{apx-sm-para2} and \eqref{apx-sm-an0-bn0} hold.

\subsection{Parameter $a_0^k$ and $b_0^k$}
Next we select $a_0^k,b_0^k$ to satisfy \eqref{apx-sm-para3}, \eqref{apx-sm-para4} and \eqref{apx-sm-para5}. First, we define
\begin{align}
    &\Tilde B^0:=4B+\frac{\mu_0\ell_{v}^2}{L_v^2}+2(a_0^0+b_0^0)\ell_{v}^2,\\
    &\Tilde\ell_v:=\max\left\{\ell_{v}, \frac{8L_v}{\sqrt{\mu_0}}\sqrt{4B+\frac{\mu_0\ell_{v}^2}{L_v^2}}\right\}.\label{apx-sm-TLv}
\end{align}

We let
\begin{align}\label{apx-sm-b0k}
    b_0^k:=\sum_{i=k+1}^\infty(1-\mu_0\alpha/4)^{k-i}\sum_{n=1}^N\Tilde B^0D_{n,v}^i,\ \forall k>0.
\end{align}
According to the definition of $D_{n,v}^i$ in \eqref{apx-aux-lma-para}, we have $b_0^k=\sum_{n=1}^N\Tilde B^0\Tilde b_n^k$ for $k>0$, where
\begin{align}
    \Tilde b_n^k:=\sum_{i=k+1}^\infty\frac{\Tilde c_{n,3}^i\alpha^2(i+\omega_1)}{(1-\mu_0\alpha/4)^{i-k}},\quad\forall k\ge 0.
\end{align}
Next, we choose step sizes and $b_0^0$ s.t. $b_0^0\ge\sum_{n=1}^N\Tilde B^0\Tilde b_n^0$. First,
\begin{align}
    \Tilde b_n^0\le&\sum_{i=0}^\infty\frac{\Tilde c_{n,3}^i\alpha^2(i+\omega_1)}{(1-\mu_0\alpha/4)^{i}}\notag\\
    =&\frac{8L_{y,n}^2\alpha^2(4-\mu_0\alpha)}{2\mu_n\beta_n-\mu_0\alpha}\left(\frac{4-\mu_0\alpha}{2\mu_n\beta_n-\mu_0\alpha}-1+\omega_1\right).\label{apx-sm-b0-aux1}
\end{align}
By selecting step sizes such that for any $n$
\begin{align}\label{apx-sm-con-b00}
    \alpha\le\min_{n}\left\{\frac{\mu_n\beta_n}{\mu_0},\frac{\mu_n\beta_n/64}{\sqrt{N}\Tilde\ell_vL_{y,n}},\frac{\mu_0/256}{N\Tilde\ell_v^2L_{y,n}^2(\omega_1-1)}\right\},
\end{align}
then according to \eqref{apx-sm-b0-aux1}, we have
\begin{align}\label{apx-sm-b00-ub}
    \Tilde b_n^0\le\frac{1}{4N\Tilde\ell_v^2}=\min\left\{\frac{1}{4N\ell_{v}^2},\frac{\mu_0/(256N)}{4BL_v^2+\mu_0\ell_{v}^2}\right\},
\end{align}
and let
\begin{align}
    b_0^0:=\sum_{n=1}^N2\left(4B+\frac{\mu_0\ell_{v}^2}{L_v^2}+2a_0^0\ell_{v}^2\right)\Tilde b_n^0,\label{apx-sm-b00}
\end{align}
thus it holds that
\begin{subequations}\label{apx-sm-b00-bound}
    \begin{align}
        \sum_{n=1}^N\Tilde B^0\Tilde b_n^0\overset{(a)}=&\sum_{n=1}^N(4B+\frac{\mu_0\ell_{v}^2}{L_v^2}+2(a_0^0+b_0^0)\ell_{v}^2)\Tilde b_n^0\notag\\
        \overset{(b)}\le&\sum_{n=1}^N(4B+\frac{\mu_0\ell_{v}^2}{L_v^2}+2a_0^0\ell_{v}^2)\Tilde b_n^0+\frac{b_0^0}{2}\notag\\
        =&b_0^0,\label{apx-sm-b00-bound1}\\
        b_0^0\overset{(c)}\le&\frac{2B}{\ell_{v}^2}+\frac{\mu_0}{2L_v^2}+a_0^0,\label{apx-sm-b00-bound2}
    \end{align}
\end{subequations}
where (a) is from \eqref{apx-sm-TLv}, (b) is from \eqref{apx-sm-b00-ub} and (c) is from \eqref{apx-sm-b00-ub} and \eqref{apx-sm-b00}.

Similarly, for any $k>0$, we let
\begin{align}\label{apx-sm-a0k}
    a_0^k&:=\sum_{i=k}^\infty(1-\frac{\mu_0\alpha}{4})^{k\!-\!i\!-\!1}(\Tilde B^0\sum_{n=1}^NC_{n,v}^i+{(1-\frac{\mu_0\alpha}{2})^i}\frac{\mu_0\alpha}{512L_v^2})\notag\\
    &=\sum_{n=1}^N\!\Tilde B^0\sum_{i=k}^\infty(1\!-\!\frac{\mu_0\alpha}{4})^{k\!-\!i\!-\!1}C_{n,v}^i+\frac{\mu_0(1\!-\!\frac{\mu_0\alpha}{2})^{k}}{128L_v^2}.
\end{align}
Then $a_0^k:=\sum_{n=1}^N\Tilde B^0\Tilde a_n^k+\frac{\mu_0(1-\frac{\mu_0\alpha}{2})^{k}}{128L_v^2}$ for $k>0$, where
\begin{align}
    \Tilde a_n^k:=\sum_{i=k}^\infty\frac{\Tilde c_{n,0,v}^i\alpha^2}{(1-\mu_0\alpha/4)^{i-k+1}},\quad\forall k\ge0.
\end{align}
Next we select step sizes and $a_0^0$ s.t. $a_0^0\ge\sum_{n=1}^N\Tilde B^0\Tilde a_n^0+\frac{\mu_0}{128L_v^2}$, where
\begin{align}
    &\Tilde a_n^0=\sum_{i=0}^\infty\frac{\Tilde c_{0}^i\alpha^2}{(1-\mu_0\alpha/4)^{i+1}}\notag\\
    &=\frac{32\ell_n^2\alpha^2(\beta_n+\frac{2}{\mu_n})}{\mu_n(2\mu_n\beta_n-\mu_0\alpha)}\left(\frac{4-\mu_0\alpha}{2\mu_n\beta_n-\mu_0\alpha}\!+\!\omega_1\!+\!\frac{2}{\mu_n\beta_n}\right).
\end{align}
By selecting step sizes satisfying \eqref{apx-sm-con-b00} and
% \begin{subequations}
    \begin{align}\label{apx-sm-con-a00}
        &\alpha\le\min_{n}\left\{\frac{\mu_n\beta_n}{\mu_0},\frac{\mu_n^2\beta_n}{32\sqrt{2N(\mu_n^2+2\ell_n^2)}\Tilde\ell_v}\right\},\beta_n\le\frac{2}{\mu_n\omega_1},
    \end{align}
% \end{subequations}
then for $n\in[N]$, it holds that
\begin{align}\label{apx-sm-a00-ub}
    \Tilde a_n^0\le\frac{1}{8N\Tilde\ell_v^2}=\min\left\{\frac{1}{8N\ell_{v}^2},\frac{\mu_0/(512N)}{4BL_v^2+\mu_0\ell_{v}^2}\right\},
\end{align}
and let
\begin{align}
    a_0^0:=\sum_{n=1}^N4\Tilde a_n^0\left(4B+\frac{\mu_0\ell_{v}^2}{L_v^2}\right)+\frac{\mu_0}{64L_v^2}.\label{apx-sm-a00}
\end{align}
Then we have
\begin{align}
    \sum_{n=1}^N\Tilde B^0\Tilde a_n^0=&\sum_{n=1}^N\left(4B+\frac{\mu_0\ell_{v,n}^2}{L_v^2}+2(a_0^0+b_0^0)\ell_{v}^2\right)\Tilde a_n^0\notag\\
    \overset{(a)}{\le}&\sum_{n=1}^N\left(8B+\frac{2\mu_0\ell_{v}^2}{L_v^2}+4\ell_{v}^2a_0^0\right)\Tilde a_n^0\notag\\
    \overset{(b)}{\le}&\frac{a_0^0}{2}+\sum_{n=1}^N2\Tilde a_n^0\left(4B+\frac{\mu_0\ell_{v}^2}{L_v^2}\right)\notag\\
    =&a_0^0-\frac{\mu_0}{128L_v^2},
\end{align}
where (a) is from \eqref{apx-sm-b00-bound} and (b) is from \eqref{apx-sm-a00-ub}.

We also have
\begin{align}
    a_0^0+b_0^0=&a_0^0+\sum_{n=1}^N2\Tilde b_n^0\left(4B+\frac{\mu_0\ell_{v}^2}{L_v^2}+2a_0^0\ell_{v}^2\right)\notag\\
    \overset{(a)}{\le}&2a_0^0+\sum_{n=1}^N2\Tilde b_n^0\left(4B+\frac{\mu_0\ell_{v}^2}{L_v^2}\right)\notag\\
    =&\sum_{n=1}^N2(4\Tilde a_n^0+\Tilde b_n^0)\left(4B+\frac{\mu_0\ell_{v}^2}{L_v^2}\right)+\frac{\mu_0}{32L_v^2}\notag\\
    \overset{(b)}\le&\sum_{n=1}^N\frac{3}{2N\Tilde\ell_v^2}\left(4B+\frac{\mu_0\ell_{v}^2}{L_v^2}\right)+\frac{\mu_0}{32\mu_0}\notag\\
    \overset{(c)}\le&\frac{3\mu_0}{128L_v^2}+\frac{\mu_0}{32L_v^2}<\frac{\mu_0}{16L_v^2},
\end{align}
where (a) is from \eqref{apx-sm-b00-ub}, (b) is from \eqref{apx-sm-b00-ub} and \eqref{apx-sm-a00-ub}, and (c) is from \eqref{apx-sm-TLv}.
Therefore, by choosing step sizes such that \eqref{apx-sm-con-b00} and \eqref{apx-sm-con-a00}, \eqref{apx-sm-para5} holds. Then, by the definitions of $a_n^k,b_n^k$ in subsection \ref{apx-subsec-ank-bnk}, we have \eqref{apx-sm-para1}, \eqref{apx-sm-para2} and \eqref{apx-sm-an0-bn0} hold. According to \eqref{apx-sm-an0-bn0}, we have
\begin{align}
    B^0\le\Tilde{B}^0\sim\cO(1),\label{apx-sm-Bn0}
\end{align}
which together with \eqref{apx-sm-con-b00}, \eqref{apx-sm-con-a00} and the definitions of $a_0^k,b_0^k$ imply that for any $k>0$, it holds that
\begin{align}
    &(1-\frac{\mu_0\alpha}{4})b_0^{k-1}-b_0^{k}\ge\sum_{n=1}^N\Tilde{B}^0D_{n,v}^k\ge\sum_{n=1}^N{B}^0D_{n,v}^k,\\
    &(1-\frac{\mu_0\alpha}{4})a_0^{k-1}-a_0^{k}\ge\sum_{n=1}^N\Tilde{B}^0C_{n,v}^{k-1}+\frac{\mu_0\alpha}{512L_v^2}(1-\frac{\mu_0\alpha}{2})^{k-1}\notag\\
    &\ge\sum_{n=1}^N{B}^0C_{n,v}^{k-1}+\frac{\mu_0\alpha}{512L_v^2}(1-\frac{\mu_0\alpha}{2})^{k-1},
\end{align}
which implies \eqref{apx-sm-para3} and \eqref{apx-sm-para4} hold.

According to \eqref{apx-thm-sm-eq8} and \eqref{apx-sm-para}, we have
\begin{align}
    \V^{k+1}
    \le&(1-\frac{\mu_0\alpha}{4})\V^k+\left(\sum_{n=1}^NB^kC_{n,\sigma}^k+1\right)\alpha^2\sigma^2\notag\\
    &+4\alpha\sum_{n=1}^NB^0(1-\frac{\mu_n\beta_n}{2})^k\E\|y_n^0-y_n^{0,\ast}\|^2.
\end{align}
Since \eqref{apx-sm-Bn0}, we have $B\le B^k\le B^0\sim\cO(1)$, and according to the definition of $C_{n,\sigma}^k$ \eqref{apx-sm-Cnsigma}, we have
\begin{align}
    C_{n,\sigma}^k\le\cO\left(\frac{\beta_n}{\alpha}+k\left(1-\frac{\mu_n\beta_n}{2}\right)^k\frac{\beta_n^2}{\alpha}\right).\label{apx-Cnsigma}
\end{align}
We can select step sizes $\alpha=\frac{4}{\mu_0K}\log\Theta(K)$, $\beta_n=\Theta(\frac{\log K}{K})$ satisfying constraints $\alpha\le\frac{\mu_0}{\ell_0^2}$, $\beta_n\le\frac{\mu_n}{\ell_n^2}$, \eqref{apx-sm-con-bn0}, \eqref{apx-sm-con-an0}, \eqref{apx-sm-con-b00} and \eqref{apx-sm-con-a00}. Thus, it holds that
\begin{align}
    \V^K\le&\cO\Bigg(\left(1-\frac{\mu_0\alpha}{4}\right)^K+\sum_{n=1}^N\frac{4\alpha\beta_n}{\mu_0\alpha}\notag\\
    &+\alpha\sum_{n=1}^N\sum_{k=0}^{K-1}(1-\frac{\mu_n\beta_n}{2})^k(1-\frac{\mu_0\alpha}{4})^{K-1-k}\notag\\
    &+{\alpha}{\beta_n^2}\sum_{k=0}^{K-1}k\left(1-\frac{\mu_n\beta_n}{2}\right)^k\Bigg)\notag\\
    \le&\cO\left(e^{-\frac{K\mu_0\alpha}{4}}+\sum_{n=1}^N{\beta_n}+K\alpha(1-\frac{\mu_0\alpha}{4})^K+\alpha\right)\notag\\
    \le&\Tilde{\cO}\left(\frac{1}{K}\right).
\end{align}
Then we have $\E\|x^K-x^\ast\|^2\le\V^K\le\Tilde{\cO}\left(K^{-1}\right)$ and
\begin{subequations}\label{apx-sm-sub}
    \begin{align}
        &\sum_{j=0}^{K-1}\left(1-\frac{\mu_0\alpha}{2}\right)^{K-j-1}{((K-j)\alpha^2+\alpha)}\E\|v(x^j,y^j_{1:N})\|^2\notag\\
        &\le\Tilde{\cO}\left(K^{-1}\right),\label{apx-sm-sub1}\\
        &\sum_{n=1}^N\sum_{j=0}^{K-1}\left(1-\frac{\mu_0\alpha}{2}\right)^{K-j-1}((K-j)\alpha^2+\alpha)\E\|h_n(x^j,y^j_{1:n})\|^2\notag\\
        &\le\Tilde{\cO}\left(K^{-1}\right).\label{apx-sm-sub2}
    \end{align}
\end{subequations}
Next we derive the convergence of auxiliary sequences.
According to the definition of $C_{n,l,h}^k,D_{n,l,h}^k,C_{n,v}^k,D_{n,v}^k$, $\mu_0\alpha\le\mu_n\beta_n$ and $\alpha=\Theta(\beta_n)$, we have
\begin{align*}
    &\sum_{j=0}^{K-1}(C_{n,v}^{K-j-1}+D_{n,v}^{K})\E\|v(x^j,y_{1:N}^j)\|^2\le\notag\\
    &\cO(\sum_{j=0}^{K-1}\left(1-\frac{\mu_0\alpha}{2}\right)^{K-j-1}{((K-j)\alpha^2+\alpha)}\E\|v(x^j,y^j_{1:N})\|^2),\\
    &\sum_{l=1}^{N}\sum_{j=0}^{K-1}(C_{n,l,h}^{K-j-1}+D_{n,l,h}^{K})\E\|h_l(x^j,y_{1:l}^j)\|^2\le\notag\\
    &\cO(\sum_{n=1}^N\!\sum_{j=0}^{K-1}\!\left(1-\frac{\mu_0\alpha}{2}\right)^{K\!-\!j\!-\!1}\!((K\!-\!j)\alpha^2\!+\!\alpha)\E\|h_n(x^j,y^j_{1:n})\|^2).
\end{align*}
Thus, from \eqref{conv-sub-finally}, we have
\begin{align}
    &\E\|y^{K}_n-y^{K,\ast}_n\|^2\notag\\
    \le&4(1-\frac{\mu_n}{2}\beta_n)^K\E\|y_n^0-y^{0,\ast}_n\|^2+C_{n,\sigma}^{K}\alpha\sigma^2+\Tilde{\cO}\left(K^{-1}\right),\notag\\
    \le&\Tilde{\cO}\left(K^{-1}\right).
\end{align}

\section{Proof of Lemma \ref{lma-nsm} and Theorem \ref{thm-MSSA-nsm}}\label{apx-nsm}
With Assumption \ref{asp:main-nsm} and Lemma \ref{lma-aux-1}, we have
\begin{align}
    \E[\Phi(x^{k+1})]\!\le&\E[\Phi(x^k)]+\E\langle v(x^k,y^{\diamond}_{1:N}(x^k)),\,x^{k+1}-x^k\rangle\notag\\
    +&\frac{L_v}{2}\E\|x^{k+1}-x^k\|^2\notag\\
    \le&\E[\Phi(x^k)]-\alpha\E\langle v(x^k,y^{\diamond}_{1:N}(x^k)),\,v(x^k,y^k_{1:N})\rangle\notag\\
    +&\frac{L_v}{2}\alpha^2\E\|v(x^k,y^k_{1:N})\|^2\!+\!\frac{L_v}{2}\alpha^2\E\|\xi^k\|^2.
\end{align}
According to Lemma \ref{lma-aux-1}, we have
\begin{align}
    &-\langle v(x^k,y^{\diamond}_{1:N}(x^k)),\,v(x^k,y^k_{1:N})\rangle\notag\\
    \le&\frac{1}{2}\|v(x^k,y^k_{1:N})-v(x^k,y^{\diamond}_{1:N}(x^k))\|^2-\frac{1}{2}\|v(x^k,y^k_{1:N})\|^2\notag\\
    &-\frac{1}{2}\|v(x^k,y^{\diamond}_{1:N}(x^k))\|^2\notag\\
    \le&\sum_{n=1}^N\frac{\ell_{v,n}^2}{2}\|y_n^k\!-\!y^{\ast,k}_n\|-\frac{1}{2}\|v(x^k,y^k_{1:N})\|^2\notag\\
    &-\frac{1}{2}\|v(x^k,y^{\diamond}_{1:N}(x^k))\|^2.
\end{align}
With step size $\alpha\le\frac{1}{2L_v}$, Lemma \ref{lma-nsm} can be proved by combining the above two inequalities. Next, we prove Theorem \ref{thm-MSSA-nsm}.
Under Assumption \ref{asp-noise}, we have
\begin{align}
    &\E\|\xi^k\|^2\le\omega_1\E\|v(x^k,y^k_{1:N})\|^2\!+\!\omega_2\sum_{n=1}^N\E\|h_n(x^k,y_{1:n}^k)\|^2\!+\!\sigma^2\notag\\
    &\le\omega_1\E\|v(x^k,y^k_{1:N})\|^2\!+\!\omega_2\sum_{n=1}^N\ell_n^2\E\|y_n^k\!-\!y_n^{\ast,k}\|^2\!+\!\sigma^2.
\end{align}
We define $\ell_h:=\max_n\{\ell_n\}$ and
\begin{align}
    &B:={\ell_{v}^2}+\frac{L_v}{2}\omega_2\ell_h^2\alpha.
\end{align}
Summing up the above results, we have
\begin{align}
    \E[\Phi(x^{k+1})]\le&\E[\Phi(x^k)]\!+\!\alpha\sum_{n=1}^N(B-\frac{\ell_{v}^2}{2})\E\|y_n^k-y_n^{\ast,k}\|^2\notag\\
    &-\frac{\alpha}{2}(1-L_v(\omega_1+1)\alpha)\E\|v(x^k,y^k_{1:N})\|^2\notag\\
    &-\frac{\alpha}{2}\E\|v(x^k,y^{\diamond}_{1:N}(x^k))\|^2+\frac{L_v}{2}\alpha^2\sigma^2.\label{apx-nsm-eq-1}
\end{align}

Define a Lyapunov function as
\begin{align}
    \V^k:=&\E[\Phi(x^k)]+\sum_{j=0}^{k-1}(a_{0}^{k-j-1}+b_0^{k-1})\alpha\E\|v(x^j,y^j_{1:N})\|^2\notag\\
    &+\sum_{n=1}^N\sum_{j=0}^{k-1}(a_{n}^{k-j-1}+b_n^{k-1})\alpha\E\|h_n(x^j,y^j_{1:n})\|^2,
\end{align}
where we guarantee $\{b_n^k\}_k$ to be a decreasing sequence. For simplicity, we define that
\begin{align}
    B^k:=B+\max_{n\in[N]}\{a_n^0+b_n^{k}\}\ell_h^2.
\end{align}
With \eqref{apx-nsm-eq-1} and the definition of $\V^k$, we have
\begin{align*}
    &\V^{k+1}\le\V^k\!+\!\alpha\sum_{n=1}^N(B-\frac{\ell_{v}^2}{2})\E\|y_n^k-y_n^{\ast,k}\|^2+\frac{L_v}{2}\alpha^2\sigma^2\notag\\
    &-\frac{\alpha}{2}(1-L_v(\omega_1+1)\alpha-2a_0^0-2b_0^k)\E\|v(x^k,y^k_{1:N})\|^2\notag\\
    &\!-\!\frac{\alpha}{2}\E\|v(x^k,y^{\diamond}_{1:N}(x^k))\|^2\!+\!\sum_{n=1}^N(a_n^0+b_n^{k})\alpha\E\|h_n(x^k,y^k_{1:n})\|^2\notag\\
    &+\sum_{n=1}^N\sum_{j=0}^{k-1}(a_n^{k-j}+b_{n}^{k}-a_n^{k-j-1}-b_{n}^{k-1})\alpha\E\|h_n(x^j,y^j_{1:n})\|^2\notag\\
    &+\sum_{j=0}^{k-1}(a_0^{k-j}+b_{0}^{k}-a_0^{k-j-1}-b_{0}^{k-1})\alpha\E\|v(x^j,y^j_{1:N})\|^2.
\end{align*}
Next we use \eqref{apx-sm-eq4} and \eqref{apx-aux-lma-result} in Lemma \ref{apx-aux-lma-sub} to bound $\|h_n(x^k,y^k_{1:n})\|^2$ in the above inequality, and then obtain
\begin{align}
    &\V^{k+1}\le\V^k\!-\!\frac{\alpha\ell_{v}^2}{2}\sum_{n=1}^N\E\|y_n^k-y_n^{\ast,k}\|^2\!-\!\frac{\alpha}{2}\E\|v(x^k,y^{\diamond}_{1:N}(x^k))\|^2\notag\\
    &-\frac{\alpha}{2}(1-L_v(\omega_1+1)\alpha-2a_0^0-2b_0^k)\E\|v(x^k,y^k_{1:N})\|^2\notag\\
    &+\sum_{n=1}^N\sum_{j=0}^{k-1}(a_n^{k-j}+b_{n}^{k}-a_n^{k-j-1}-b_{n}^{k-1}\!+\!\sum_{l=1}^NB^k(C_{l,n,h}^{k-j-1}\notag\\
    &+D_{l,n,h}^{k}))\alpha\E\|h_n(x^j,y^j_{1:n})\|^2+\sum_{j=0}^{k-1}(a_0^{k-j}+b_{0}^{k}-a_0^{k-j-1}\notag\\
    &-b_{0}^{k-1}+\sum_{n=1}^NB^k(C_{n,v}^{k-j-1}+D_{n,v}^{k}))\alpha\E\|v(x^j,y^j_{1:N})\|^2\notag\\
    &+4\alpha\sum_{n=1}^NB(1-\frac{\mu_n\beta_n}{2})^k\E\|y_n^0-y_n^{0,\ast}\|^2\notag\\
    &+(\frac{L_v}{2}+\sum_{n=1}^NB^kC_{n,\sigma}^k)\alpha^2\sigma^2.\label{apx-thm-nsm-eq1}
\end{align}
Since we will ensure $\{b_n^k\}_k$ to be decreasing, we have $B^k\le B^0$, thus the convergence result can be obtained from \eqref{apx-thm-nsm-eq1} if the followings hold for any $k\ge1,n\in[N]$:
\begin{subequations}\label{apx-nsm-para}
    \begin{align}
        &b_n^k\le b_n^{k-1}-\sum_{l=1}^NB^0D_{l,n,h}^k,\label{apx-nsm-para1}\\
        &a_n^k\le a_n^{k-1}-\sum_{l=1}^NB^0C_{l,n,h}^{k-1},\label{apx-nsm-para2}\\
        &b_0^k\le b_0^{k-1}-\sum_{n=1}^NB^0D_{n,v}^k,\label{apx-nsm-para3}\\
        &a_0^k\le a_0^{k-1}-\sum_{n=1}^NB^0C_{n,v}^{k-1},\label{apx-nsm-para4}\\
        &a_0^0+b_0^0\le\frac{1}{4}.\label{apx-nsm-para5}
    \end{align}
\end{subequations}
\subsection{Parameter $a_n^k$ and $b_n^k$ with $n\in[N]$}\label{apx-nsm-subsec-ank-bnk}
First we select $a_n^k,b_n^k$ to make sure \eqref{apx-nsm-para1} and \eqref{apx-nsm-para2} hold.
We let
\begin{align}\label{apx-nsm-bnk}
    b_n^k:=B^0\sum_{i=k+1}^\infty\sum_{l=1}^ND_{l,n,h}^i,\ \forall k>0,
\end{align}
which ensures \eqref{apx-nsm-para1} holds for $k>1$ and implies $b_n^k=B^0\hat b_n^k$ for $k>0$, where for $k\ge0$, $b_n^k$ is defined as
\begin{align}
    \hat b_n^k:=\sum_{i=k+1}^\infty\left({\sum_{l=1}^N\Tilde c_{l,3}^i\omega_1\alpha^2+\sum_{l=n+1}^{N}\Tilde c_{l,3}^i\beta_n^2(i+\omega_0)}\right).
\end{align}
However, since $B^0$ in \eqref{apx-nsm-bnk} is related to $b_n^0$, we can not simply define $b_n^0$ as \eqref{apx-nsm-bnk}. Next, we choose step sizes and $b_n^0$ s.t. $b_n^0\ge B^0\hat b_n^0$, which together with \eqref{apx-nsm-bnk} provide a well-defined $b_n^k$ and ensure \eqref{apx-nsm-para1}. With the definition of $D_{l,n,h}^i$ in \eqref{apx-aux-lma-para}, we have
\begin{align}
    \hat b_n^0\le&\sum_{i=0}^\infty\left({\sum_{l=1}^N\Tilde c_{l,3}^i\omega_1\alpha^2+\sum_{l=n+1}^{N}\Tilde c_{l,3}^i\beta_n^2(i+\omega_0)}\right)\notag\\
    =&\sum_{l=1}^N\frac{16\omega_1L_{y,l}^2\alpha^2}{\mu_l\beta_l}+\sum_{l=n+1}^{N}\frac{32L_{y,l}^2\beta_n^2}{\mu_l^2\beta_l^2}\notag\\
    &+\sum_{l=n+1}^{N}\frac{16(\omega_0-1)L_{y,l}^2\beta_n^2}{\mu_l\beta_l}.
\end{align}
By selecting step sizes such that for any $n$
\begin{subequations}\label{apx-nsm-con-bn0}
    \begin{align}
        &\alpha\le\min_{n}\{\frac{\mu_n\beta_n}{L_{y,n}},\frac{1}{128\omega_1\ell_n^2\sum_{l=1}^NL_{y,l}}\},\\
        &\beta_n\le\min_{N\ge l>n}\{\frac{\mu_l\beta_l}{16\sqrt{N-n}L_{y,l}\ell_n},\ \frac{2}{\mu_n(\omega_0-1)^+}\},
    \end{align}
\end{subequations}
then we have
\begin{align}
    \hat b_n^0\le{1}/{(2\ell_n^2)},\label{apx-nsm-bn0-ub}
\end{align}
and let
\begin{align}
    b_n^0:=2\left(B+\max_{n\in[N]}\{a_n^0\}\ell_n^2\right)\max_{n\in[N]}\{\hat b_n^0\},\label{apx-nsm-bn0}
\end{align}
then we have
\begin{align}
    B^0\hat b_n^0=&(B+\max_{n\in[N]}\{a_n^0+b_n^0\}\ell_h^2)\hat b_n^0\notag\\
    \overset{(a)}\le&(B+\max_{n\in[N]}\{a_n^0\}\ell_h^2)\hat b_n^0+\max_{n\in[N]}\{b_n^0\}/2\notag\\
    \overset{(b)}=&b_n^0,
\end{align}
where (a) is from \eqref{apx-nsm-bn0-ub} and $\max_{n\in[N]}\{a_n^0+b_n^0\}\le\max_{n\in[N]}\{a_n^0\}+\max_{n\in[N]}\{b_n^0\}$, (b) is from \eqref{apx-nsm-bn0} and $\max_{l\in[N]}\{b_l^0\}=b_n^0$ for any $n\in[N]$. Therefore, by selecting step sizes and $b_n^k$ as we showed above, we have \eqref{apx-nsm-para1} holds for any $n\in[N]$ and $k\ge1$.

Similarly, we let
\begin{align}\label{apx-nsm-ank}
    a_n^k:=B^0\sum_{i=k}^\infty\sum_{l=1}^NC_{l,n,h}^i,\ \forall k>0,
\end{align}
which ensures \eqref{apx-nsm-para2} holds for $k>1$ and implies $a_n^k=B^0\hat a_n^k$, where
\begin{align*}
    \hat a_n^k\!:=\!\sum_{i=k}^\infty\!({\sum_{l=1}^N\Tilde c_{l,1}^i\alpha^2\!+\!\sum_{l=n+1}^{N}\Tilde c_{l,0}^i\beta_n^2\!+\!2(1\!-\!\mu_n\beta_n)^{i}\beta_n^2\omega_0}).
\end{align*}
Next, we choose step sizes and $a_n^0$ s.t. $a_n^0\ge B^0\hat a_n^0$ which together with \eqref{apx-nsm-ank} provide a well-defined $a_n^k$ and ensure \eqref{apx-nsm-para2} holds. First, we have
\begin{align}
    \hat a_n^0=&\!\sum_{i=0}^\infty\!({\sum_{l=1}^N\Tilde c_{l,1}^i\alpha^2\!+\!\sum_{l=n+1}^{N}\Tilde c_{l,0}^i\beta_n^2\!+\!2(1\!-\!\mu_n\beta_n)^{i}\beta_n^2\omega_0})\notag\\
    =&\sum_{l=1}^N\frac{16\ell_l^2\alpha^2(\beta_l+\frac{2}{\mu_l})}{\mu_l^2\beta_l}+\frac{2\omega_0\beta_n}{\mu_n}\notag\\
    &+\!\sum_{l=n+1}^N\!\frac{16\ell_l^2\beta_n^2(\beta_l+\frac{2}{\mu_l})}{\mu_l^2\beta_l}\left(\omega_0\!+\!\frac{4}{\mu_l\beta_l}\right).
\end{align}
By selecting step sizes such that for any $n$,
\begin{subequations}\label{apx-nsm-con-an0}
    \begin{align}
        &\alpha\le\min_{n}\left\{\frac{\mu_n\beta_n}{\ell_h},\frac{1}{256\ell_h^2\sum_{l=1}^N(1/\ell_l+{2\ell_l}/{\mu_l^2})}\right\},\\
        &\beta_n\le\min_{N\ge l>n}\left\{\frac{\mu_n}{32\omega_0\ell_h^2},\frac{\mu_l^2\beta_l}{32\sqrt{(N-n)(\mu_l^2+2\ell_l^2)}\ell_h},\frac{4}{\omega_0\mu_l}\right\},
    \end{align}
\end{subequations}
then under \eqref{apx-nsm-con-bn0} and \eqref{apx-nsm-con-an0}, we have
\begin{align}
    \hat a_n^0\le{1}/{(4\ell_h^2)},\label{apx-nsm-an0-ub}
\end{align}
and let
\begin{align}
    a_n^0:=4B\hat a_n^0,\label{apx-nsm-an0}
\end{align}
then we have
\begin{align}
    B^0\hat a_n^0=&\ (B+\max_{n\in[N]}\{a_n^0+b_n^0\}\ell_h^2)\hat a_n^0\notag\\
    \le&\ 2(B+\max_{n\in[N]}\{a_n^0\}\ell_h^2)\hat a_n^0\notag\\
    \le&\ 2B\hat a_n^0+\frac{1}{2}\max_{n\in[N]}\{a_n^0\}=\,a_n^0.
\end{align}
Thus we have \eqref{apx-nsm-para2} holds for any $n\in[N]$ and $k\ge1$.
According to \eqref{apx-nsm-bn0-ub}, \eqref{apx-nsm-bn0}, \eqref{apx-nsm-an0-ub} and \eqref{apx-nsm-an0}, we have
\begin{align}\label{apx-nsm-an0-bn0}
    a_n^0+b_n^0\le\frac{3}{\ell_h^2}B\sim\cO(1).
\end{align}

\subsection{Parameter $a_0^k$ and $b_0^k$}
Next we select $a_0^k,b_0^k$ to satisfy \eqref{apx-nsm-para3}, \eqref{apx-nsm-para4} and \eqref{apx-nsm-para5}. From \eqref{apx-nsm-an0-bn0}, we have
\begin{align}
    &4B\ge B^0\ge B^k,\ \forall n\in[N],\forall k\ge0.
\end{align}
We let
\begin{align}\label{apx-nsm-b0k}
    b_0^k:=\sum_{i=k+1}^\infty\sum_{n=1}^N4 BD_{n,v}^i,\ \forall k\ge0,
\end{align}
which ensures \eqref{apx-nsm-para3} holds for any $k\ge1$ and implies $b_0^k=\sum_{n=1}^N4B\Tilde b_n^k$, where
\begin{align}
    \Tilde b_n^k=\sum_{i=k+1}^\infty{\Tilde c_{n,3}^i\alpha^2(i+\omega_1)}.
\end{align}
Next, we choose step sizes s.t. $b_0^0\le\frac{1}{8}$. First, we have
\begin{align}
    \Tilde b_n^0\le&\sum_{i=0}^\infty{\Tilde c_{n,3}^i\alpha^2(i+\omega_1)}\!=\!\frac{16L_{y,n}^2\alpha^2}{\mu_n\beta_n}\left(\frac{2}{\mu_n\beta_n}\!-\!1\!+\!\omega_1\right).
\end{align}
By selecting step sizes such that for any $n$
\begin{align}\label{apx-nsm-con-b00}
    \alpha\le\min_{n}\left\{\frac{\mu_n\beta_n}{32\sqrt{2NB}L_{y,n}},\frac{1}{16\sqrt{2BN}L_{y,n}(\omega_1-1)^+}\right\},
\end{align}
then we have $\Tilde b_n^0\le\frac{1}{32NB}$ which implies
\begin{align}\label{apx-nsm-b00-ub}
    b_0^0=\sum_{n=1}^N4B\Tilde b_n^0\le\sum_{n=1}^N4B\frac{1}{32NB}=\frac{1}{8}.
\end{align}

Similarly, we let
\begin{align}\label{apx-nsm-a0k}
    a_0^k:=\sum_{i=k}^\infty\sum_{n=1}^N4 BC_{n,v}^i,\ \forall k\ge0,
\end{align}
which ensures \eqref{apx-nsm-para3} holds and $a_0^k=\sum_{n=1}^N4B\Tilde a_n^k$, where
\begin{align}
    \Tilde a_n^k=\sum_{i=k}^\infty\Tilde c_{0}^i\alpha^2.
\end{align}
Next, we choose step sizes s.t. $a_0^0\le\frac{1}{8}$. First, we have
\begin{align}
    \Tilde a_n^0=&\sum_{i=0}^\infty{\Tilde c_{0}^i\alpha^2}\!=\!\frac{16\ell_n^2\alpha^2(\beta_n+\frac{2}{\mu_n})}{\mu_n^2\beta_n}\left(\omega_1\!+\!\frac{4}{\mu_n\beta_n}\right).
\end{align}
By selecting step sizes such that for any $n$
\begin{align}\label{apx-nsm-con-a00}
    \alpha\le\min_{n}\left\{\frac{\mu_n^2\beta_n}{64\sqrt{NB(\mu_n^2+2\ell_h^2)}},\frac{\mu_n}{16\sqrt{NB(\mu_n^2+2\ell_h^2)}\omega_1}\right\},
\end{align}
then with $\beta_n\le\frac{\mu_n}{\ell_n^2}$, we have
\begin{align}\label{apx-nsm-a00-ub}
    a_0^0=\sum_{n=1}^N4B\Tilde a_n^0\le\sum_{n=1}^N4B\frac{1}{32B}=\frac{1}{8}.
\end{align}
According to \eqref{apx-nsm-b00-ub} and \eqref{apx-nsm-a00-ub}, \eqref{apx-nsm-para5} holds.

Selecting parameters $a_n^k,b_n^k,a_0^k,b_0^k$ as we discuss above, we ensure \eqref{apx-nsm-para} hold. Then from \eqref{apx-thm-nsm-eq1} and \eqref{apx-nsm-para}, we have
\begin{align}
    &\V^{k+1}\le\V^k-\frac{\alpha}{2}(\frac{1}{2}-L_v(\omega_1+1)\alpha)\E\|v(x^k,y^k_{1:N})\|^2\notag\\
    &-\!\frac{\alpha}{2}\sum_{n=1}^N\ell_{v,n}\E\|y_n^k-y_n^{\ast,k}\|^2\!-\!\frac{\alpha}{2}\E\|v(x^k,y^{\diamond}_{1:N}(x^k))\|^2\notag\\
    &+4\alpha\sum_{n=1}^NB(1-\frac{\mu_n\beta_n}{2})^k\E\|y_n^0-y_n^{0,\ast}\|^2\notag\\
    &+(\frac{L_v}{2}+\sum_{n=1}^NB^kC_{n,\sigma}^k)\alpha^2\sigma^2.
\end{align}
Selecting $\alpha\le\frac{1}{2L_v(\omega_1+1)}$, we have
\begin{align}
    &\frac{1}{K}\sum_{k=0}^{K-1}\left(\E\|v(x^k,y^{\diamond}_{1:N}(x^k))\|^2+\sum_{n=1}^N\ell_{v}\E\|y_n^k-y_n^{\ast,k}\|^2\right)\notag\\
    &\le\frac{2\V^0-2\V^{K}}{\alpha K}+\frac{8}{K}\sum_{k=0}^{K-1}\sum_{n=1}^NB(1-\frac{\mu_n\beta_n}{2})^k\E\|y_n^0-y_n^{0,\ast}\|^2\notag\\
    &+\frac{2}{K}\sum_{k=0}^{K-1}(\frac{L_v}{2}+\sum_{n=1}^NB^kC_{n,\sigma}^k)\alpha\sigma^2\notag\\
    &\le\frac{2\V^0-2\V^{K}}{\alpha K}+\frac{16}{K}\sum_{n=1}^N\frac{B}{\mu_n\beta_n}\E\|y_n^0-y_n^{0,\ast}\|^2\notag\\
    &+\frac{2}{K}\sum_{k=0}^{K-1}(\frac{L_v}{2}+\sum_{n=1}^NB^kC_{n,\sigma}^k)\alpha\sigma^2.\label{apx-nsm-res1}
\end{align}
According to \eqref{apx-Cnsigma}, we have
\begin{align}
    \sum_{k=0}^{K-1}C_{n,\sigma}^k\le\cO(\frac{K\beta_n}{\alpha}+\frac{1}{\mu_n^2\alpha}).
\end{align}
We can select step sizes satisfying the constraints $\alpha\le\frac{1}{2L_v(\omega_1+1)}$, $\beta_n\le\frac{\mu_n}{\ell_n^2}$, \eqref{apx-nsm-con-bn0}, \eqref{apx-nsm-con-an0}, \eqref{apx-nsm-con-b00}, \eqref{apx-nsm-con-a00} and $\alpha=\Theta(\frac{1}{\sqrt{K}}),\beta_n=\Theta(\frac{1}{\sqrt{K}})$, thus from \eqref{apx-nsm-res1}, we have
\begin{align*}
    &\frac{1}{K}\sum_{k=0}^{K-1}\left(\E\|v(x^k,y^{\diamond}_{1:N}(x^k))\|^2+\sum_{n=1}^N\ell_{v,n}\E\|y_n^k-y_n^{\ast,k}\|^2\right)\notag\\
    &\le\cO(\frac{1}{\alpha K}+\alpha+\sum_{n=1}^N\left(\frac{1}{K\beta_n}+{\beta_n}\right)+\frac{1}{K})\le\cO(\frac{1}{\sqrt{K}}),
\end{align*}
which completes the proof.
\vspace{-0.5em}
\section{Proof of Lemma \ref{lma-asp-SOBA}}
% \vspace{-0.5em}
i) Condition (b) $\Rightarrow$ Assumption \ref{asp-sub-sm}: Since $h_1(x,y_1)=\nabla_{y_1}g(x,y_1)$, $\mu_g$-strong monotonicity of $h_1(x,y_1)$ in $y_1$ follows from condition (b). Under (b), we have $\nabla_{y_1y_1}^2g\!\succeq\!\mu_g\bI$, and then $h_2(x,\!y_1,\!y_2)$ is $\mu_g$-strongly monotone in $y_2$.

ii) Conditions (a)--(b) $\Rightarrow$ Assumptions \ref{asp-lc-sub}--\ref{asp-lc-main}: $h_1(x,y_1)=\nabla_{y_1}g(x,y_1)$ is $\ell_g$-Lipschitz continuous, which satisfies condition (a) in Assumption \ref{asp-lc-sub}. Since $f(x,y_1)$ and $\nabla_{y_1}g(x,y_1)$ are Lipschitz continuous, $\nabla f(x,y_1)$, $\nabla_{xy_1}g(x,y_1)$ and $\nabla_{y_1y_1}g(x,y_1)$ are bounded, which together with Lipschitz continuity of $\nabla f(x,y_1)$ and $\nabla_{y_1y_1}g(x,y_1)$, implies that $h_2(x,y_1,y_2)$ satisfies condition (b) in Assumption \ref{asp-lc-sub}.
Under condition (b), for any $x$, $y^\ast_1(x)$, $y^\ast_2(x,y)$ both exist and are unique. For any $x$, $\|y^\diamond_{2}(x)\|=\|y^\ast_2(x,y^\ast(x))\|=\|[\nabla_{y_1y_1}^2g(x^2,y_1^2)]^{-1}\nabla_{y_1}f(x^2,y_1^2)\|\le\frac{\ell_f}{\mu_g}$. Then for any $\hat x$, $x$ and $y_{1}, y_2$, it holds that
\begin{align*}
    &\|v(\hat x,y^\diamond_{1}(\hat x),y^\diamond_{2}(\hat x))-v(x,y_{1},y_2)\|\\
    \overset{(a)}\le&\|\nabla_xf(\hat x,y^\diamond_{1}(\hat x))-\nabla_xf(x,y_{1})\|\\
    &+\|\nabla^2_{xy_1}g(\hat x,y^\diamond_{1}(\hat x))y^\diamond_{2}(\hat x)-\nabla^2_{xy_1}g(x,y_{1})y_2\|\\
    \overset{(b)}\le&\ell_f(\|\hat x-x\|+\|y^\diamond_{1}(\hat x)-y_1\|)+\|\nabla^2_{xy_1}g(x,y_{1})(y^\diamond_{2}(\hat x)-y_2)\|\\
    &+\|(\nabla^2_{xy_1}g(\hat x,y^\diamond_{1}(\hat x))-\nabla^2_{xy_1}g(x,y_{1}))y^\diamond_{2}(\hat x)\|\\
    \overset{(c)}\le&(\ell_f+\frac{\ell_f}{\mu_g})(\|\hat x-x\|+\|y^\diamond_{1}(\hat x)-y_1\|)+\ell_g\|y^\diamond_{2}(\hat x)-y_2\|,
\end{align*}
where $(a)$ is from the definition of $v$ and triangle inequality, $(b)$ is from the Lipschitz continuity of $\nabla f$ and triangle inequality, and $(c)$ is from the boundedness of $\nabla^2g,y_2^\diamond$ and Lipschitz continuity of $\nabla^2g$. This inequality implies Assumption \ref{asp-lc-main}.

iii) Conditions (a)--(c) $\Rightarrow$ Assumption \ref{asp-noise}: Assumption \ref{asp-noise} on $\psi_1^k$ holds immediately. Under conditions (a)--(c), according to \eqref{eq-soba-noise}, we have
\begin{align*}
    &\E[\|\psi_2^k\|^2|\cF^k]\le\hat\sigma^2(1+\|y_2^k\|^2)\notag\\
    \le&\hat\sigma^2(1+2\|y_2^k-y_2^\ast(x^k,y_1^k)\|^2+2\|y_2^\ast(x^k,y_1^k)\|^2)\notag\\
    \overset{(a_1)}\le&\hat\sigma^2+\frac{2\hat\sigma^2}{\mu_g^2}\|h_2(x^k,y_1^k,y_2^k)\|^2+\frac{2\hat\sigma^2\ell_f^2}{\mu_g^2},
\end{align*}
where $(a_1)$ is from the strongly monotonicity of $h_2$ and the boundedness of $y_2^\ast$ as shown before.
A similar result holds for $\xi^k$. Then Assumption \ref{asp-noise} holds.

iv) Conditions (b), (d) $\Rightarrow$ Assumption \ref{asp:main-sm}, and (b), (e)
    $\Rightarrow$ Assumption \ref{asp:main-nsm}:
Since $y^\diamond_1(x)=y^\ast_1(x)$ and $y^\diamond_2(x)=y_2^\ast(x,y^\diamond_1(x))=y_2^\ast(x,y^\ast_1(x))$, according to \eqref{eq-hyper-gradient}, we have $v(x,y_1^\diamond(x),y^\diamond_2(x))=\nabla F(x)$. Then the statement iv) follows immediately.

\section{Proof of Lemma \ref{lma-asp-CM}}
For any $n$, $h_n$ is $1$-strongly monotone in $y_n$ and $v$ is Lipschitz continuous, which implies Assumptions \ref{asp-sub-sm} and \ref{asp-lc-main}.

i) Condition (a) $\Rightarrow$ Assumption \ref{asp-lc-sub}: Lipschitz continuity of $h_n$ follows immediately from Lipschitz continuity of $\nabla f_n$, thus Condition (a) in Assumption \ref{asp-lc-sub} holds.

ii) Conditions (b)--(d) $\Rightarrow$ Assumption \ref{asp-noise}: Assumption \ref{asp-noise} on $\psi_n^k$ holds immediately under condition (b). In addition, we have
\begin{align*}
    &\E[\|\xi^k\|^2|\cF^k]\le\frac{1}{N}\sum_{n=1}^N\E[\|\cC_n(y_n^k)-y_n^k\|^2|\cF^k]\\
    \le&\frac{\omega}{N}\sum_{n=1}^N\|y_n^k\|^2\le\frac{2\omega}{N}\sum_{n=1}^N\|y_n^k-y_n^{k,\ast}\|^2+\frac{2\omega}{N}\sum_{n=1}^N\|\nabla f_n(x^k)\|^2\\
    \overset{(a_1)}\le&\frac{2\omega}{N}\sum_{n=1}^N\|h_n(x^k,y_{1:n}^k)\|^2+{2a\omega}\|\nabla F(x^k)\|^2+2\omega b^2\\
    \overset{(a_2)}\le&\frac{2\omega(1+2a)}{N}\sum_{n=1}^N\|h_n(x^k,y_{1:n}^k)\|^2+4a\omega\|v(x^k,y_{1:N}^k)\|^2+2\omega b^2,
\end{align*}
where $(a_1)$ is from the definition of $h_n$ and Conditions $(c)$, and $(a_2)$ is from $v(x,y_{1:n})+\frac{1}{N}\sum_{n=1}^Nh_n(x,y_{1:n})=\frac{1}{N}\sum_{n=1}^Ny_n-\frac{1}{N}\sum_{n=1}^N(y_n-\nabla f_n(x))=\nabla F(x)$. Therefore, Assumption \ref{asp-noise} on $\xi^k$ holds.

iii) $(e)$ $\Rightarrow$ Assumption \ref{asp:main-sm} and $(f)$ $\Rightarrow$ Assumption \ref{asp:main-nsm}: Since $v(x^k,y^\diamond_{1:N}(x))=\frac{1}{N}\sum_{n=1}^Ny^\diamond_n(x^k)=\frac{1}{N}\sum_{n=1}^N\nabla f_n(x^k)=F(x^k)$, the statement iii) holds immediately.
\bibliographystyle{IEEEtran}
\bibliography{refs}

\vfill

\end{document}